\newcommand{\cmark}{\ding{51}}%
\newcommand{\xmark}{\ding{55}}%
\def\hyphenateAndTtWholeString #1{\xHyphenate#1$\wholeString\unskip}
\def\xHyphenate#1#2\wholeString {\if#1$%
    \else\transform{#1}%
    \takeTheRest#2\ofTheString\fi}
\def\takeTheRest#1\ofTheString\fi
\def\transform#1{\url{#1}\hskip 0pt plus 1pt}
\def\urlx #1{\href{#1}{\hyphenateAndTtWholeString{#1}}}
\newlength\thetawidth
\newlength\thetaheight
\begin{document}

\title{A Bayesian Non-parametric Approach to Generative Models: Integrating Variational Autoencoders and Generative Adversarial Networks using the Wasserstein Distance and Maximum Mean Discrepancy}

\author{\name Forough Fazeli-Asl \email foroughf@hku.hk \\
	\addr Department of Statistics and Actuarial Science\\
	University of Hong Kong\\
	Pok Fu Lam, Hong Kong
	\AND
	\name Michael Minyi Zhang \email mzhang18@hku.hk \\
	\addr Department of Statistics and Actuarial Science\\
	University of Hong Kong\\
	Pok Fu Lam, Hong Kong
    }

\editor{Rajarshi Guhaniyogi}
\maketitle






\begin{abstract}
We propose a novel generative model 
within the \textcolor{black}{Bayesian non-parametric learning (BNPL)} framework to address some notable failure modes in generative adversarial networks (GANs) and variational autoencoders (VAEs)--these being overfitting in the GAN case and noisy samples in the VAE case. We will demonstrate that the BNPL framework enhances training stability and provides robustness and accuracy guarantees when incorporating the Wasserstein distance and maximum mean discrepancy measure (WMMD) into our model's loss function. 
Moreover, we introduce a so-called ``triple model'' that combines the GAN, the VAE, 
and further incorporates a code-GAN (CGAN) to explore the latent space of the VAE. 
\textcolor{black}{This triple model design generates high-quality, diverse samples, while the BNPL framework, leveraging the WMMD loss function, enhances training stability. Together, these components enable our model to achieve superior performance across various generative tasks.} 
These claims are supported by both theoretical analyses and empirical validation on a wide variety of datasets. 

\end{abstract}

\begin{keywords}
Bayesian non-parametrics, generative models, variational autoencoders.
\end{keywords}

\section{\textcolor{black}{Introduction}}
Generative modeling is a method to synthetically create new realistic data samples from an existing dataset. The original GAN, also known as Vanilla GAN, was introduced by \cite{Goodfellow}, and since then, various types of adversarially trained generative models have been developed. These models have exhibited outstanding performance in creating sharp and realistic images by training two neural networks concurrently, one for generating images and the other for discriminating between authentic and counterfeit images. 
These networks are trained in an adversarial manner until the discriminator cannot distinguish between real and fake samples. 
   

Despite GANs being a powerful class of deep-learning models, they still face some notable challenges including mode collapse and training instability. The mode collapse issue in GANs occurs when the generator starts to memorize the training data instead of learning the underlying patterns of the data. This results in the generator becoming too specialized in generating the same samples repeatedly, which leads to a lack of diversity in the generated samples. Unlike GANs, VAEs use a probabilistic approach to encode and decode data, enabling them to learn the underlying distribution and generate diverse samples, albeit with some blurriness. Consequently, the integration of the GAN and the VAE has garnered attention as an intriguing idea for generating high-quality, realistic data samples. This approach allows for the full exploitation of the strengths of both generative models while mitigating their shortcomings. 

Moreover, we can look at deep generative models from the perspective of frequentist and Bayesian methods in both its parametric and non-parametric forms. Parametric generative models assume a specific form for the data distribution, which can lead to overfitting \citep{ma2024multivariate,mescheder2017adversarial}. This occurs when the generator fits the training data too closely, limiting its ability to generalize to new data. 
The frequentist non-parametric (FNP) approach computes the loss function of a generative model with respect to the empirical data distribution, which assumes that the observed data are representative samples of the true population. However, especially with small sample sizes or complex data-generating mechanisms, the FNP approach can also contribute to overfitting, resulting in poor generalization and unstable out-of-sample performance, with the GAN becoming overly sensitive to minor data perturbations \citep{bariletto2024bayesian}. 

In contrast, BNP methods address overfitting in GANs by enabling the generator to adapt to the complexity of the data without adhering to a parametric distribution \textit{a priori}. By avoiding overfitting, we can obtain more robust results in BNPL compared to its frequentist counterparts. 
However, developing a BNP procedure in training deep generative models remains a significant challenge because obtaining a posterior distribution for these models is intractable. 
 

\textcolor{black}{To address the aforementioned problems, we introduce a novel triple generative model that integrates a GAN, a VAE, and a code-GAN (CGAN) within the BNPL framework.} The GAN functions as the primary component, while the VAE and CGAN enhance the model's capabilities. Specifically, to generate more diverse samples, \textcolor{black}{the GAN's generator is replaced with the decoder of a VAE model. Additionally, an extra GAN is employed in the code space to generate supplementary sample codes. The CGAN complements the VAE by exploring underrepresented regions of the code space, promoting comprehensive coverage and mitigating mode collapse.}

To fit this model within the framework of BNPL, we first develop a stochastic representation of the Wasserstein distance using the a Dirichlet process (DP). This allows us to tractably estimate the Wasserstein distance between the DP posterior and the generator's distribution, which is then incorporated into a DP-estimated MMD loss function. \textcolor{black}{By minimizing both the Wasserstein distance and MMD as terms in the loss function of the triple model, 
our proposed BNPL model 
yields robustness during training. 
\textit{In summary, our triple model enhances the quality, diversity, and visualization of the generated outputs, while the BNPL framework improves the training stability by enhancing out-of-sample performance, addresses model misspecification, and provides robustness guarantees against outliers, making it an excellent choice for high-quality image generation.}
}


The structure of the paper is organized as follows:  
In Section~\ref{sec:background}, we provide an overview of traditional learning approaches for generative models, including VAEs and GANs, compared to the BNPL framework. Next, in Section~\ref{sec:wasserstein}, we introduce a probabilistic method to calculate the Wasserstein distance using a DP prior. Then, in Section~\ref{sec:model}, \textcolor{black}{we introduce our triple generative model, a VAE-GAN calibrated with a CGAN, within the BNPL framework.} Afterwards, in Section~\ref{sec:experiments}, we provide experimental results of our novel generative model. Lastly, we conclude our paper in Section~\ref{sec:conclusion} and provide some new directions based on the research presented in this paper. \textcolor{black}{In the appendix, we provide key definitions to the WMMD metric, along with additional theoretical results on the accuracy and robustness grantees of the estimated generator parameters under our BNPL approach.}

\section{Background Work}\label{sec:background}

We begin this section with a review of classic learning approaches to deep generative models from the frequentist perspective. Then, this section continues with an introduction to Bayesian non-parametric learning methods.

\subsection{\textcolor{black}{Deep Generative Modeling}}
\subsubsection{Vanilla GAN}
In the vanilla GAN, the generator tries to minimize the probability that the discriminator correctly identifies the fake sample, while the discriminator tries to maximize the probability of correctly identifying real and fake samples \citep{Goodfellow}. The GAN can be mathematically represented as a minimax game between the generator $\lbrace Gen_{\boldsymbol{\omega}}\rbrace_{\boldsymbol{\omega}\in\boldsymbol{\Omega}}$, \textcolor{black}{a family of neural network functions parameterized by $\boldsymbol{\omega}$ within a compact domain $\boldsymbol{\Omega}\subset\mathbb{R}^{t_1}$}, which maps from the latent space $\mathbb{R}^p$ to the data space $\mathbb{R}^d$, $p<d$, and a discriminator $\lbrace Dis_{\boldsymbol{\theta}}\rbrace_{\boldsymbol{\theta}\in\boldsymbol{\Theta}}$, \textcolor{black}{a family of neural network functions parameterized by $\boldsymbol{\theta}$ within a compact domain $\boldsymbol{\Theta}\subset\mathbb{R}^{t_2}$}, which maps from the data space to the interval $[0,1]$. Specifically, discriminator outputs represent how likely the generated sample is drawn from the true data distribution. 
For a real sample $\mathbf{X}$ from a data distribution $F$, this can be expressed as the objective function
\begin{align*}
    \arg\min\limits_{\boldsymbol{\omega}\in\boldsymbol{\Omega}}\max\limits_{\boldsymbol{\theta}\in\boldsymbol{\Theta}} \mathcal{L}(Gen_{\boldsymbol{\omega}},Dis_{\boldsymbol{\theta}}),
\end{align*}
where $\mathcal{L}(Gen_{\boldsymbol{\omega}},Dis_{\boldsymbol{\theta}})=E_{F}[\ln(Dis_{\boldsymbol{\theta}}(\mathbf{X}))]+E_{F_{\mathbf{Z}_r}}[\ln(1-Dis_{\boldsymbol{\theta}}(Gen_{\boldsymbol{\omega}}(\mathbf{Z}_r)))],$
 $ F_{\mathbf{Z}_r} $ is the distribution of the random noise vector $ \mathbf{Z}_r $, and $ \ln(\cdot) $ denotes the natural logarithm. \textcolor{black}{Throughout the paper, it is assumed that $ F_{\mathbf{Z}_r} $ follows a standard Gaussian distribution}.

GANs are further extended by modifying the generator loss function, $\mathcal{L}_{\text{Gen}}(\boldsymbol{\omega})$, and discriminator loss function, $\mathcal{L}_{\text{Dis}}(\boldsymbol{\theta})$.
To facilitate fair comparisons among different GAN models, we reformulate the vanilla GAN objective function as a sum of 
$\mathcal{L}_{\text{Gen}}(\boldsymbol{\omega})$ and $\mathcal{L}_{\text{Dis}}(\boldsymbol{\theta})$ by defining: 
\begin{align}
    \mathcal{L}_{\text{Gen}}(\boldsymbol{\omega})&= E_{F_{\mathbf{Z}_r}}[\ln(1-Dis_{\boldsymbol{\theta}}(Gen_{\boldsymbol{\omega}}(\mathbf{Z}_r)))],\label{V_Gen}\\
    \mathcal{L}_{\text{Dis}}(\boldsymbol{\theta})&=-E_{F}[\ln(Dis_{\boldsymbol{\theta}}(\mathbf{X}))]-E_{F_{\mathbf{Z}_r}}[\ln(1-Dis_{\boldsymbol{\theta}}(Gen_{\boldsymbol{\omega}}(\mathbf{Z}_r)))]\label{V_dis}.
\end{align}
Now, the vanilla GAN is trained by iteratively updating $\boldsymbol{\omega}$ and $\boldsymbol{\theta}$ 
using stochastic gradient descent to minimize the loss functions \eqref{V_Gen} and \eqref{V_dis}, respectively.
The training procedure for the GAN is finished when the generator can produce samples that are indistinguishable from the real samples, and the discriminator cannot differentiate between them.

\subsubsection{Mode Collapse in GANs}
Instability in GANs occurs when the generator and the discriminator are incapable of converging to a stable equilibrium \citep{kodali2017convergence}. A significant factor contributing to these issues is the tendency of the gradients used to update network parameters to become exceedingly small during the training process, causing the vanishing gradient that leads to a slowdown or even prevention of learning \citep{Arjovsky}. 

To mitigate mode collapse and enhance the stability property in GANs, \cite{salimans2016improved} proposed using a batch normalization technique to normalize the output of each generator layer, which can help reduce the impact of vanishing gradients. 
The authors also implemented mini-batch discrimination, an additional technique to diversify the generated output. This involves computing a pairwise distance matrix among examples within a mini-batch. This matrix is then used to augment the input data before being fed into the model.

Another strategy, widely suggested in the literature to overcome GAN limitations, is incorporating statistical distances into the GAN loss function.
\cite{arjovsky2017wasserstein} suggested updating GAN parameters by minimizing the Wasserstein distance between the distribution of the real and fake data (WGAN). They noted that this distance possesses a superior property compared to other metrics and measures like Kulback-Leibler, Jenson Shanon, and total variation measures. This is due to its ability to serve as a sensible loss function for learning distributions supported by low-dimensional manifolds. 
\cite{arjovsky2017wasserstein} used the weight clipping technique to constrain the discriminator to the $1$-Lipschitz constant. This condition ensures discriminator's weights are bounded to prevent the discriminator from becoming too powerful and overwhelming the generator.  Additionally, this technique helped to ensure that the gradients of the discriminator remained bounded, which is crucial for the stability of the overall training process.

However, weight clipping has some drawbacks. For instance, \cite{gulrajani2017improved} noted that it may limit the capacity of the discriminator, which can prevent it from learning complex functions. Moreover, it can result in a ``dead zone" where some of the discriminator's outputs are not used, which can lead to inefficiencies in training. To address these issues, \cite{gulrajani2017improved} proposed to force the 1-Lipschitz constraint on the discriminator in an alternative way. They improved  WGAN using a gradient penalty term in the loss function to present the WGPGAN model. They showed that it helps to avoid mode collapse and makes the training process more stable.

Instead of comparing the overall distribution of the data and the generator, \cite{salimans2016improved} remarked on adopting the feature matching technique as a stronger method to prevent the mode collapse in GANs and make them more stable. In this strategy, the discriminator is a feature matching distance and the generator is trained to deceive the discriminator by producing images that match the features of real images, rather than just assessing the overall distribution of the data.

\subsubsection{\textcolor{black}{Generative Moment Matching Networks}}
\cite{dziugaite2015training} and \cite{Li} independently proposed the similar techniques, which demonstrated impressive results by applying the MMD measure within a simple GAN framework. \cite{Li} referred to this technique as Generative Moment Matching Networks (GMMNs). Previous work in GMMNs has used the MMD as a loss function to train both the generators to match the fake and real samples, as well as to train the latent space embeddings.

\paragraph{Embedding in the data space:}
\cite{dziugaite2015training} considered the loss function \eqref{MMD-GAN-li} to train the generator:
\begin{align}\label{MMD-GAN-li}
    \arg\min\limits_{\boldsymbol{\omega}\in\boldsymbol{\Omega}} \text{MMD}^2(F,F_{Gen_{\boldsymbol{\omega}}}).
\end{align}

However, \cite{Li} mentioned that by incorporating the square root of the MMD measure into the GAN loss function, the gradients used to update the generator can be more stable, preventing them from becoming too small and leading to gradient vanishing.
\paragraph{Embedding in the latent space:}
\cite{Li} applied the GMMN concept in the code space of an autoencoder (AE+GMMNs) to propose a bootstrap autoencoder model. The training procedure for this model begins with greedy layer-wise pretraining of the auto-encoder \citep{bengio2006greedy}, where each layer is trained individually in an unsupervised manner to compress data into a lower-dimensional representation (code space) that captures the essential features of the original data. This compressed representation is then used to reconstruct (or decode) the original input. After the pretraining phase, the auto-encoder is fine-tuned using backpropagation to improve its reconstruction performance. Once the auto-encoder is fully trained, its weights are frozen, and a GMMN is trained to model the distribution of the code layer. 

The GMMN generator, which is fed with sub-latent noise, generates codes in the code space. The objective is to minimize the MMD between the generated codes and the real data codes, optimizing the GMMN objective on the final encoding layer of the auto-encoder. \cite{Li} demonstrated that this approach significantly reduced noise in the decoded samples. However, since it does not model the latent space probabilistically, it cannot sample or generate novel data points beyond those already seen during the bootstrap procedure.

\subsubsection{Standard VAE}
The VAE consists of an encoder $\lbrace Enc_{\boldsymbol{\eta}}\rbrace_{\boldsymbol{\eta}\in\boldsymbol{\mathfrak{H}}}$\textcolor{black}{, a family of neural network functions parameterized by $\boldsymbol{\eta}$ within a compact domain $\boldsymbol{\mathfrak{H}}\subset\mathbb{R}^{s_1}$,} which maps the input data $\mathbf{X}\sim F$ to a latent representation $\mathbf{Z}_{e}$, and a decoder $\lbrace Dec_{\boldsymbol{\gamma}}\rbrace_{\boldsymbol{\gamma}\in\boldsymbol{\Gamma}}$\textcolor{black}{, a family of neural network functions parameterized by $\boldsymbol{\gamma}$ within a compact domain $\boldsymbol{\Gamma}\subset\mathbb{R}^{s_2}$,} which reconstructs $\mathbf{Z}_{e}$ back to the data space \citep{kingma2013auto}. It uses a hierarchical distribution to model the underlying distribution of the data. Precisely, a prior distribution $F_{\mathbf{Z}_r}$ is first placed on the latent space, $\mathbf{Z}_r\sim F_{\mathbf{Z}_r}$, to specify the distribution of the encoder (variational distribution), $\mathbf{Z}_e:=\mathbf{Z}_r|\mathbf{X}\sim F_{Enc_{\boldsymbol{\eta}}}$, and the distribution of the decoder, $\mathbf{X}|\mathbf{Z}_r\sim F_{Dec_{\boldsymbol{\gamma}}}$, by the reparametrization trick. Then, the intractable data likelihood is approximated by maximizing the marginal log-likelihood:
\begin{align}\label{log-f-dec}
   \ln f_{Dec_{\boldsymbol{\gamma}}}(\mathbf{x})=\ln \int f_{Dec_{\boldsymbol{\gamma}}}(\mathbf{x}|\mathbf{z}_r)f_{\mathbf{Z}_r}(\mathbf{z}_r)\,d\mathbf{z}_r, 
\end{align}
where $f_{Dec_{\boldsymbol{\gamma}}}$ represents the density function corresponding to $F_{Dec_{\boldsymbol{\gamma}}}$. It can be shown that maximizing \eqref{log-f-dec} is equivalent to minimizing:
\begin{align}\label{VAE-loss}
    \mathcal{L}_{\text{VAE}}(\boldsymbol{\eta},\boldsymbol{\gamma})&=
    \text{KL}\left( f_{Enc_{\boldsymbol{\eta}}}(\mathbf{z}_r|\mathbf{x}),f_{\mathbf{Z}_r}(\mathbf{z}_r)
    \right)
    -E_{F_{Enc_{\boldsymbol{\eta}}}(\mathbf{z}_r|\mathbf{x})}\left(
    \ln f_{Dec_{\boldsymbol{\gamma}}}(\mathbf{x}|\mathbf{z}_r)
    \right)\nonumber\\
    &=\mathcal{L}_{\text{Reg}}+\mathcal{L}_{\text{Rec}}
\end{align}
with respect to $\boldsymbol{\eta}$ and $\boldsymbol{\gamma}$. Here, $\text{KL}(\cdot,\cdot)$ denotes Kullback-Leibler divergence, and $\mathcal{L}_{\text{Reg}}$ and $\mathcal{L}_{\text{Rec}}$ represent the regularization and reconstruction errors, respectively. In fact, $\mathcal{L}_{\text{Rec}}$ is the cross-entropy that measures how well the model can reconstruct the input data from the latent space, while $\mathcal{L}_{\text{Reg}}$ encourages the approximate posterior to be close to the prior distribution over the latent space.

Although the latent space in VAEs is a powerful tool for learning the underlying structure of data, it can face limitations in its capacity to capture the complex features of an input image. When the latent space is not able to fully represent all of the intricate details of an image, the resulting reconstructions can be less accurate and lead to unclear outputs. They tend to distribute probability mass diffusely over the data space, increasing the tendency of VAEs to generate blurry images, as pointed out by \cite{theis2015note}. To mitigate the blurriness issue in VAEs, researchers have proposed various modifications such as considering the adversarial loss \citep{makhzani2015adversarial,mescheder2017adversarial} in the VAE objective, improving the encoder and decoder network architectures \citep{yang2017improved, kingma2016improved}, and using denoising techniques \citep{im2017denoising, creswell2018denoising}. However, the methods mentioned earlier still produce images that exhibit a degree of blurriness.

Meanwhile, the idea of integrating GANs and VAEs was first suggested by \cite{larsen2016autoencoding}, using the decoder in the VAE as the generator in the GAN. This model, known as the VAE-GAN, provides an alternative approach to addressing the challenges previously mentioned.
The paper demonstrates the effectiveness of the VAE-GAN model on several benchmark datasets, showing that it outperforms other unsupervised learning methods in terms of sample quality and diversity. \cite{donahue2016adversarial} and \cite{dumoulin2016adversarially} independently proposed another similar approach in the same manner as in the paper by \cite{larsen2016autoencoding}. However, their method incorporated a discriminator that not only distinguished between real and fake samples but also jointly compared the real and code samples (encoder output) with the fake and noise (generator input) samples, which sets it apart from the VAE-GAN model.

Another stronger method was proposed by \cite{rosca2017variational} called $\alpha$-GAN. In this approach, the decoder in a VAE was also replaced with the generator of a GAN. 
Two discriminator networks were then used to optimize the reconstruction and regularization errors of the VAE adversarially. Moreover, a zero-mean Laplace distribution was assigned to the reconstruction data distribution to add an extra term for reconstruction error. This term was considered to provide weights to all parts of the model outputs. Several proxy metrics were employed for evaluating $\alpha$-GAN models. The findings revealed that the WGPGAN is a robust competitor to the $\alpha$-GAN and can even surpass it in certain scenarios. Recently, \cite{kwon2019generation} proposed a 3D GAN by extending $\alpha$-GAN for 3D generations. The authors addressed the stability issues of $\alpha$-GAN and proposed a new hybrid generative model, 3D $\alpha$-WGPGAN, which employs WGPGAN loss to the $\alpha$-GAN loss to enhance training stability. They validated the effectiveness of the 3D $\alpha$-WGPGAN on some 3D MRI brain datasets, outperforming all previously mentioned models.

\subsubsection{3D $\alpha$-WGPGAN}
When $f_{Dec_{\boldsymbol{\gamma}}}(\mathbf{x}|\mathbf{z}_r)$ in \eqref{VAE-loss} is unknown, $\mathcal{L}_{\text{Rec}}$ cannot be employed directly in the training process. One way is assigning a specific distribution to $f_{Dec_{\boldsymbol{\gamma}}}(\mathbf{x}|\mathbf{z}_r)$, like the Laplace distribution which is a common choice in many VAE-based procedures, and then minimize 
$\mathcal{L}_{\text{Rec}}$ \citep{ulyanov2018takes}. However, this procedure can increase subjective constraints in the model \citep{rosca2017variational}. An alternative method in approximating $f_{Dec_{\boldsymbol{\gamma}}}(\mathbf{x}|\mathbf{z}_r)$ is to treat $Dec_{\boldsymbol{\gamma}}$ as the generator of a GAN to train the decoder by playing an adversarial game with the GAN discriminator.  It guarantees the available training data is fully explored through the training process, thereby preventing mode collapse. \textcolor{black}{Hereinafter, we unify the notation by representing both the VAE decoder and GAN generator as \( Gen_{\boldsymbol{\omega}} \), establishing the connection point between the two models.}

The 3D \( \alpha \)-WGPGAN incorporates both of the above structures to form a VAE-GAN model \citep{kwon2019generation}.
\textcolor{black}{The training begins with the VAE, consisting of the \( Enc_{\boldsymbol{\eta}} \) and \( Gen_{\boldsymbol{\omega}} \) components, optimized based on regularization and reconstruction error, as outlined below.} This model 
avoids using Kullback-Leibler divergence in $\mathcal{L}_{\text{Reg}}$ to minimize the regularization error which often considers a simple form like a Gaussian for $f_{Enc_{\boldsymbol{\eta}}}(\mathbf{z}_r|\mathbf{x})$. It defines $\mathcal{L}_{\text{Reg}}$  \textcolor{black}{using the Wasserstein distance \eqref{wasserstein-general-def} as
\begin{align}\label{genLoss-alpha-wgan-cdis}
    \mathcal{L}_{\text{Reg}}:=\text{W}(F_{\mathbf{Z}_r},F_{Enc_{\boldsymbol{\eta}}})=\sup\limits_{\left\|CDis\right\|_{L}\leq1} 
    E_{F_{\mathbf{Z}_r}}[CDis(\mathbf{Z}_{r})]-
    E_{F_{Enc_{\boldsymbol{\eta}}}(\mathbf{z}_r|\mathbf{x})}[CDis(\mathbf{Z}_e)],
\end{align}
where $\mathbf{Z}_{r} \sim F_{\mathbf{Z}_r}$ and $\mathbf{Z}_e \sim F_{Enc_{\boldsymbol{\eta}}}(\mathbf{z}_r|\mathbf{x})$.} 

\textcolor{black}{
Due to the flexibility of neural networks \citep{arjovsky2017wasserstein, gulrajani2017improved}, \cite{kwon2019generation} considered a family of neural network functions with parameter  $\boldsymbol{\xi}$ in a compact domain $\boldsymbol{\Xi}\subset\mathbb{R}^{t}$, defined in \eqref{space-cdis}, to approximate \eqref{genLoss-alpha-wgan-cdis}.
\begin{align}\label{space-cdis}
    \mathcal{D}_{\boldsymbol{\xi}} := \{ CDis_{\boldsymbol{\xi}}\mid\,\small{\textstyle E_{F_{\widehat{\mathbf{Z}}_r}}[\|\nabla_{\widehat{\mathbf{Z}}_r} CDis_{\boldsymbol{\xi}}(\widehat{\mathbf{Z}}_r) \|_2 - 1]^2=0,  \widehat{\mathbf{Z}}_r=(1-u)\mathbf{Z}_r+u\mathbf{Z}_{e}, 0\leq u\leq1, \boldsymbol{\xi}\in\boldsymbol{\Xi}}\}.
\end{align}
Consequently, $\mathcal{L}_{\text{Reg}}$ in \eqref{genLoss-alpha-wgan-cdis} can be computed by
\begin{align*}
    \mathcal{L}_{\text{Reg}}=\max\limits_{\boldsymbol{\boldsymbol{\xi}}\in\,\boldsymbol{\Xi}} 
    E_{F_{\mathbf{Z}_r}}[CDis_{\boldsymbol{\xi}}(\mathbf{Z}_{r})]-
    E_{F_{Enc_{\boldsymbol{\eta}}}(\mathbf{z}_r|\mathbf{x})}[CDis_{\boldsymbol{\xi}}(\mathbf{Z}_e)].
\end{align*}
}

\textcolor{black}{
Consideration of the family of neural networks in Eq. \eqref{space-cdis}  constrains \( CDis_{\boldsymbol{\xi}} \) to satisfy the $1$-Lipschitz condition \citep{gulrajani2017improved}, thereby achieving a meaningful Wasserstein distance metric between the distributions.
\cite{kwon2019generation} referred to $CDis_{\boldsymbol{\xi}}$ as a code-discriminator, as it} engages in an adversarial game with $Enc_{\boldsymbol{\eta}}$ during the minimization of $\mathcal{L}_{\text{Reg}}$ to approximate $f_{Enc_{\boldsymbol{\eta}}}(\mathbf{z}_r|\mathbf{x})$ such that the latent posterior matches to the latent prior $f_{\mathbf{Z}_r}(\mathbf{z}_r)$. 

\cite{kwon2019generation} assigned a Laplace distribution with mean zero and scale parameter $\lambda$ to the generator distribution for decoding codes, expressed as $f_{Gen_{\boldsymbol{\omega}}}(\mathbf{x}|\mathbf{z}_r)\propto e^{-\lambda\left \| \mathbf{x}-Gen_{\boldsymbol{\omega}}(\mathbf{z}_e) \right \|_{1}}$, where $\mathbf{x}$ represents the original input data observed from the distribution $F$. \textcolor{black}{Under this formulation, the reconstruction loss $\mathcal{L}_{\mathrm{Rec}}$ is expressed as:} 
\textcolor{black}{\begin{align}\label{rec-loss-laplace}
    \mathcal{L}_{\mathrm{Rec}}:=\lambda E_{F_{Enc_{\boldsymbol{\eta}}}(\mathbf{z}_r|\mathbf{x})}[\left \| \mathbf{X}-Gen_{\boldsymbol{\omega}}(\mathbf{Z}_e) \right \|_{1}].
\end{align}}
The VAE is then trained by minimizing \( \mathcal{L}_{\mathrm{Reg}} \) and \( \mathcal{L}_{\mathrm{Rec}} \), as defined in Eqs.~\eqref{genLoss-alpha-wgan-cdis} and \eqref{rec-loss-laplace}, respectively.

\textcolor{black}{Next, the 3D \( \alpha \)-WGPGAN focuses on training the GAN, comprising both the \( Gen_{\boldsymbol{\omega}} \) and \( Dis_{\boldsymbol{\theta}} \) components. The GAN is trained by updating the parameters of \( Gen_{\boldsymbol{\omega}} \) and \( Dis_{\boldsymbol{\theta}} \) through minimizing the Wasserstein distance between the fake and real samples, formulated as \eqref{w-dist}, over $\boldsymbol{\Omega}$.
\begin{align}\label{w-dist}
    \text{W}(F,F_{Gen_{\boldsymbol{\omega}}}) = \max\limits_{\boldsymbol{\theta} \in \boldsymbol{\Theta}} E_F[Dis_{\boldsymbol{\theta}}(\mathbf{X})] - E_{F_{Gen_{\boldsymbol{\omega}}}}[Dis_{\boldsymbol{\theta}}(\widetilde{\mathbf{X}})].
\end{align}
}
\textcolor{black}{
Here, \( \widetilde{\mathbf{X}} := (Gen_{\boldsymbol{\omega}}(\mathbf{Z}_r), Gen_{\boldsymbol{\omega}}(\mathbf{Z}_e)) \) represents any generated synthetic samples from the generator, and \( Dis_{\boldsymbol{\theta}} \) belongs to the class of neural network functions \( \mathcal{D}_{\boldsymbol{\theta}} \), defined over a compact domain \( \boldsymbol{\Theta} \). This class, specified in \eqref{space-dis}, imposes the 1-Lipschitz constraint on the discriminator \citep{gulrajani2017improved}.
\begin{align}\label{space-dis}
    \mathcal{D}_{\boldsymbol{\theta}} := \{ Dis_{\boldsymbol{\theta}} \mid E_{F_{\widehat{\mathbf{X}}}}[\|\nabla_{\widehat{\mathbf{X}}} Dis_{\boldsymbol{\theta}}(\widehat{\mathbf{X}}) \|_2 - 1]^2=0,  \widehat{\mathbf{X}} = (1-u) \mathbf{X} + u \widetilde{\mathbf{X}}, \; 0 \leq u \leq 1, \boldsymbol{\theta}\in\boldsymbol{\Theta}\}.
\end{align}
}

\begin{figure*}[t!]
    \centering
    {\includegraphics[width=1\linewidth]{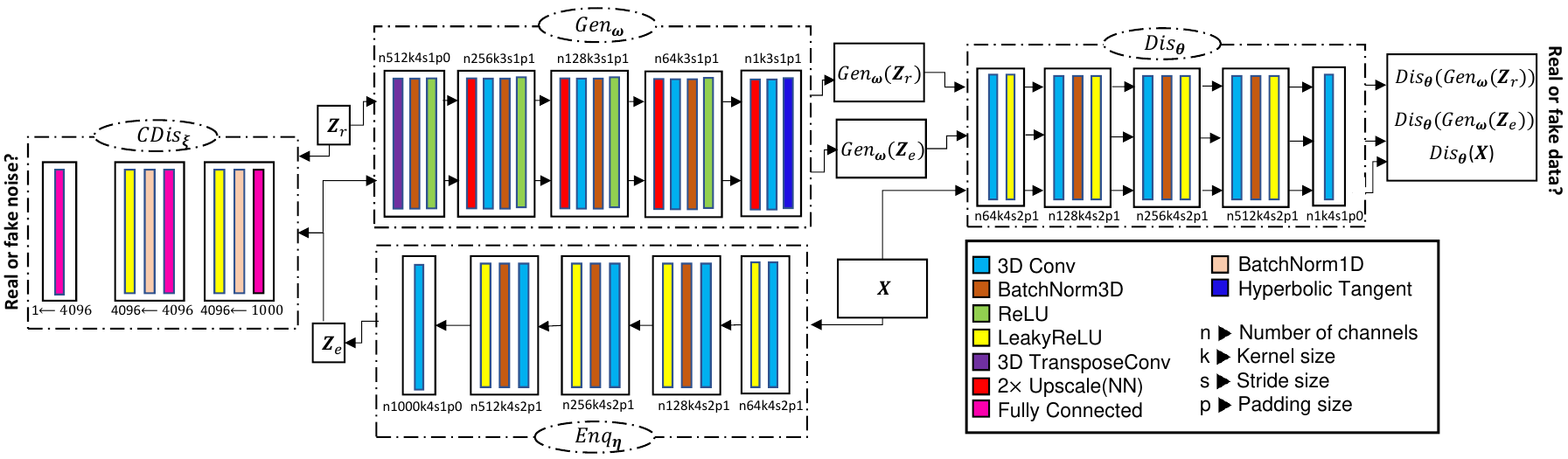}}
    \caption{The general architecture of 3D $\alpha$-WGPGAN comprises three convolutional networks (encoder, generator, and discriminator) and a fully connected-based network (code-discriminator).}
    \label{diagram-3d-Gan}%
\end{figure*}
The training of the 3D \( \alpha \)-WGPGAN is finally achieved by combining the tasks above and minimizing the hybrid loss function:
\begin{align}
    \mathcal{L}_{\text{EGen}}(\boldsymbol{\omega}, \boldsymbol{\eta})&=-E_{F_{Enc_{\boldsymbol{\eta}}}(\mathbf{z}_e|\mathbf{x})}[Dis_{\boldsymbol{\theta}}(Gen_{\boldsymbol{\omega}}(\mathbf{Z}_e))]
    -E_{F_{\mathbf{Z}_r}}[Dis_{\boldsymbol{\theta}}(Gen_{\boldsymbol{\omega}}(\mathbf{Z}_{r}))]\nonumber\\
    &~~~~+\lambda E_{F_{Enc_{\boldsymbol{\eta}}}(\mathbf{z}_e|\mathbf{x})}[\left \| \mathbf{X}-Gen_{\boldsymbol{\omega}}(\mathbf{Z}_e) \right \|_{1}],\label{L-G-wgp}\\
    \mathcal{L}_{\text{Dis}}(\boldsymbol{\theta})&=E_{F_{Enc_{\boldsymbol{\eta}}}(\mathbf{z}_e|\mathbf{x})}[Dis_{\boldsymbol{\theta}}(Gen_{\boldsymbol{\omega}}(\mathbf{Z}_e))]
    +E_{F_{\mathbf{Z}_r}}[Dis_{\boldsymbol{\theta}}(Gen_{\boldsymbol{\omega}}(\mathbf{Z}_{r}))]
    -2E_F[Dis_{\boldsymbol{\theta}}(\mathbf{X})]
    \label{L_D_wgan}\nonumber\\
    &~~~~+\lambda_{p}E_{F_{\widehat{\mathbf{X}}}}[( \| \nabla_{\widehat{\mathbf{X}}}Dis_{\boldsymbol{\theta}}(\hat{\mathbf{X}}) \|_2-1)^2],\\
    \mathcal{L}_{\text{CDis}}(\boldsymbol{\xi})&=E_{F_{Enc_{\boldsymbol{\eta}}}(\mathbf{z}_e|\mathbf{x})}[CDis_{\boldsymbol{\xi}}(\mathbf{Z}_e)]
    -E_{F_{\mathbf{Z}_r}}[CDis_{\boldsymbol{\xi}}(\mathbf{Z}_{r})]
    \nonumber\\
    &~~~~+\lambda_{p}E_{F_{\widehat{\mathbf{Z}}_r}}[( \| \nabla_{\widehat{\mathbf{Z}}_r}Dis_{\boldsymbol{\theta}}(\widehat{\mathbf{Z}}_r) \|_2-1)^2].\nonumber
\end{align}

In Equation \eqref{L_D_wgan}, \( \lambda_{p} \) is the coefficient of the gradient penalty term \( E_{F_{\hat{\mathbf{X}}}}[(\| \nabla_{\hat{\mathbf{x}}}Dis_{\boldsymbol{\theta}}(\hat{\mathbf{x}}) \|_2 - 1)^2] \). Additionally, the first expectation in Equation \eqref{w-dist}, which is independent of \( \boldsymbol{\omega} \), is excluded from the gradient descent minimization with respect to \( \boldsymbol{\omega} \) and thus omitted in Equation \eqref{L-G-wgp}. All expectations mentioned above are approximated using the empirical distribution in practice.

\cite{kwon2019generation} designed the encoder and discriminator networks with five 3D convolutional layers followed by batch normalization (BatchNorm) layers and leaky rectified linear unit (LeakyReLU) activation functions. 
The generator network includes a transpose convolutional (TransposeConv) layer, four 3D convolutional layers, and a BatchNorm layer with a ReLU activation function in each layer. Typically, BatchNorm and ReLU are applied to ensure network stability.
The TransposeConv layer enables the network to ``upsample" the input noise vector and generate an output tensor with a larger spatial resolution. The upscale layers are also implemented in the last four layers of the generator network to increase the spatial resolution of the input feature maps. The code-discriminator consists of three fully connected layers followed by BatchNorm and LeakyReLU activation functions. Figure \ref{diagram-3d-Gan} provides a detailed illustration of the architecture of the 3D $\alpha$-WGPGAN.

\subsection{\textcolor{black}{Bayesian Non-parametric Learning}}
The fundamental concept of BNPL involves placing a prior on the data distribution. 
We begin by introducing the DP prior, a key element of the BNPL framework.

\subsubsection{Dirichlet Process Prior}
The DP, introduced by \cite{Ferguson}, is a widely used prior in BNP methods. It can be seen as a generalization of the Dirichlet distribution, where a random probability measure $F$ is constructed around a fixed probability measure $H$ (the base measure) with variation controlled by a positive real number $a$ (the concentration parameter). Within the context of this statement, $H$ represents the extent of the statistician's expertise in data distribution, while $a$ denotes the level of intensity of this knowledge.

\begin{definition}[\citealt{Ferguson}]\label{DP-foraml-Def}
    The probability measure $F$ is a DP on a space $\mathfrak{X}$ with a $\sigma$-algebra $\mathcal{A}$ of subsets of $\mathfrak{X}$ if, for every measurable partition $A_{1},\ldots,A_{K}$ of $\mathfrak{X}$ with $K\geq2$, the joint distribution of the vector $(F(A_{1}),\ldots,F(A_{K}))$ has a Dirichlet distribution with parameters $(aH(A_{1}),\ldots,aH(A_{K}))$. Moreover, it is assumed that $H(A_{j})=0$ implies $F(A_{j})=0$ with probability one.
\end{definition}

One of the most important properties of the DP is its conjugacy property, where the posterior distribution of $F$ given a sample $\mathbf{X}_{1:n}$ drawn from $F\sim DP(a,H)$, denoted by $F^{\mathrm{pos}}$, is also a DP with concentration parameter $a+n$ and base measure 
$H^{\ast}=a(a+n)^{-1}H+n(a+n)^{-1}F_{\mathbf{X}_{1:n}}$,
where $F_{\mathbf{X}_{1:n}}$ is the empirical cumulative distribution function of the generated sample $\mathbf{X}_{1:n}:=({\mathbf{X}}_{1},\ldots,{\mathbf{X}}_{n})$. This property allows for easy computation of the posterior distribution of $F$.

Alternative definitions for DP have been proposed, including infinite series representations by \cite{bondesson1982simulation} and \cite{sethuraman1994constructive}. 
The method introduced by \cite{sethuraman1994constructive} is commonly referred to as the stick-breaking representation and is widely used for DP inference. However, \cite{zarepour2012rapid} noted that, unlike the series of \cite{bondesson1982simulation}, the stick-breaking representation lacks normalization terms that convert it into a probability measure. Additionally, simulating from infinite series is only feasible with a truncation approach for the terms inside the series.
\cite{Ishwaran} introduced an approximation of DP in the form of a finite series, as shown in \eqref{approx of DP}, which can be easily simulated:
\begin{align}\label{approx of DP}
    F^{\mathrm{pos}}_{N}=\sum_{i=1}^{N}J^{\mathrm{pos}}_{i,N}\delta_{\mathbf{X}^{\mathrm{pos}}_i},
\end{align}
where
\begin{equation*}
    \mathbf{X}^{\mathrm{pos}}_{1},\ldots,\mathbf{X}^{\mathrm{pos}}_{N}\overset{i.i.d.}{\sim}H^{\ast},\hspace{.2 cm}
    (J^{\mathrm{pos}}_{1,N},\ldots,J^{\mathrm{pos}}_{N,N})\sim \mbox{Dir}(\frac{a+n}{N},\ldots,\frac{a+n}{N}),
\end{equation*}
\textcolor{black}{and $\delta$ is the Dirac delta measure, which acts as a point mass at each $\mathbf{X}^{\mathrm{pos}}_{i}$. Specifically, $\delta_{\mathbf{X}^{\mathrm{pos}}_{i}}(A)=1$ if $\mathbf{X}^{\mathrm{pos}}_{i}\in A$ and $\delta_{\mathbf{X}^{\mathrm{pos}}_{i}}(A)=0$ otherwise.} In this paper, the variables $J^{\mathrm{pos}}_{i,N}$ and $\mathbf{X}^{\mathrm{pos}}_{i}$ are used to represent the weight and location of the DP, respectively.

\cite{Ishwaran} demonstrated that $(F^{\mathrm{pos}}_{N})_{N\geq 1}$ converges in distribution to $F^{\mathrm{pos}}$, where $F^{\mathrm{pos}}_{N}$ and $F^{\mathrm{pos}}$ are random values in the space $M_{1}(\mathbb{R})$ of probability measures on $\mathbb{R}$ endowed with the topology of weak convergence. To generate $(J^{\mathrm{pos}}_{i,N})_{1\leq i\leq N}$, one can put $J^{\mathrm{pos}}_{i,N}=\Gamma_{i,N}/\sum_{i=1}^{N}\Gamma_{i,N}$, where $(\Gamma_{i,N})_{1\leq i\leq N}$ is a sequence of independent and identically distributed $\mbox{Gamma}((a+n)/N, 1)$ random variables that are independent of $(\mathbf{X}^{\mathrm{pos}}_{i})_{1\leq i\leq N}$. This form of approximation has been used so far in various applications, including hypothesis testing and GANs, and reflected outstanding results.
It also leads to some excellent outcomes in subsequent sections.

\textcolor{black}{The DP approximation given by \eqref{approx of DP} requires determining the appropriate number of terms to ensure that the truncated series closely resembles the true DP. To determine the appropriate number of terms, we employ a random stopping rule, following the approach introduced by \cite{zarepour2012rapid}. This rule adaptively stops the approximation when the contribution of the current term becomes negligible relative to previous terms. Specifically, given a predefined threshold $\varsigma \in (0,1)$, the stopping point is defined as:}
\begin{align}\label{random-stopping}
    N = \inf\left\{ j\mid\, \frac{\Gamma_{j,j}}{\sum_{i=1}^{j}\Gamma_{i,j}} < \varsigma \right\}.
\end{align}
\textcolor{black}{This method ensures that the series is truncated once additional terms no longer significantly impact the overall approximation, optimizing both computational efficiency and accuracy.}

\subsubsection{\textcolor{black}{BNPL using Minimum Distance Estimation}}
Given the placement of a DP prior on the data distribution, \cite{dellaporta2022robust} developed a procedure to estimate the parameter of interest using minimum distance estimation, following the BNPL strategy outlined in \cite{lyddon2018nonparametric,lyddon2019general,fong2019scalable}. When considering the generator's parameter as the parameter of interest, this strategy defines the generator's parameter as a function of $F^{\text{pos}}$, specifically:
  \begin{equation}\label{bnpl-measure}
   \boldsymbol{\omega}^{\ast}(F^{\text{pos}}):=\arg\min\limits_{\boldsymbol{\omega}\in\boldsymbol{\Omega}} \Delta(F^{pos},F_{Gen_{\boldsymbol{\omega}}}),   
  \end{equation}  
where $\Delta$ is a statistical distance quantifying the difference between two probability measures.

  The key idea is that any posterior distribution over the generator's parameter space, $\boldsymbol{\Omega}$, can be derived by mapping $F^{\text{pos}}$ through the push-forward measure in \eqref{bnpl-measure}, which is visually represented in \citet[Figure 1]{dellaporta2022robust}. Specifically, \cite{dellaporta2022robust} considered $\Delta$ as the MMD measure and applied the following DP approximation:
  \begin{align}\label{app-dp-della}
F^{pos}_{n+N}=\sum_{\ell=1}^{n}\widetilde{J}_{\ell,n}\delta_{\mathbf{X}_{\ell}}+\sum_{t=1}^{N}J_{t,N}\delta_{\mathbf{V}_{t}},
\end{align}
where $(\widetilde{J}_{1:n,n},J_{1:N,N})\sim \mbox{Dirichlet}(1,\ldots,1,\frac{a}{N},\ldots,\frac{a}{N})$, $(\mathbf{X}_{1:n})\overset{i.i.d.}{\sim}F$, and $(\mathbf{V}_{1:N})\overset{i.i.d.}{\sim}H$.

The BNPL approach effectively addresses model misspecification by adapting a non-parametric prior \citep{fong2019scalable,lyddon2018nonparametric,lyddon2019general,lee2024enhancing}, supporting a stable learning process, and enhancing generalization across generative tasks. Furthermore, it reduces the model’s sensitivity to outliers and sample variability by incorporating prior regularization, which enhances the stability of out-of-sample performance compared to traditional learning methods that rely solely on the empirical data distribution for computing loss functions \citep{bariletto2024bayesian}.


\subsubsection{\textcolor{black}{Semi-BNP MMD GAN}}\label{semi-BNP-MMD-sec}
Recently, \cite{fazeli2023semi} proposed training GANs within a BNPL framework using MMD estimation. This method placed a DP prior on the data distribution. Their ``Semi-BNP MMD GAN'' is a robust BNPL procedure that simulates the posterior distribution in the generator’s parameter space by minimizing the MMD distance between the DP posterior and the generator’s distribution. 
Although \citeauthor{fazeli2023semi} established a generalization error bound and a robustness guarantee for their model under the BNPL approach—enhancing training stability and flexibility—the model still remains prone to mode collapse due to its reliance solely on a GAN framework. 
This DP approximation offers a notable advantage over the approximation \eqref{app-dp-della} by significantly reducing the number of terms, thus simplifying both the computational and theoretical complexities. Specifically, the method in \cite{fazeli2023semi} scales with $O(N)$, while the approach in \cite{dellaporta2022robust} scales with $O(N+n)$.  

More preciously, for generated samples $\mathbf{Y}_1,\ldots,\mathbf{Y}_m\sim F_{Gen_{\boldsymbol{\omega}}}$, $\mathbf{Y}_i:=Gen_{\boldsymbol{\omega}}(\mathbf{Z}_{r_i})$, and $\mathbf{Z}_{r_i}\sim F_{\mathbf{Z}_r}$, \cite{fazeli2023semi} propose to consider $\Delta$ in \eqref{bnpl-measure} as the square root of the posterior-based distance defined by: 
\begin{align}\label{BNP-pos-MMD}
\text{MMD}^2(F^{\mathrm{pos}}_{N},F_{Gen_{\boldsymbol{\omega}}(\mathbf{Z}_{r_{1:m}})})&= \sum_{\ell,t=1}^{N} J^{\mathrm{pos}}_{\ell,N}J^{\mathrm{pos}}_{t,N}k(\mathbf{X}^{\mathrm{pos}}_{\ell},\mathbf{X}^{\mathrm{pos}}_{t})
-\dfrac{2}{m}\sum_{\ell=1}^{N}\sum_{t=1}^{m} J^{\mathrm{pos}}_{\ell,N}k(\mathbf{X}^{\mathrm{pos}}_{\ell},\mathbf{Y}_{t})\nonumber\\
&+\dfrac{1}{m^2}\sum_{\ell,t=1}^{m} k(\mathbf{Y}_{\ell},\mathbf{Y}_{t}).
\end{align}

\cite{fazeli2023semi} considered $k(\cdot,\cdot)$ as a  mixture of Gaussian kernels using various bandwidth parameters. For instance, for a set of fixed bandwidth parameters such as $\lbrace \sigma_1,\ldots,\sigma_T \rbrace$ and two vectors $\mathbf{X}^{\mathrm{pos}}_{\ell}$ and $\mathbf{Y}_{t}$, $k(\mathbf{X}^{\mathrm{pos}}_{\ell},\mathbf{Y}_{t})=\sum_{t^{\prime}=1}^{T}\exp\frac{-\|\mathbf{X}^{\mathrm{pos}}_{\ell}-\mathbf{Y}_{t}\|^{2}}{2\sigma_{t^{\prime}}^{2}}$. \textcolor{black}{Particularly, they used the set of values \( \sigma \in \lbrace 2, 5, 10, 20, 40, 80 \rbrace \), which have been shown to perform well in MMD-based training processes \citep{Li}. These values have also been considered in our paper to ensure effective performance in MMD-based computations.}
The generator was implemented using the original architecture proposed by \cite{Goodfellow}, and the model architecture is illustrated in Figure \ref{diagram-rb-BNP1}.

\begin{figure}[t!]
    \centering
    {\includegraphics[width=.99\linewidth]{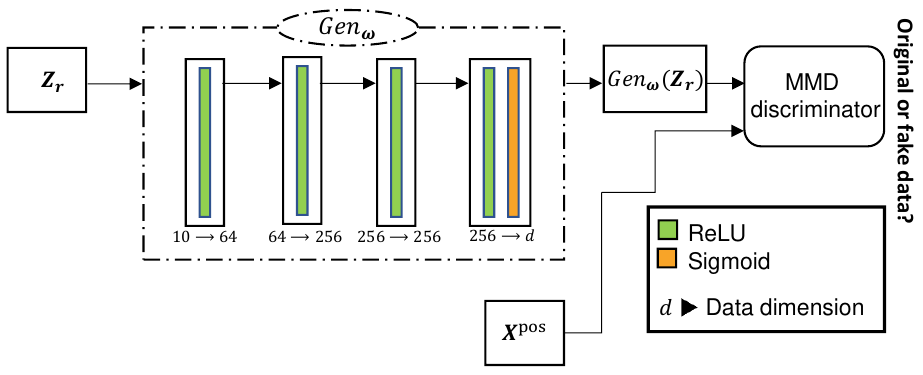}}
    \caption{The general architecture of semi-BNP MMD GAN with four ReLU layers and a sigmoid activation function in the final layer.}
    \label{diagram-rb-BNP1}%
\end{figure}
\section{A Stochastic Representation for Wasserstein Distance}\label{sec:wasserstein}

In this section, we present a stochastic procedure for measuring the Wasserstein distance between a fixed probability measure and a random probability measure modeled by a DP prior, \textcolor{black}{ specifically tailored for applications in BNPL}. 
For a fixed value of $a$ and a probability measure $H$, we model the unknown data distribution $F$ as a random probability measure using the following model:
\begin{align}
    \mathbf{X}&\sim F,\label{cdf}\\
    F^{\mathrm{pri}}:=F&\sim DP(a,H),\label{DP-pri}\\
    F^{\mathrm{pos}}:=F|\mathbf{X}&\sim DP(a+n,H^{\ast}),\label{DP-pos}
\end{align}
where $\mathbf{X}=(\mathbf{X}_1,\ldots,\mathbf{X}_n)$ represent $n$ samples in $\mathbb{R}^d$.
Let \( G \) be any fixed distribution with the ability to simulate samples of any size, and  \textcolor{black}{let \( Dis_{\boldsymbol{\theta}} \) belong to the 1-Lipschitz family of neural networks \( \mathcal{D}^{\prime}_{\boldsymbol{\theta}} \), defined over the compact domain \( \boldsymbol{\Theta}\subset\mathbb{R}^{t_2} \) in \eqref{space-dis-bnp}.
}
\textcolor{black}{
    \begin{align}\label{space-dis-bnp}
    \mathcal{D}^{\prime}_{\boldsymbol{\theta}} := \left\lbrace Dis_{\boldsymbol{\theta}} \mid \small{\textstyle \sum_{i=1}^{N}\frac{[\|\nabla_{\widehat{\mathbf{X}}^{\mathrm{pos}}_i} Dis_{\boldsymbol{\theta}}(\widehat{\mathbf{X}}^{\mathrm{pos}}_i) \|_2 - 1]^2}{N}=0,  \widehat{\mathbf{X}}^{\mathrm{pos}}_i = (1-u) \mathbf{X}^{\mathrm{pos}}_i + u \mathbf{Y}_i, \; 0 \leq u \leq 1, \boldsymbol{\theta}\in\boldsymbol{\Theta}} \right\rbrace.
\end{align}
}

We propose an approximation for Wasserstein between $F^{\mathrm{pos}}$ and $G$ as
\begin{align}\label{W-BNP}
    \text{W}(F^{\mathrm{pos}}_N,G_{\mathbf{Y}_{1:N}})=\max\limits_{\boldsymbol{\theta}\in\boldsymbol{\Theta}} 
    \sum_{i=1}^{N}\left( J^{\mathrm{pos}}_{i,N}Dis_{\boldsymbol{\theta}}(\mathbf{X}_{i}^{\mathrm{pos}})
    -\dfrac{Dis_{\boldsymbol{\theta}}(\mathbf{Y}_i)}{N}\right),
\end{align}
where $(J^{\mathrm{pos}}_{1,N},\ldots,J^{\mathrm{pos}}_{N,N})\sim Dir(\frac{a+n}{N},\ldots,\frac{a+n}{N})$, $\mathbf{X}^{\mathrm{pos}}_{1},\ldots,\mathbf{X}^{\mathrm{pos}}_{N}\overset{i.i.d.}{\sim}H^{\ast}$, and $G_{\mathbf{Y}_{1:N}}$ denotes the empirical distribution corresponding to the sample 
 $Y_1,\ldots,Y_N\overset{i.i.d.}{\sim} G$. \textcolor{black}{We defined \eqref{W-BNP} using the neural network \( Dis_{\boldsymbol{\theta}} \) specifically to facilitate its use in the optimization process via gradient descent, as outlined in the following section.}
The next theorem presents some asymptotic properties of $\text{W}(F^{\mathrm{pos}},G)$ with respect to $N,n,$ and $a$.
\begin{theorem}\label{thm-pos-conv}
Assuming \eqref{cdf}-\eqref{DP-pos}, it follows that for any fixed probability distribution $G$:
\begin{itemize}
    \item[$i.$] $\text{W}(F^{\mathrm{pos}}_N,G_{\mathbf{Y}_{1:N}})\xrightarrow{p}\text{W}(F,G)$ as $N,n\rightarrow\infty$.
    \item[$ii.$] $\text{W}(F^{\mathrm{pos}}_N,G_{\mathbf{Y}_{1:N}})\xrightarrow{p}\text{W}(H,G)$ as $N,a\rightarrow\infty$,
\end{itemize}
\textcolor{black}{where ``$\xrightarrow{p}$" represents convergence in probability} and
$\text{W}(F,G)$ is defined 
in \eqref{w-dist} with 
$F_{Gen_{\boldsymbol{\omega}}}=G$.
\end{theorem}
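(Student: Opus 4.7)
The plan is to reduce the statement to three standard ingredients: (a) the weak convergence of the Ishwaran--Zarepour truncation $F^{\mathrm{pos}}_N$ to the full DP posterior $F^{\mathrm{pos}}$ as $N\to\infty$ (invoked earlier from \cite{Ishwaran}), (b) the posterior consistency of the Dirichlet process, which gives $F^{\mathrm{pos}}$ converging weakly to $F$ as $n\to\infty$ and to $H$ as $a\to\infty$ (the latter because the base measure $H^{\ast}$ tends to $H$ while the concentration $a+n$ diverges), and (c) the Glivenko--Cantelli convergence of $G_{\mathbf{Y}_{1:N}}$ to $G$ since $\mathbf{Y}_1,\ldots,\mathbf{Y}_N$ are i.i.d.\ from $G$.

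The key bridge is the Kantorovich--Rubinstein dual form already encoded in \eqref{W-BNP}: writing $L_{\boldsymbol{\theta}}(\mu,\nu):=\int Dis_{\boldsymbol{\theta}}\,d\mu-\int Dis_{\boldsymbol{\theta}}\,d\nu$, both sides of each claim are $\max_{\boldsymbol{\theta}\in\boldsymbol{\Theta}} L_{\boldsymbol{\theta}}$ evaluated at corresponding measure pairs. First I would argue, for each fixed $\boldsymbol{\theta}\in\boldsymbol{\Theta}$, that the weighted sum $\sum_{i=1}^{N}J^{\mathrm{pos}}_{i,N} Dis_{\boldsymbol{\theta}}(\mathbf{X}^{\mathrm{pos}}_i)$ converges in probability to $E_{F}[Dis_{\boldsymbol{\theta}}(\mathbf{X})]$ under regime~(i) and to $E_{H}[Dis_{\boldsymbol{\theta}}(\mathbf{X})]$ under regime~(ii). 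Conditional on the locations, the Dirichlet weights have mean $1/N$ and variance that vanishes with $N$, so the weighted sum concentrates around the unweighted average $\tfrac{1}{N}\sum_i Dis_{\boldsymbol{\theta}}(\mathbf{X}^{\mathrm{pos}}_i)$; by the law of large numbers this converges to $\int Dis_{\boldsymbol{\theta}}\,dH^{\ast}$, and $H^{\ast}$ itself tends to $F$ (as $n\to\infty$) or to $H$ (as $a\to\infty$) in total variation. The second sum $\tfrac{1}{N}\sum_{i=1}^{N} Dis_{\boldsymbol{\theta}}(\mathbf{Y}_i)\to E_G[Dis_{\boldsymbol{\theta}}(\mathbf{Y})]$ is a direct LLN, and the 1-Lipschitz constraint encoded in $\mathcal{D}^{\prime}_{\boldsymbol{\theta}}$ keeps all integrands uniformly bounded on the common compact support.

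Second, I would upgrade this pointwise convergence to uniform convergence over $\boldsymbol{\theta}\in\boldsymbol{\Theta}$ so that the outer $\max$ commutes with the limit. Compactness of $\boldsymbol{\Theta}$ together with the Lipschitz dependence of $Dis_{\boldsymbol{\theta}}$ on $\boldsymbol{\theta}$ (standard for feed-forward architectures with smooth activations) gives equicontinuity of the family $\{L_{\boldsymbol{\theta}}\}$; a finite $\varepsilon$-net over $\boldsymbol{\Theta}$ combined with a union bound then transfers the pointwise-in-probability statement into a uniform-in-probability one, and the continuous mapping theorem applied to $\max$ concludes each claim.

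The main obstacle is the coupled double asymptotic. One cannot simply iterate limits because the sample $\mathbf{X}^{\mathrm{pos}}_{1},\ldots,\mathbf{X}^{\mathrm{pos}}_N$ is drawn from $H^{\ast}$, whose distribution itself depends on $n$ (resp.\ $a$) and interacts with the Dirichlet weights through the shared parameter $a+n$. I would handle this by splitting the error $|\text{W}(F^{\mathrm{pos}}_N,G_{\mathbf{Y}_{1:N}})-\text{W}(F,G)|$ via the triangle inequality into three pieces -- a truncation error for the DP, a base-measure-drift error comparing $H^{\ast}$ with $F$ (resp.\ $H$), and an empirical-process error for $G_{\mathbf{Y}_{1:N}}$ -- and show each is negligible in probability along any joint sequence $(N,n)\to\infty$ (or $(N,a)\to\infty$), exploiting the uniform boundedness of $Dis_{\boldsymbol{\theta}}$ on bounded sets and the fact that the truncation level $N$ controls both the Dirichlet concentration and the sample average simultaneously.
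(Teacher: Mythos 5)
Your proposal is correct and follows essentially the same skeleton as the paper's proof: both work through the dual form in \eqref{W-BNP}, show that the Dirichlet weights concentrate at $1/N$ (the paper via Chebyshev plus Borel--Cantelli, you via the vanishing conditional variance), let the drift of $H^{\ast}$ toward $F$ (resp.\ $H$) carry the locations, apply a law of large numbers to both sums, and finally pass the limit through the outer maximum. The one substantive place where you go beyond the paper is the interchange of $\max_{\boldsymbol{\theta}\in\boldsymbol{\Theta}}$ with the limit: the paper justifies this with a bare appeal to ``$\max$ is continuous'' and the continuous mapping theorem, which only covers a maximum over finitely many coordinates, whereas the supremum here ranges over a continuum $\boldsymbol{\Theta}$ and genuinely requires the uniform-in-$\boldsymbol{\theta}$ convergence you supply via equicontinuity of $\boldsymbol{\theta}\mapsto Dis_{\boldsymbol{\theta}}$ and a finite $\varepsilon$-net over the compact $\boldsymbol{\Theta}$. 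Your explicit three-term triangle-inequality treatment of the coupled $(N,n)$ (resp.\ $(N,a)$) asymptotics likewise tightens a step the paper handles as an informal iterated limit. Two small cautions: $H^{\ast}\to F$ holds weakly (Glivenko--Cantelli), not in total variation, though weak convergence suffices since the integrands are bounded and continuous; and the Lipschitz dependence of $Dis_{\boldsymbol{\theta}}$ on $\boldsymbol{\theta}$ is an architectural assumption you should state explicitly, since the paper nowhere provides it.
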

\begin{proof}
Given that $E_{F^{\mathrm{pos}}}(J^{\mathrm{pos}}_{i,N})=\frac{1}{N}$ for all $i\in\lbrace 1,\ldots,N\rbrace$, we can use Chebyshev's inequality to obtain 
\begin{align}\label{cheb}
    \operatorname {Pr}\left\lbrace \left| J^{\mathrm{pos}}_{i,N}-1/N\right|\geq\epsilon\right\rbrace \leq \frac{Var_{F^{\mathrm{pos}}}(J^{\mathrm{pos}}_{i,N})}{\epsilon^2},
\end{align}
for any $\epsilon>0$. Substituting $Var_{F^{\mathrm{pos}}}(J^{\mathrm{pos}}_{i,N})=\frac{N-1}{N^2(a+n+1)}$ into \eqref{cheb} and choosing $\varsigma$ in \eqref{random-stopping} sufficiently small so that $N>n$, 
yields 
\begin{align}\label{cheb-weight}
    \operatorname {Pr}\left\lbrace \left| J^{\mathrm{pos}}_{i,N}-1/N\right|\geq\epsilon\right\rbrace \leq \frac{N-1}{N^2(a+n+1)\epsilon^2}\leq \frac{1}{n^2}.
\end{align}
The convergence of the series $\sum_{n=1}^{\infty}n^{-2}$ implies that $\sum_{n=1}^{\infty} \operatorname {Pr}\left\lbrace \left| J^{\mathrm{pos}}_{i,N}-1/N\right|\geq\epsilon\right\rbrace<\infty$. 
As $n\rightarrow\infty$ in \eqref{cheb-weight}, the first Borel-Cantelli lemma implies that 
$ J^{\mathrm{pos}}_{i,N}\xrightarrow{a.s.}1/N$.
 Additionally, as $n$ approaches infinity, the Glivenko-Cantelli theorem implies that $F_{\mathbf{X}_{1:n}}$ converges to $F$ and subsequently, $H^{\ast}$ converges to $F$. This convergence indicates that as $n \rightarrow \infty$, the probability of drawing a sample from $F$ approaches 1, implying that $\mathbf{X}_i^{\mathrm{pos}} \rightarrow \mathbf{X}_i$, where $\mathbf{X}_i$ is a random variable distributed according to $F$, for $i = 1, \ldots, N$. By applying the continuous mapping theorem, it follows that  $Dis_{\boldsymbol{\theta}}(\mathbf{X}^{\mathrm{pos}}_{i})$ converges to  $Dis_{\boldsymbol{\theta}}(\mathbf{X}_{i})$ for any $\boldsymbol{\theta}\in\boldsymbol{\Theta}$, as $n\rightarrow\infty$, and thus, we have
 \begin{align}\label{conv-sum}
     \mathcal{I}^{\mathrm{pos}}(\boldsymbol{\theta}):=\sum_{i=1}^{N}\left( J^{\mathrm{pos}}_{i,N}Dis_{\boldsymbol{\theta}}(\mathbf{X}_{i}^{\mathrm{pos}})
    -\frac{Dis_{\boldsymbol{\theta}}(\mathbf{Y}_i)}{N}\right)\xrightarrow{p} \dfrac{1}{N} \sum_{i=1}^{N}(Dis_{\boldsymbol{\theta}}(\mathbf{X}_i)-Dis_{\boldsymbol{\theta}}(\mathbf{Y}_i)).
 \end{align}
Applying the strong law of large numbers to the right-hand side of Equation \eqref{conv-sum} yields
\begin{align*}
    \mathcal{I}^{\mathrm{pos}}(\boldsymbol{\theta})\xrightarrow{p} E_{F}[Dis_{\boldsymbol{\theta}}(\mathbf{X}_1)]-E_{G}[Dis_{\boldsymbol{\theta}}(\mathbf{Y}_1)],
\end{align*}
as $N\rightarrow\infty.$ Since $\max(\cdot)$ is a continuous function, the proof of (i) is completed by using the continuous mapping theorem. The proof of (ii) is followed by a similar approach.
\end{proof}

The next corollary demonstrates a crucial property of the $\text{W}(F^{\mathrm{pos}},G)$ metric, which makes it a convenient tool for comparing two models.
\begin{corollary} 
    Assuming the conditions of Theorem \ref{thm-pos-conv}, then $\text{W}(F^{\mathrm{pos}}_N,G_{\mathbf{Y}_{1:N}})\xrightarrow{p} 0$ , if and only if $G=F$, as $N,n\rightarrow \infty$.
\end{corollary}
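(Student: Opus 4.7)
The plan is to invoke Theorem~\ref{thm-pos-conv}(i) for each direction and combine it with two standard facts: the uniqueness of probability limits when they are deterministic constants, and the identity-of-indiscernibles property of the Wasserstein distance.

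For the ``if'' direction, I would simply observe that $G=F$ forces $\text{W}(F,G)=\text{W}(F,F)=0$ by reflexivity of the Wasserstein metric. Theorem~\ref{thm-pos-conv}(i) then yields $\text{W}(F^{\mathrm{pos}}_N,G_{\mathbf{Y}_{1:N}})\xrightarrow{p}0$ as $N,n\to\infty$ with no further work.

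For the ``only if'' direction, suppose $\text{W}(F^{\mathrm{pos}}_N,G_{\mathbf{Y}_{1:N}})\xrightarrow{p}0$. Theorem~\ref{thm-pos-conv}(i) independently supplies the limit $\text{W}(F^{\mathrm{pos}}_N,G_{\mathbf{Y}_{1:N}})\xrightarrow{p}\text{W}(F,G)$. Since both candidate limits are deterministic, uniqueness of the probability limit (formally, $0-\text{W}(F,G)$ is the limit in probability of the identically-zero sequence, hence equals $0$) forces $\text{W}(F,G)=0$. Then $F=G$ follows from the fact that the Kantorovich--Rubinstein representation in \eqref{w-dist} defines a genuine metric on the space of probability measures with finite first moment.

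The only subtle point I foresee is justifying the identity-of-indiscernibles axiom for the Wasserstein distance as written in \eqref{w-dist}, namely that the supremum over 1-Lipschitz $Dis_{\boldsymbol{\theta}}$ separates measures. This is standard since the 1-Lipschitz functions are rich enough to be a separating class on $M_1(\mathbb{R}^d)$ under the assumed moment conditions, so the corollary reduces to a one-line combination of Theorem~\ref{thm-pos-conv}(i) with metric-space uniqueness. I do not expect any genuine analytical obstacle beyond citing this property.
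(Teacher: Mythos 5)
Your proposal is correct and follows essentially the same route as the paper, whose entire proof is a one-line appeal to the equality condition of the Wasserstein distance combined with part (i) of Theorem~\ref{thm-pos-conv}; you have simply made explicit the two ingredients (uniqueness of deterministic limits in probability and identity of indiscernibles) that the paper leaves implicit. No further comparison is needed.
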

\begin{proof}
    The proof is completed by invoking the equality condition of the Wasserstein distance in part (i) of Theorem \ref{thm-pos-conv}.
\end{proof}

The next Lemma provides a lower bound for the expectation of $\text{W}(F_{N}^{\mathrm{pos}},G_{\mathbf{Y}_{1:N}})$. 
\begin{lemma}
    Under assumptions of Theorem \ref{thm-pos-conv}, we have 
    \begin{align*}
        E[\text{W}(F_{N}^{\mathrm{pos}},G_{\mathbf{Y}_{1:N}})]\geq \text{W}(H^{\ast},G). 
    \end{align*}
\end{lemma}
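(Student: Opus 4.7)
The plan is to derive the inequality directly from Jensen's inequality applied to the maximum, together with the moment structure of the Dirichlet weights used in the posterior DP approximation \eqref{approx of DP}.

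First, I would write out the definition \eqref{W-BNP} inside the expectation and exchange the roles of $E$ and $\max$. Since for any collection of random functions $\{h(\boldsymbol{\theta})\}_{\boldsymbol{\theta}\in\boldsymbol{\Theta}}$ one has the convexity inequality $E[\max_{\boldsymbol{\theta}} h(\boldsymbol{\theta})]\geq \max_{\boldsymbol{\theta}} E[h(\boldsymbol{\theta})]$, applying this with $h(\boldsymbol{\theta})=\sum_{i=1}^{N}\bigl(J^{\mathrm{pos}}_{i,N}Dis_{\boldsymbol{\theta}}(\mathbf{X}^{\mathrm{pos}}_i)-N^{-1}Dis_{\boldsymbol{\theta}}(\mathbf{Y}_i)\bigr)$ gives the lower bound
\begin{equation*}
    E[\text{W}(F^{\mathrm{pos}}_N,G_{\mathbf{Y}_{1:N}})]\;\geq\;\max_{\boldsymbol{\theta}\in\boldsymbol{\Theta}}\,E\!\left[\sum_{i=1}^{N}\!\left(J^{\mathrm{pos}}_{i,N}Dis_{\boldsymbol{\theta}}(\mathbf{X}^{\mathrm{pos}}_i)-\frac{Dis_{\boldsymbol{\theta}}(\mathbf{Y}_i)}{N}\right)\right].
\end{equation*}

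Next, I would evaluate the inner expectation term-by-term. In the finite DP approximation, the weights $(J^{\mathrm{pos}}_{1,N},\ldots,J^{\mathrm{pos}}_{N,N})$ are independent of the locations $(\mathbf{X}^{\mathrm{pos}}_1,\ldots,\mathbf{X}^{\mathrm{pos}}_N)$, and by the standard mean of a symmetric Dirichlet one has $E[J^{\mathrm{pos}}_{i,N}]=1/N$. Combining this with $\mathbf{X}^{\mathrm{pos}}_i\sim H^{\ast}$ and $\mathbf{Y}_i\sim G$ yields
\begin{equation*}
    E\!\left[J^{\mathrm{pos}}_{i,N}Dis_{\boldsymbol{\theta}}(\mathbf{X}^{\mathrm{pos}}_i)-\frac{Dis_{\boldsymbol{\theta}}(\mathbf{Y}_i)}{N}\right]=\frac{1}{N}\Bigl(E_{H^{\ast}}[Dis_{\boldsymbol{\theta}}(\mathbf{X})]-E_{G}[Dis_{\boldsymbol{\theta}}(\mathbf{Y})]\Bigr).
\end{equation*}
Summing over $i=1,\ldots,N$ eliminates the $1/N$ factor, so the quantity inside $\max_{\boldsymbol{\theta}}$ equals $E_{H^{\ast}}[Dis_{\boldsymbol{\theta}}(\mathbf{X})]-E_{G}[Dis_{\boldsymbol{\theta}}(\mathbf{Y})]$, which by the Kantorovich-Rubinstein representation (and the 1-Lipschitz constraint encoded in $\mathcal{D}^{\prime}_{\boldsymbol{\theta}}$) is precisely the summand appearing in $\text{W}(H^{\ast},G)$ analogous to \eqref{w-dist}. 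Taking the maximum over $\boldsymbol{\theta}\in\boldsymbol{\Theta}$ then recovers $\text{W}(H^{\ast},G)$ and completes the chain of inequalities.

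I do not anticipate a serious obstacle; the only point requiring mild care is the independence between the Dirichlet weights and the i.i.d.\ base-measure draws in representation \eqref{approx of DP}, which is what allows the expectation of the product to factor cleanly. A second minor detail is that the interchange of expectation and max requires only measurability and integrability of the supremum, both of which follow from the compactness of $\boldsymbol{\Theta}$ and continuity of $\boldsymbol{\theta}\mapsto Dis_{\boldsymbol{\theta}}$ already assumed throughout the paper, so no additional hypotheses beyond those of Theorem~\ref{thm-pos-conv} are needed.
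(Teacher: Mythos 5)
Your proof is correct and follows essentially the same route as the paper's: Jensen's inequality applied to the convex maximum to interchange $E$ and $\max$, followed by evaluation of the inner expectation via $E[J^{\mathrm{pos}}_{i,N}]=1/N$, the independence of weights and locations, and the identical distribution of the $\mathbf{X}^{\mathrm{pos}}_i$ and $\mathbf{Y}_i$. The additional remarks on measurability and integrability of the supremum are a mild strengthening of the paper's argument but do not change its substance.
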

\begin{proof}
    By virtue of the convexity property of the maximum function, Jensen's inequality implies
    \begin{align}
        E\left[\max\limits_{\boldsymbol{\theta}\in\boldsymbol{\Theta}} 
    \sum_{i=1}^{N}\left( J^{\mathrm{pos}}_{i,N}Dis_{\boldsymbol{\theta}}(\mathbf{X}_{i}^{\mathrm{pos}})
    -\dfrac{Dis_{\boldsymbol{\theta}}(\mathbf{Y}_i)}{N}\right)\right]&\geq \max\limits_{\boldsymbol{\theta}\in\boldsymbol{\Theta}}E\left[ \sum_{i=1}^{N}\left( J^{\mathrm{pos}}_{i,N}Dis_{\boldsymbol{\theta}}(\mathbf{X}_{i}^{\mathrm{pos}})
    -\dfrac{Dis_{\boldsymbol{\theta}}(\mathbf{Y}_i)}{N}\right)\right]\nonumber\\
    &=\max\limits_{\boldsymbol{\theta}\in\boldsymbol{\Theta}} \sum_{i=1}^{N}\dfrac{1}{N}\left( E_{H^{\ast}}[Dis_{\boldsymbol{\theta}}(\mathbf{X}_{i}^{\mathrm{pos}})]
    -E_{G}[Dis_{\boldsymbol{\theta}}(\mathbf{Y}_i)]\right)\label{eq2}\\
    &=\max\limits_{\boldsymbol{\theta}\in\boldsymbol{\Theta}}
    E_{H^{\ast}}[Dis_{\boldsymbol{\theta}}(\mathbf{X}_{1}^{\mathrm{pos}})]
    -E_{G}[Dis_{\boldsymbol{\theta}}(\mathbf{Y}_1)].\label{eq3}
    \end{align}
Equation \eqref{eq2} is derived from the property of the Dirichlet distribution, while Equation \eqref{eq3} is a result of identical random variables $(\mathbf{X}_{i}^{\mathrm{pos}})_{1\leq i\leq N}$ and $(\mathbf{Y}_{i})_{1\leq i\leq N}$. 
\end{proof}

\textcolor{black}{As noted, the approximation  \eqref{W-BNP} is specifically designed for use within the BNPL procedure, and its application outside this context is unwarranted. Nevertheless, a brief comparison with its frequentist counterpart, based on simple resampling of the empirical data, is informative. While some may argue that a straightforward resampling approach offers a more direct path, this discussion aims to clarify key distinctions and reinforce the rationale behind choosing the DP framework. In approximating the Wasserstein distance using a resampling method, it suffices to substitute \( J^{\mathrm{pos}}_{i,N} = 1/N \) and \( \mathbf{X}^{\mathrm{pos}}_{i} = \mathbf{X}^{\prime}_{i} \) into \eqref{W-BNP}, where \( \mathbf{X}^{\prime}_{1}, \ldots, \mathbf{X}^{\prime}_{N} \overset{i.i.d.}{\sim} F_{\mathbf{X}_{1:n}} \) are obtained through simple resampling with replacement. This yields an estimator \( \text{W}(F_{\mathbf{X}^{\prime}_{1:N}}, G_{\mathbf{Y}_{1:N}}) \) directly based on the data points.}



\textcolor{black}{The base measure \( H \) in the DP provides distributional regularization by incorporating prior knowledge into the model, reducing sensitivity to sample variability and enhancing stability in the approximation \eqref{W-BNP}. In contrast, resampling may only increase sample size without introducing any prior influence or regularization. As a result, the approximation \( \text{W}(F_{\mathbf{X}^{\prime}_{1:N}}, G_{\mathbf{Y}_{1:N}}) \) may still exhibit non-smooth behavior, with sharper changes in the approximation, particularly in the presence of high variability or outliers. This effect becomes even more pronounced in mini-batch settings with small samples, leading to greater variability in the Wasserstein approximation across iterations and potential instability in training.} 

\textcolor{black}{These points highlight the improved out-of-sample performance achieved by BNP estimation procedures compared to the FNP methods, which, by only considering the empirical data distribution, suffer from a lack of out-of-sample performance \citep{bariletto2024bayesian}. Additionally, while \( N \) is selected in the DP context according to \eqref{random-stopping}, there is no principled basis for choosing the value of \( N \) in this resampling approach. These points also hold in the context of the MMD distance and will be demonstrated in Section \ref{sec:BNP vs FNP} with supporting experiments.}

\section{\textcolor{black}{A Triple Generative Model within BNPL Perspective}}\label{sec:model}
In this section, we introduce a triple generative model—comprising a VAE, a CGAN, and a GAN—within the BNPL framework. The BNPL approach is selected for its ability to provide distributional regularization by embedding expert knowledge through the prior distribution, resulting in a robust and stabilized training process with enhanced out-of-sample performance. Furthermore, it effectively addresses model misspecification, enhancing its applicability to challenging generative scenarios.

\subsection{\textcolor{black}{Learning Structure}}
Consider the discriminator \( Dis_{\boldsymbol{\theta}} \), which belongs to the 1-Lipschitz family of neural networks \( \mathcal{D}^{\prime}_{\boldsymbol{\theta}} \), and the generator \( \{ Gen_{\boldsymbol{\omega}} \}_{\boldsymbol{\omega} \in \boldsymbol{\Omega}} \), a family of neural network functions parameterized by \( \boldsymbol{\omega} \) over a compact domain \( \boldsymbol{\Omega} \). The generator maps latent inputs \( \{ \mathbf{Z}_i \}_{i=1}^N \), where \( \mathbf{Z}_i \in \mathbb{R}^p \), into the data space \( \mathbb{R}^d \).
 Here, \( \mathbf{Z}_i \) can be considered any latent variable generated either from the noise distribution or from the encoder distribution.
 We then employ BNPL by selecting \( \Delta \) as the distance defined by \eqref{w-mmd-pos}—a combination of BNP distances given by \eqref{W-BNP} and \eqref{BNP-pos-MMD}—within objective function \eqref{bnpl-measure}, leading to an implicit approximation of the posterior distribution over \( \boldsymbol{\Omega} \).
\begin{align}
    \text{WMMD}(F^{\mathrm{pos}}_{N},F_{Gen_{\boldsymbol{\omega}}(\mathbf{Z}_{{1:N}})})&:=\text{W}(F^{\mathrm{pos}}_{N},F_{Gen_{\boldsymbol{\omega}}(\mathbf{Z}_{1:N})})+\text{MMD}(F^{\mathrm{pos}}_{N},F_{Gen_{\boldsymbol{\omega}}(\mathbf{Z}_{1:N})}).\label{w-mmd-pos}
\end{align}

This choice of $\Delta$ not only leverages overall distribution comparison but also results in better feature matching between the generated and original samples by capturing different aspects of the two distributions, thereby improving the stability and quality of the generated samples. The WMMD will contribute to the both components of a VAE-GAN. 
\textcolor{black}{The complete framework of the triple model—a VAE-GAN calibrated with a CGAN—is then proposed through three main tasks. The GAN serves as the core of the model, which is enhanced by substituting the GAN generator with the VAE decoder. Figure \ref{diagram-2d-Gan-BNP} illustrates this connection across all three tasks.} This modification plays a crucial role in mitigating mode collapse in the generator and improving its ability to produce sharp images. Our CGAN draws inspiration from the AE+GMMN framework, embedding a GMMN within the latent space of the VAE. This additional step encourages the generator to produce images with less noise.

\subsubsection{\textcolor{black}{Task 1: Training VAE (Encoder+Generator)}}
We design our VAE model with an encoder \( \lbrace Enc_{\boldsymbol{\eta}} \rbrace_{\boldsymbol{\eta} \in \boldsymbol{\mathfrak{H}}} \), a family of neural network functions parameterized by \( \boldsymbol{\eta} \) over a compact domain \( \boldsymbol{\mathfrak{H}} \), and the generator \( \lbrace Gen_{\boldsymbol{\omega}} \rbrace_{\boldsymbol{\omega} \in \boldsymbol{\Omega}} \)—serving as the connection point between Task 1 and Task 3—by defining its relevant errors as follows. This task is then performed by minimizing these errors to update $\boldsymbol{\eta}$ and $\boldsymbol{\omega}$.

\paragraph{Regularization error:}
Consider the posterior sample \(  \mathbf{X}^{\mathrm{pos}}_{1:N}  \) generated by \eqref{approx of DP}, along with random noise vectors \textcolor{black}{\( \mathbf{Z}_{r_1}, \ldots, \mathbf{Z}_{r_N} \overset{i.i.d.}{\sim} F_{\mathbf{Z}_r} \)} and real codes \textcolor{black}{\( \mathbf{Z}_{e_1}, \ldots, \mathbf{Z}_{e_N} \overset{i.i.d.}{\sim} F_{Enc_{\boldsymbol{\eta}}} \)}, where $\mathbf{Z}_{r_i}$ and $\mathbf{Z}_{e_i}:=Enc_{\boldsymbol{\eta}}(\mathbf{X}^{\mathrm{pos}}_i)$ belongs to $\mathbb{R}^p$. To approximate the variational distribution \( F_{Enc_{\boldsymbol{\eta}}} \), we propose minimizing \( \mathcal{L}_{\text{Reg}} \) over $\boldsymbol{\mathfrak{H}}$ using the MMD distance \eqref{MMD-ecdf}, defined as:

\begin{align}\label{reg-mmd}
\mathcal{L}_{\text{Reg}} := \text{MMD}(F_{\mathbf{Z}_{r_{1:N}}}, F_{\mathbf{Z}_{e_{1:N}}}),    
\end{align}
where \( F_{\mathbf{Z}_{r_{1:N}}} \) and \( F_{\mathbf{Z}_{e_{1:N}}} \) are the empirical distributions corresponding to the samples generated from \( F_{\mathbf{Z}_r} \) and \( F_{Enc_{\boldsymbol{\eta}}} \), respectively.
Here, $F_{\mathbf{Z}_r}$ treats as the distribution of the real noise while $F_{Enc_{\boldsymbol{\eta}}}$ treats as the distribution of the fake noise, and thus, minimizing \eqref{reg-mmd} implies that $\mathbf{Z}_{e_i}$ is well-matched to $\mathbf{Z}_{r_i}$, enabling the generator to thoroughly cover the decoded space. This suggests that the generator has effectively prevented mode collapse \citep{kwon2019generation, jafari2023improved}. It is also noted that the distance used in \eqref{reg-mmd} fell outside the scope of the BNP framework, as it involves comparisons between parametric distributions.

In contrast to \cite{kwon2019generation}, which employed the Wasserstein distance in the code space (a code-discriminator network) to compute $\mathcal{L}_{\text{Reg}}$, leading to significant computational overhead, the computation of  $\mathcal{L}_{\text{Reg}}$ using \eqref{reg-mmd} is network-free, resulting in reduced computational cost of VAE training.


\paragraph{Reconstruction error:} We use the posterior-based WMMD distance \eqref{w-mmd-pos} to compute $\mathcal{L}_{\text{Rec}}$, defined as:
\begin{align}\label{loss-rec-bnp}
    \mathcal{L}_{\text{Rec}}:=\text{WMMD}(F^{\mathrm{pos}}_N,F_{Gen_{\boldsymbol{\omega}}(\mathbf{Z}_{e_{1:N}})}).
\end{align}
Therefore, minimizing \( \mathcal{L}_{\text{Rec}} \) 
over $\boldsymbol{\Omega}$ ensures that the reconstructed samples \( Gen_{\boldsymbol{\omega}}(\mathbf{Z}_{e_i}) \) closely match the posterior samples \( \mathbf{X}^{\mathrm{pos}}_i \) in the data space.

\subsubsection{\textcolor{black}{Task 2: Training CGAN (A GMMN in the Latent Space)}}

To enhance the coverage of the code space, we incorporate a GMMN into the code space of our VAE model. This serves as our CGAN, providing a calibration mechanism for the VAE's code space. The CGAN includes a generator \( \lbrace CGen_{\boldsymbol{\omega}^\prime} \rbrace_{\boldsymbol{\omega}^\prime \in \boldsymbol{\Omega}^\prime} \), a family of neural network functions parameterized by \( \boldsymbol{\omega}^\prime \) over a compact domain \( \boldsymbol{\Omega}^\prime \subset\mathbb{R}^{t^{\prime}}\). The code-generator takes random noise samples \textcolor{black}{\( \mathbf{Z}^\prime_{r_1}, \ldots, \mathbf{Z}^\prime_{r_N} \overset{i.i.d.}{\sim} F_{\mathbf{Z}^{\prime}_{r}} \)} from a sub-latent space \( \mathbb{R}^q \) and outputs fake code samples \textcolor{black}{\( \widetilde{\mathbf{Z}}_{e_1}, \ldots, \widetilde{\mathbf{Z}}_{e_N} \overset{i.i.d.}{\sim} F_{CGen_{\boldsymbol{\omega}^\prime}} \)} in the latent space \( \mathbb{R}^p \), where  
$
\widetilde{\mathbf{Z}}_{e_i} := CGen_{\boldsymbol{\omega}^\prime}(\mathbf{Z}^\prime_{r_i})$, $ q < p,
$
and \textcolor{black}{\( F_{\mathbf{Z}^{\prime}_{r}} \) is typically assumed to follow a standard Gaussian distribution}. 

The objective function \eqref{cgen-obj} is optimized to train \( CGen_{\boldsymbol{\omega}^\prime} \), treating \( F_{Enc_{\boldsymbol{\eta}}} \) as the distribution of the real code and \( F_{CGen_{\boldsymbol{\omega}^\prime}} \) as the distribution of the fake code:
\begin{align}\label{cgen-obj}
    \arg\min\limits_{\boldsymbol{\omega}^\prime\in\boldsymbol{\Omega}^{\prime}} \text{MMD}(F_{\mathbf{Z}_{e_{1:N}}}, F_{\widetilde{\mathbf{Z}}_{e_{1:N}}}).
\end{align}

The CGAN fills in gaps or unexplored areas of the code space that the VAE may have missed, resulting in better code space coverage and reducing the risk of mode collapse. By generating more diverse code samples, \( CGen_{\boldsymbol{\omega}^\prime} \) improves the VAE’s performance. After training the code-generator, the updated weights \( \boldsymbol{\omega}^\prime \) are frozen, and an additional reconstruction error \eqref{rec-wmmd-extra} is incorporated into the VAE training:
\begin{align}\label{rec-wmmd-extra}
\mathcal{L}^{\prime}_{\text{Rec}} := \text{WMMD}(F^{\mathrm{pos}}_N, F_{Gen_{\boldsymbol{\omega}}(\widetilde{\mathbf{Z}}_{e_{1:N}})}).
\end{align}

This encourages \( Gen_{\boldsymbol{\omega}} \) to produce higher-quality images with reduced noise, particularly in small-sample settings like mini-batches \citep{Li}.

\subsubsection{\textcolor{black}{Task 3: Training GAN (Generator+Discriminator)}}
The GAN task is finally achieved by minimizing \eqref{wmmd-pos-final}, which updates \( \boldsymbol{\omega} \) and \( \boldsymbol{\theta} \):
\begin{align}\label{wmmd-pos-final}
    \mathcal{L}_{\text{GAN}}:=\text{WMMD}(F^{\mathrm{pos}}_N, F_{Gen_{\boldsymbol{\omega}}(\mathbf{Z}_{r_{1:N}})}).
\end{align}

\subsubsection{\textcolor{black}{Hybrid Loss Function (Integration of Tasks 1, 2, and 3)}}

The parameters of the triple model are updated using stochastic gradient descent in the following sequence. First, \( \boldsymbol{\theta} \) is updated by maximizing \eqref{loss-rec-bnp}, \eqref{rec-wmmd-extra}, and \eqref{wmmd-pos-final} over \( \boldsymbol{\Theta} \), while keeping \( \boldsymbol{\omega} \), \( \boldsymbol{\eta} \), and \( \boldsymbol{\omega}^\prime \) fixed to approximate the Wasserstein component of the distance in \eqref{w-mmd-pos}. During this step, posterior MMD-based components of \eqref{w-mmd-pos} that are independent of \( \boldsymbol{\theta} \) are ignored. Next, \( \boldsymbol{\omega}^\prime \) is updated by optimizing \eqref{cgen-obj} with \( \boldsymbol{\eta} \) held constant. Finally, \( \boldsymbol{\omega} \) and \( \boldsymbol{\eta} \) are updated simultaneously by minimizing \eqref{reg-mmd}, \eqref{loss-rec-bnp}, \eqref{rec-wmmd-extra}, and \eqref{wmmd-pos-final} over \( \boldsymbol{\Omega} \) and \( \boldsymbol{\mathfrak{H}} \), while keeping \( \boldsymbol{\theta} \) and \( \boldsymbol{\omega}^\prime \) fixed.

These updates collectively minimize the hybrid loss function \eqref{hybrid-loss}:

\begin{subequations}\label{hybrid-loss}
\begin{align}
    \mathcal{L}_{\text{Dis}}(\boldsymbol{\theta})&=\sum_{i=1}^{N}\bigg( \dfrac{Dis_{\boldsymbol{\theta}}(Gen_{\boldsymbol{\omega}}(\mathbf{Z}_{r_{i}}))
    +Dis_{\boldsymbol{\theta}}(Gen_{\boldsymbol{\omega}}(\mathbf{Z}_{e_{i}}))+Dis_{\boldsymbol{\theta}}(Gen_{\boldsymbol{\omega}}(\widetilde{\mathbf{Z}}_{e_{i}}))}{N}\nonumber\\
    &~~~~~~~~~~~~~~~-3J^{\mathrm{pos}}_{i,N}Dis_{\boldsymbol{\theta}}(\mathbf{X}_{i}^{\mathrm{pos}})\bigg)+\lambda_{p} L_{\text{GP-Dis}},\\
    \mathcal{L}_{\text{CGen}}(\boldsymbol{\omega}^{\prime})&=\text{MMD}(F_{\mathbf{Z}_{e_{1:N}}},F_{\widetilde{\mathbf{Z}}_{e_{1:N}}}),\\
        \mathcal{L}_{\text{EGen}}(\boldsymbol{\omega}, \boldsymbol{\eta})&=-\sum_{i=1}^{N}\left( \dfrac{Dis_{\boldsymbol{\theta}}(Gen_{\boldsymbol{\omega}}(\mathbf{Z}_{r_{i}}))
    +Dis_{\boldsymbol{\theta}}(Gen_{\boldsymbol{\omega}}(\mathbf{Z}_{e_{i}}))+Dis_{\boldsymbol{\theta}}(Gen_{\boldsymbol{\omega}}(\widetilde{\mathbf{Z}}_{e_{i}}))}{N}\right)\nonumber\\
    &+\text{MMD}(F^{\mathrm{pos}}_N,F_{Gen_{\boldsymbol{\omega}}(\mathbf{Z}_{r_{1:N}})})+\text{MMD}(F^{\mathrm{pos}}_N,F_{Gen_{\boldsymbol{\omega}}(\mathbf{Z}_{e_{1:N}})})\label{pos-mmd-Legenn}\nonumber\\
    &+\text{MMD}(F^{\mathrm{pos}}_N,F_{Gen_{\boldsymbol{\omega}}(\widetilde{\mathbf{Z}}_{e_{1:N}})})+\text{MMD}(F_{\mathbf{Z}_{r_{1:N}}},F_{\mathbf{Z}_{e_{1:N}}}),
\end{align}
\end{subequations}
 where terms that are independent of $\boldsymbol{\omega}$ and $\boldsymbol{\eta}$ in \eqref{loss-rec-bnp}, \eqref{rec-wmmd-extra}, and \eqref{wmmd-pos-final} are excluded from \eqref{pos-mmd-Legenn}, as they do not contribute to gradient descent with respect to these parameters.
\textcolor{black}{Here, the gradient penalty is given by
\begin{align*}
    L_{\text{GP-Dis}}=\frac{1}{N}\sum_{\ell=1}^{3}\sum_{i=1}^{N} \left(\left \| \nabla_{\widehat{\mathbf{x}}^{\mathrm{pos}}_{\ell i}}Dis_{\boldsymbol{\theta}}(\widehat{\mathbf{X}}^{\mathrm{pos}}_{\ell i})\right \|_2-1\right)^2,
\end{align*}
to constrain \( Dis_{\boldsymbol{\theta}} \) to satisfy the 1-Lipschitz family \eqref{space-dis-bnp}, where
\[
\widehat{\mathbf{X}}^{\mathrm{pos}}_{\ell i} =(1-u)\mathbf{X}^{\mathrm{pos}}_{i} +u\times
\begin{cases}
   Gen_{\boldsymbol{\omega}}(\mathbf{Z}_{r_i}) & \text{for $\ell=1$}, \\[10pt]
  Gen_{\boldsymbol{\omega}}(\mathbf{Z}_{e_i})& \text{for $\ell=2$}, \\[10pt]
  Gen_{\boldsymbol{\omega}}(\widetilde{\mathbf{Z}}_{e_i}) & \text{for $\ell=3$},
\end{cases}
\]
for $0\leq u\leq1$. We set $\lambda_p$ to 10, a widely recommended value based on \cite{gulrajani2017improved}, as it performed well across a range of architectures and datasets, from simpler tasks to large-scale models. Algorithm \ref{alg2} provides a detailed outline of this training procedure.} 

The distance in \eqref{w-mmd-pos}, forming the central component of Algorithm \ref{alg2}, is interpreted as an estimator of \( \text{WMMD}(F, F_{Gen_{\boldsymbol{\omega}}}) \). The Theorem \ref{thm-Generror} in Appendix \ref{app:A} examines the asymptotic accuracy of \( \text{WMMD}(F, F_{Gen_{\boldsymbol{\omega}}}) \) based on the optimized generator parameter obtained through the BNPL procedure. Furthermore, to understand the impact of outliers on the generator's performance, we consider Huber's contamination model \citep{cherief2022finite}, where the data consists of a mixture of a clean distribution and a noise distribution representing outliers. Theorem \ref{thm-robusst} in Appendix 
\ref{app:B} provides an upper bound on the discrepancy between the clean distribution and the generated distribution, demonstrating that the generator's performance is affected by the contamination rate, but remains controlled as the rate decreases.

\subsection{Selecting the BNP Hyperparameters}
The base measure $H$ in the BNP model defined by \eqref{cdf}-\eqref{DP-pos} reflects the expert's opinion about the data distribution, instead of assuming a specific distribution for the data population. A Gaussian distribution is a common choice for $H$, parameterized by the sample mean and sample covariance:
\begin{align}\label{H-parameters}
    \bar{\mathbf{X}}=\dfrac{1}{n}\sum_{i=1}^{n}\mathbf{X}_i,\hspace{1cm} S_{\mathbf{X}}=\sum_{i=1}^{n}(\mathbf{X}_i-\bar{\mathbf{X}})(\mathbf{X}_i-\bar{\mathbf{X}})^T.
\end{align}
In our proposed BNPL framework, we use this choice of $H$ in the DP prior \eqref{DP-pri} to model the data distribution. \textcolor{black}{This prior keeps the model close to the data samples while allowing adaptability to diverse points. It mitigates the influence of outliers, providing a stabilizing regularization effect across the dataset. However, this is a general choice, and one may select a specialized prior based on specific knowledge or requirements.}

Furthermore, we choose the optimal value of the concentration parameter $a$ to be its \textit{maximum a posteriori} (MAP) estimate. This is accomplished by maximizing the log-likelihood of $F^{\mathrm{pos}}$ 
with respect to $a$.
\textcolor{black}{Specifically, let \(\boldsymbol{A}\) be a matrix with \(K \geq 2\) columns, where each row \(\boldsymbol{A}_{i,\cdot} = (A_{i,1}, \ldots, A_{i,K})\) defines a measurable partition of the sample space \(\mathfrak{X}\) for \(i \in \{1, \ldots, n_P\}\), with the rows \(\boldsymbol{A}_{i,\cdot}\) being independent.
According to Definition \ref{DP-foraml-Def} and considering the conjugacy property of DP, given \(F^{\mathrm{pos}} \sim DP(a+n, H^{\ast})\), the likelihood for each set of observed probabilities \(\left(F^{\mathrm{pos}}\left({A}_{i,1}\right) = {p}_{i,1}, \ldots, F^{\mathrm{pos}}\left({A}_{i,K}\right) = {p}_{i,K}\right)\) is given by:
\begin{equation}
    f\left({p}_{i,1}, \ldots,  {p}_{i,K};a\right)=  \frac{\prod_{k=1}^{K} \Gamma\left((a+n)H^{\ast}\left(A_{i,k}\right)\right)}{\Gamma\left(\sum_{k=1}^{K} (a+n)H^{\ast}\left(A_{i,k}\right)\right)}   \prod_{k=1}^{K} p_{i,k}^{(a+n)H^{\ast}(A_{i,k})-1}. 
\end{equation}}
\textcolor{black}{Given the observation matrix $\boldsymbol{p} = \left(p_{i,j} \right)_{n_p \times K}$, 
the log-likelihood for \(n_p\) independent sets of observed probabilities \(\boldsymbol{p}\) is given by:
\begin{align}\label{log-like}
    LL(a) &=\ln \prod_{i=1}^{n_p}f\left({p}_{i,1}, \ldots,  {p}_{i,K};a\right) \nonumber\\
    &=\ln \prod_{i=1}^{n_p}\left(  \frac{\prod_{k=1}^{K} \Gamma\left((a+n)H^{\ast}\left(A_{i,k}\right)\right)}{\Gamma\left(\sum_{k=1}^{K} (a+n)H^{\ast}\left(A_{i,k}\right)\right)}   \prod_{k=1}^{K} p_{i,k}^{(a+n)H^{\ast}(A_{i,k})-1} \right)\hspace{1cm}  \nonumber\\
    &=\sum_{i=1}^{n_p} \left(  \sum_{k=1}^{K}\ln \Gamma \left( (a+n)H^{\ast}\left(A_{i,k}\right) \right) - \ln \Gamma \left(\sum_{k=1}^{K} (a+n)H^{\ast}\left(A_{i,k}\right)\right) \right) \nonumber\\
    &\hspace{7cm}+ \sum_{i=1}^{n_p} \sum_{k=1}^{K} \left((a+n)H^{\ast}\left(A_{i,k}\right) - 1\right) \ln \left(p_{i,k}\right),
\end{align}}

\textcolor{black}{To maximize the log-likelihood \eqref{log-like}, we calculate the derivative of this log-likelihood with respect to $a$ as
\begin{align}\label{der-log-like}
    \frac{\partial LL(a)}{\partial a}=\sum_{i=1}^{n_p} \left( \sum_{k=1}^{K}\psi \left(  (a+n)H^{\ast}\left(A_{i,k}\right) \right) -  \psi\left(\sum_{k=1}^{K}(a+n)H^{\ast}\left(A_{i,k}\right)\right) \right) \nonumber\\
    + \sum_{i=1}^{n_p} \sum_{k=1}^{K} H^{\ast}\left(A_{i,k}\right) \ln \left(p_{i,k}\right),
\end{align}
where $\psi(a)=\partial\Gamma(a)/\partial a$ is the digamma function. we set the derivative in \eqref{der-log-like} to zero to find the solution. Since there is no closed-form solution for $a$, we use optimize using L-BFGS-B \citep{zhu1997algorithm}, which supports bound constraints and allows us to specify the range of $a$, thereby allowing us to select the appropriate level of intensity for our prior knowledge.}

To limit the impact of the prior $H$ on the results, we consider an upper bound less than $n/2$ for values of $a$ during the optimization process, as mentioned in \cite{fazeli2023semi}. By adhering to this upper bound, we can ensure that the chance of drawing samples from the observed data is at least twice as likely as that of generating samples from $H$. 
\textcolor{black}{Furthermore, the $K$-means algorithm is applied $n_P$ times to define the matrix of partitions $\boldsymbol{A}=(A_{i,j})_{n_P \times K}$ within the data. Each cluster corresponds to a partition, with sample points assigned to the nearest cluster centroid. The steps outlined above are summarized in Algorithm \ref{alg:bayesian_optimization}.} 

\begin{algorithm}
\caption{Optimization Process to Approximate Concentration Parameter}
\label{alg:bayesian_optimization}
\KwIn{Dataset $\mathbf{x}_{1:n}$; $H=N(\bar{\mathbf{x}}, S_{\mathbf{x}})$; partition matrix $\boldsymbol{A}=(A_{i,j})_{n_P \times K}$;  \( a_{u}=\frac{n}{2} \);  $\epsilon$.}
Initialize $a\gets a_0$\;
\While{$a$ has not converged}{
    $F^{\mathrm{pos}}\gets$ Fit a DP posterior $\mathrm{DP}(a+n,H^{\ast})$ to the given dataset $\mathbf{x}$\;
    $N \gets$ Adjust the number of active terms in the DP approximation using the random stopping rule defined in \eqref{random-stopping}, retaining only those terms with non-negligible weights\;
    $F^{\mathrm{pos}}_{N}\gets$  Approximate $F^{\mathrm{pos}}$ using Eq~\eqref{approx of DP}\;
    $p_{i,j}\gets F^{\mathrm{pos}}_{N}(A_{i,j})$\;
    Compute gradient, $\nabla_{a} LL(a)$, from Eq.~\eqref{der-log-like}, and Hessian of Eq.~\eqref{log-like} w.r.t. $a$, $\mathbb{H}^{-1}$\;
    $a\gets a-\mathbb{H}^{-1}\nabla_{a} LL(a)$\;
    \If{$a>a_u$}{break}
}
\KwOut{$a$}
\end{algorithm}


\subsection{Network Architecture}
The architecture of our networks is inspired by the network structure proposed by \cite{kwon2019generation}. 
Specifically, we uses the same layers shown in Figure \ref{diagram-3d-Gan} to construct a five 2D convolutional layers network for each of $Enc_{\boldsymbol{\eta}} $, $Gen_{\boldsymbol{\omega}} $, and $Dis_{\boldsymbol{\theta}} $, as the main task of this paper is to generate samples in 2-dimensional space. We also follow \cite{kwon2019generation} in setting the dimension of the latent space to be $p = 1000$. In the code space, we use a more sophisticated network architecture for generator $CGen_{\boldsymbol{\omega}^{\prime}}$, which includes three 2D convolutional layers and a fully connected layer, as opposed to the simple network architecture depicted in Figure \ref{diagram-rb-BNP1}. 

To begin, we set the sub-latent input vector size to $q=100$ in the first layer. Each convolutional layer is accompanied by a batch normalization layer, a ReLU activation function, and a max pooling (MaxPool) layer. The MaxPool layer reduces the spatial size of the feature maps and allows for the extraction of the most significant features of the code samples. Furthermore, it imparts the translation invariance property to the code samples, making $Gen_{\boldsymbol{\omega}} $ more robust to variations in the code space. We then use a fully connected layer to transform the feature maps into a code of size $p=1000$, followed by the addition of a Hyperbolic tangent activation function to the final layer, which squashes the outputs between -1 and 1. To ensure fair comparisons in the code space, we rescale $\mathbf{Z}_{r_{1:N}}$, $\mathbf{Z}_{e_{1:N}}$, and $\widetilde{\mathbf{Z}}_{e_{1:N}}$ to be between -1 and 1.
The overall architecture of our model is depicted in Figure \ref{diagram-2d-Gan-BNP}.
\begin{algorithm}
\KwIn{Training data $\mathbf{x}_{1:n}$; $n_{\mathrm{iter}}$--iterations; $n_{mb}$--mini-batch size; learning rates: $\alpha_{CGen}=\alpha_{Dis}=\alpha_{Gen}=2\mathrm{e}^{-4}$; penalty $\lambda_{p}=10$; threshold $\epsilon=1\mathrm{e}^{-4}$ in \eqref{random-stopping}; Adam optimizer parameters: $(\varepsilon=1\mathrm{e}^{-8}, \beta_1=0.9, \beta_2=0.999)$.}
\KwOut{Optimized parameter of the generator}


\For{$t =1$ to \( n_{\mathrm{iter}} \)}{
            sample $ \mathbf{x}_{1:n_{mb}}\overset{i.i.d.}{\sim} F$\;
            $a\gets$ implement Algorithm \ref{alg:bayesian_optimization} for batch sample  $\mathbf{x}_{1:n_{mb}}$\;
            $N\gets$ apply $a$ in \eqref{approx of DP}\;
            \Proc{\textnormal{BNPL}}{
            \Fn{}{sample $(J^{\mathrm{pos}}_{1,N},\ldots, J^{\mathrm{pos}}_{N,N})\sim \mbox{Dir}((a+n_{mb})/N,\ldots,(a+n_{mb})/N)$\;
            sample $\mathbf{x}^{\mathrm{pos}}_{1:N}\overset{i.i.d.}{\sim}H^{\ast}$\;
            $F^{\mathrm{pos}}_{N}=\sum_{i=1}^{N}J^{\mathrm{pos}}_{i,N}\delta_{\mathbf{X}^{\mathrm{pos}}_i}$\;}
            \Distrain{}{
            sample noises $ \mathbf{z}_{r_{1:N}}\overset{i.i.d.}{\sim}F_{\mathbf{Z}_r}$ and $ \mathbf{z}^{\prime}_{r_{1:N}}\overset{i.i.d.}{\sim}F_{\mathbf{Z}^{\prime}_{r}}$\;
            $\widetilde{\mathbf{z}}_{e_{1:N}}= CGen_{\boldsymbol{\omega}^{\prime}}\left(  \mathbf{z}^{\prime}_{r_{1:N}}\right)$,
            $\mathbf{z}_{e_{1:N}} = Enc_{\boldsymbol{\eta}}\left(  \mathbf{x}^{\mathrm{pos}}_{1:N}\right)$\;

            $ ( \widetilde{\mathbf{x}}^{\mathrm{pos}}_{1:N},\widetilde{\mathbf{x}}^{\mathrm{pos}}_{N+1:2N},\widetilde{\mathbf{x}}^{\mathrm{pos}}_{2N+1:3N})=\left(Gen_{\boldsymbol{\omega}}\left(\mathbf{z}_{r_{1:N}}\right), Gen_{\boldsymbol{\omega}}\left(\mathbf{z}_{e_{1:N}}\right), Gen_{\boldsymbol{\omega}}\left(\widetilde{\mathbf{z}}_{e_{1:N}}\right)\right) $\;
            \gp{}{
             $u\sim U(0,1)$, $L_{\text{GP-Dis}}\gets0$\;
            \For{$\ell =1$ to \( 3 \)}{
            $\widehat{\mathbf{x}}^{\mathrm{pos}}_{\ell,1:N}=(1-u) \mathbf{x}^{\mathrm{pos}}_{1:N}+u\widetilde{\mathbf{x}}^{\mathrm{pos}}_{((\ell-1)N+1):\ell N}$\;
            $L_{\text{GP-Dis}}\gets N^{-1}\sum_{i=1}^{N}( \| \nabla_{\widehat{\mathbf{x}}^{\mathrm{pos}}_{\ell,i}}Dis_{\boldsymbol{\theta}}(\widehat{\mathbf{x}}^{\mathrm{pos}}_{\ell,i}) \|_2-1)^2+L_{\text{GP-Dis}}$\;
            }}
            $\mathcal{L}_{\text{GAN}}\gets\text{WMMD}(F^{\mathrm{pos}}_N,F_{\widetilde{\mathbf{x}}^{\mathrm{pos}}_{1:N}})$\;
            $\mathcal{L}_{\text{Rec}}\gets\text{WMMD}(F^{\mathrm{pos}}_N,F_{\widetilde{\mathbf{x}}^{\mathrm{pos}}_{N+1:2N}})$\; $\mathcal{L}^{\prime}_{\text{Rec}}\gets\text{WMMD}(F^{\mathrm{pos}}_N,F_{\widetilde{\mathbf{x}}^{\mathrm{pos}}_{2N+1:3N}})$\;
            
            $\mathcal{L}\gets -(\mathcal{L}_{\text{Rec}}+\mathcal{L}^{\prime}_{\text{Rec}}+\mathcal{L}_{\text{GAN}})+ \lambda_{p} L_{\text{GP-Dis}}$\; 
            $\nabla_{\boldsymbol{\theta}}\mathcal{L}_{\text{Dis}}(\boldsymbol{\theta})\gets\nabla_{\boldsymbol{\theta}}\mathcal{L}$\;
            $\boldsymbol{\theta}\gets \text{Adam}(\nabla_{\boldsymbol{\theta}}\mathcal{L}_{\text{Dis}}(\boldsymbol{\theta}), \boldsymbol{\theta}, \varepsilon, \beta_1, \beta_2)$\;
            }
            \CGentrain{}{
            repeat lines (11)–(12)\;
            $\mathcal{L}_{\text{CGen}}(\boldsymbol{\omega}^{\prime})\gets$  $\text{MMD}(F_{\mathbf{z}_{e_{1:N}}},F_{\widetilde{\mathbf{z}}_{e_{1:N}}})$\; $\boldsymbol{\omega}^{\prime}\gets \text{Adam}(\nabla_{\boldsymbol{\omega}^{\prime}}\mathcal{L}_{\text{CGen}}(\boldsymbol{\omega}^{\prime}), \boldsymbol{\omega}^{\prime}, \varepsilon, \beta_1, \beta_2)$;
            }
            \Gentrain{}{
            

            repeat lines (11)–(13) and then lines (19)–(21) sequentially\; 
            $\mathcal{L}_{\text{Reg}}\gets\text{MMD}(F_{\mathbf{Z}_{r_{1:N}}},F_{\mathbf{Z}_{e_{1:N}}})$\; 
            $\mathcal{L}\gets \mathcal{L}_{\text{Reg}}+\mathcal{L}_{\text{Rec}}+\mathcal{L}^{\prime}_{\text{Rec}}+\mathcal{L}_{\text{Gan}}$\;
            
            $\nabla_{(\boldsymbol{\omega},\boldsymbol{\eta})}\mathcal{L}_{\text{EGen}}(\boldsymbol{\omega},\boldsymbol{\eta})\gets\nabla_{(\boldsymbol{\omega},\boldsymbol{\eta})}\mathcal{L}$\;
            $(\boldsymbol{\omega},\boldsymbol{\eta})\gets \text{Adam}(\nabla_{(\boldsymbol{\omega},\boldsymbol{\eta})}\mathcal{L}_{\text{EGen}}(\boldsymbol{\omega},\boldsymbol{\eta}), (\boldsymbol{\omega},\boldsymbol{\eta}), \varepsilon, \beta_1, \beta_2)$\;
            }
            }
            
}
\caption{Training Triple-Generative Model via BNPL}\label{alg2}
\end{algorithm}

\begin{figure}
    \centering
    {\includegraphics[width=1\linewidth]{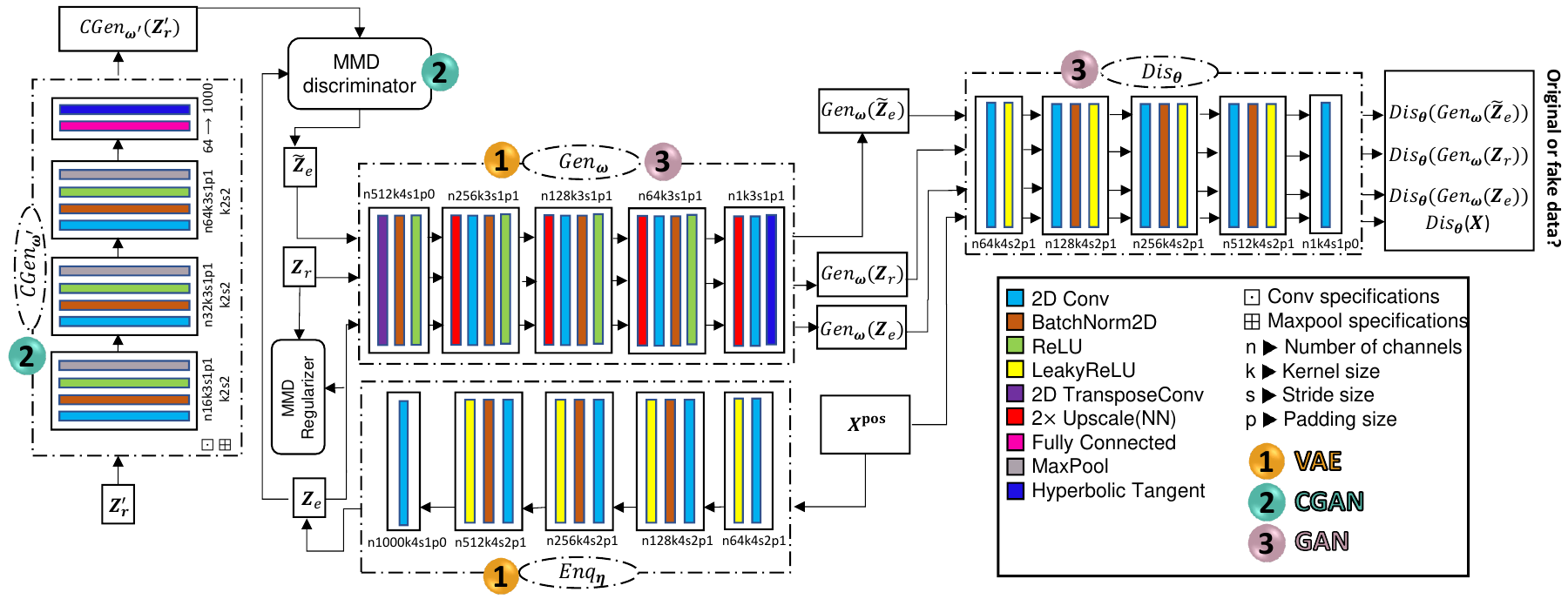}}
    \caption{The general architecture of the proposed triple model consists of four convolutional networks for the encoder, generator, code-generator, and discriminator. \textcolor{black}{The partial models—VAE, CGAN, and GAN—are labeled as 1, 2, and 3, respectively}.}
    \label{diagram-2d-Gan-BNP}%
\end{figure}
\section{Experimental Results}\label{sec:experiments}
In this section, we present the results of our experiments on four datasets to evaluate the performance of our models. 
We implement our models using the PyTorch library in Python. We set the mini-batch size to 16 and the number of workers to 64. As the Hyperbolic tangent activation function is used on the last layer of the generator, we scaled all datasets to the range of $-1$ to $1$ to ensure compatibility with the generator outputs. For comparison purposes, we evaluated our model against Semi-BNP MMD GAN \citep{fazeli2023semi} and AE+GMMN \citep{Li}\footnote{The relevant codes can be found at \url{https://github.com/yujiali/gmmn.git}}. To provide a comprehensive comparison, we also attempted to modify the 3D $\alpha$-WGPGAN \citep{kwon2019generation}\footnote{The basic code for 3D generation are availble at  \url{https://github.com/cyclomon/3dbraingen}} settings to the 2D dimension case and included its relevant results. \textcolor{black}{Table \ref{comparision-models} presents the shared and distinct attributes of these models in relation to our proposed model.}

\begin{table}[H]
\caption{Comparative analysis of generative models: Highlighting shared and distinct attributes}\label{comparision-models}
\centering
\begin{tabular}{|c|c|c|c|c|c|c|}
\hline
\textbf{Model} & \textbf{Wasserstein} & \textbf{MMD} & \textbf{CGAN} & \textbf{VAE} & \textbf{GAN} & \textbf{AE} \\ 
\hline
Ours  & \color{green}{\cmark} & \color{green}{\cmark} & \color{green}{\cmark} & \color{green}{\cmark} & \color{green}{\cmark} & \color{red}{\xmark} \\ 
\hline
Semi-BNP MMD  & \color{red}{\xmark} & \color{green}{\cmark} & \color{red}{\xmark} & \color{red}{\xmark} & \color{green}{\cmark} & \color{red}{\xmark} \\
\citep{fazeli2023semi}&&&&&&\\
\hline
AE+GMMN   & \color{red}{\xmark} & \color{green}{\cmark} & \color{green}{\cmark} & \color{red}{\xmark} & \color{red}{\xmark} & \color{green}{\cmark} \\
\citep{Li}&&&&&&\\
\hline
$\alpha$-WGPGAN  & \color{green}{\cmark} & \color{red}{\xmark} & \color{red}{\xmark}& \color{green}{\cmark} & \color{green}{\cmark} & \color{red}{\xmark} \\
\citep{kwon2019generation}&&&&&&\\
\hline
\end{tabular}
\end{table}

\subsection{Labeled Datasets}
To evaluate model performance, we analyzed two handwritten datasets comprising of numbers (MNIST) and letters (EMNIST). MNIST consists of 60,000 handwritten digits, including 10 numbers from 0 to 9 (labels), each with 784 ($28\times28$) dimensions. This dataset was divided into 50,000 training and 10,000 testing images, and we use the training set to train the network \citep{lecun1998mnist}.  EMNIST is freely available online\footnote{\urlx{https://www.kaggle.com/datasets/sachinpatel21/az-handwritten-alphabets-in-csv-format}} and is sourced from \cite{cohen2017emnist}. It contains 372,450 samples of the 26 letters A-Z (labels). Each letter is represented in a $28\times28$ dimension. We allocate 85\% of the samples to the training dataset, and the rest to the testing dataset.

\subsubsection{Evaluating Mode collapse} To examine the capability of the model in covering all modes or preventing the mode collapse, we train a convolutional network to 
predict the label of each generated sample. The structure of this network is provided in Figure \ref{classifierr}.
If the generator has effectively tackled the issue of the mode collapse, we anticipate having a similar relative frequency for labels in both training and generated datasets, indicating successful training. Plots (a) and (b) in Figure \ref{freq-target} represent the relative frequency of labels in handwritten numbers and letters datasets, respectively.

To train the classifier, we use the cross-entropy loss function and update the network's weights with the Adam optimizer over 60 epochs. 
We assess the classifier's efficacy by presenting the mean of the loss function across all mini-batch testing samples and the percentage of correct classification (accuracy rate) in Figure \ref{tbl-classifier}. The figure showcases the classifier's exceptional accuracy in classifying the dataset.
\begin{figure*}[t!]
    \centering
    {\includegraphics[width=.9\linewidth]{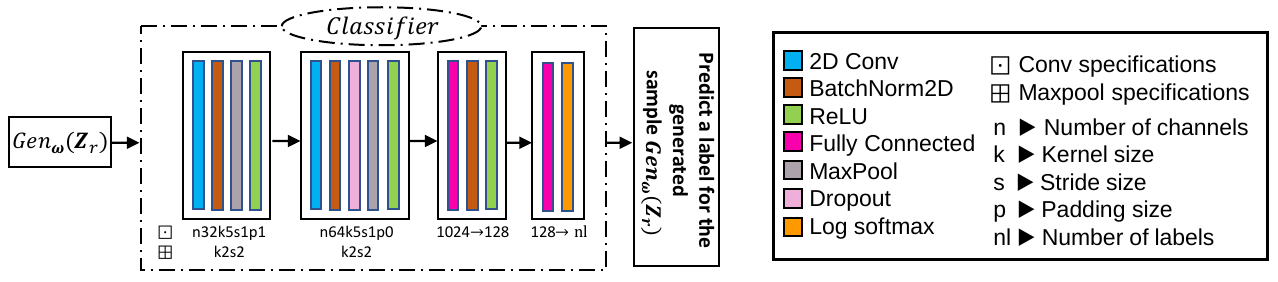}}
    \caption{The classifier network architecture for predicting the handwritten dataset's labels. The output of the fully connected layer is passed through a log softmax function to convert the raw output into a probability distribution over the classes.}
    \label{classifierr}%
\end{figure*}


\begin{figure}[ht]
    \centering
       {\includegraphics[width=.7\linewidth]{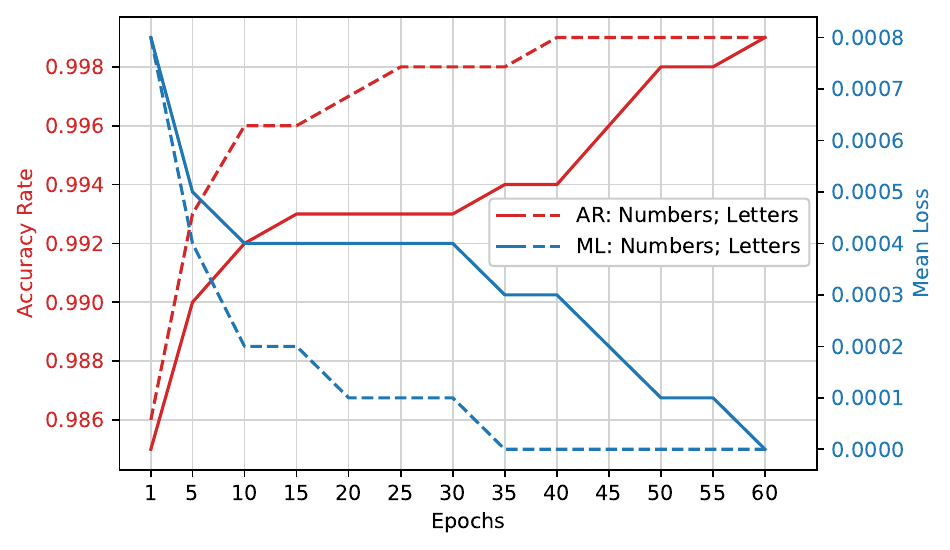}}
    \caption{The convolutional classifier's mean loss (ML) and accuracy rate (AR) with a learning rate of $0.0002$ across all mini-batch testing samples of numbers (solid line) and letters (dashed line) datasets.}\label{tbl-classifier}
\end{figure}
The relative frequency plots of predicted labels are depicted in Figure \ref{freq-target}, parts (c)-(j), for 1000 generated numbers (left-hand side) and letters (right-hand side) using various generative models.
The ratios of the numbers in the training dataset are expected to be consistent, as indicated by plot (a). Examining the plot of generated samples reveals that each model has a distinct bias towards certain digits. Specifically, our model tends to produce digit 6 at a frequency 4.64\% higher than that in the training dataset ($14.50\%-9.86\%$), while the semi-BNP MMD GAN exhibits a similar bias towards digit 3 at a 4.18\% higher frequency than the training dataset. 

Nevertheless, these differences are relatively minor compared to AE+GMMN and $\alpha$-WGPGAN, which demonstrate a significant tendency to memorize some modes and overlook certain digits, such as 4 and 8. Similar results can be observed from the relative frequency plots of predicted labels for the letters dataset. Plot (j) in Figure \ref{freq-target} clearly shows the failure of $\alpha$-WGPGAN to maintain the balance of the relative frequency of the data and generate the letter ``M".  In contrast, plot (d) indicates that our proposed model successfully preserves the proportion of modes in the generated samples and avoids mode collapse.

\subsubsection{Assessing Patterns and Evaluating Model Quality}
We employed the principal component analysis (PCA) to illustrate the latent structure among data points in a two-dimensional space. Parts (a) and (b) in Figure \ref{PCA} represent the PCA plots for numbers and letters, respectively. Each axis in the plots represents a principal component, with the relevant real dataset used as a reference. It is important to note that PCA provides a necessary condition to verify the similarity pattern between real and fake data distribution \citep{hotelling1933analysis}. The dissimilarity between PCA plots for real and generated samples indicates that they do not follow the same distribution. However, two similar PCA plots do not necessarily guarantee a similar distribution for real and generated samples. Here, the results presented in Figure \ref{PCA} demonstrate that all models follow the pattern of the relevant real datasets except for the $\alpha$-WGPGAN in the letters dataset. It indicates a different shape and orientation than the structure of the real dataset.

\begin{figure}[t!]
    \centering
        \subfloat[Numbers: Training dataset]{\includegraphics[width=.5\linewidth, height=.23\linewidth]{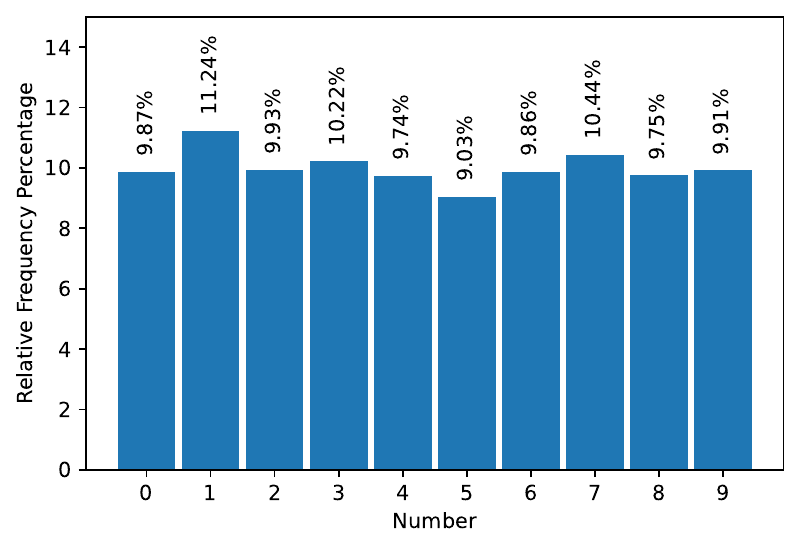}}
        \subfloat[Letters: Training dataset]{\includegraphics[width=.5\linewidth, height=.23\linewidth]{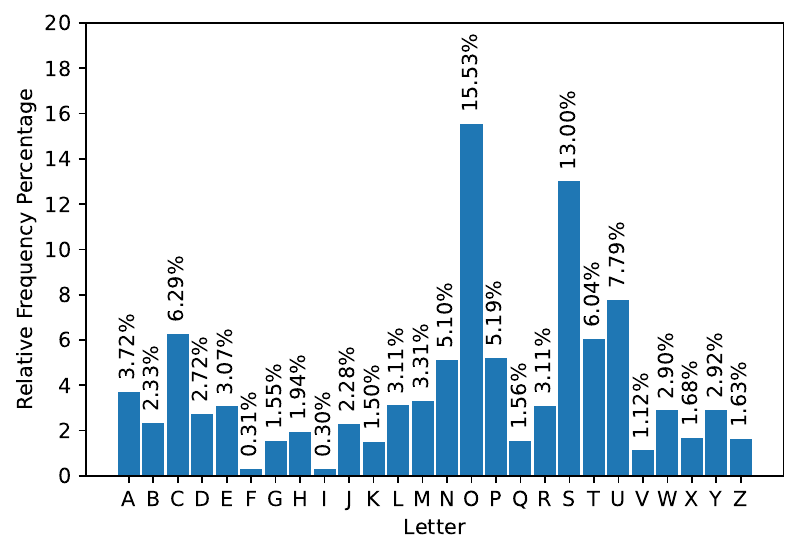}}\\
        \subfloat[Ours]
        {\includegraphics[width=.5\linewidth, height=.23\linewidth]{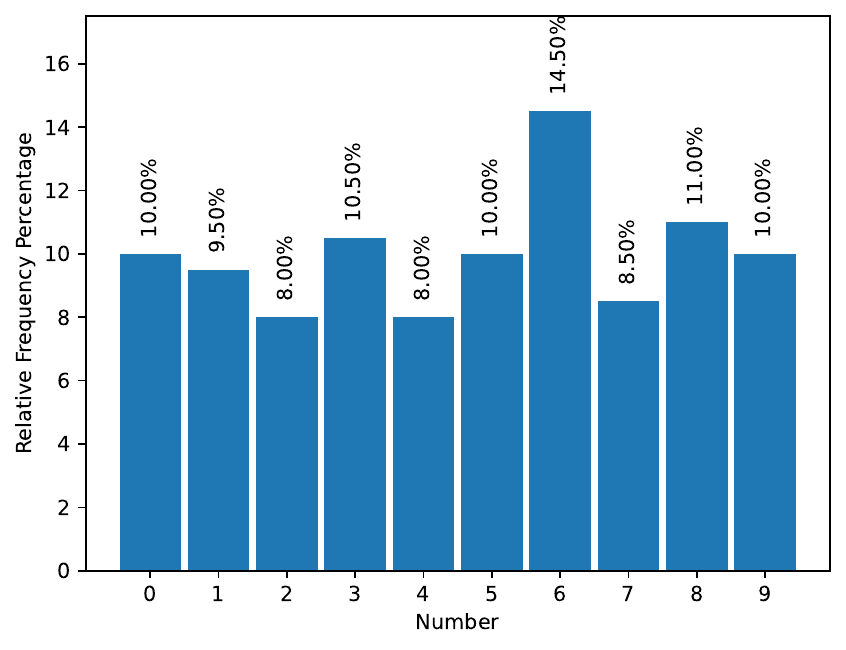}}
        \subfloat[Ours]{\includegraphics[width=.5\linewidth, height=.23\linewidth]{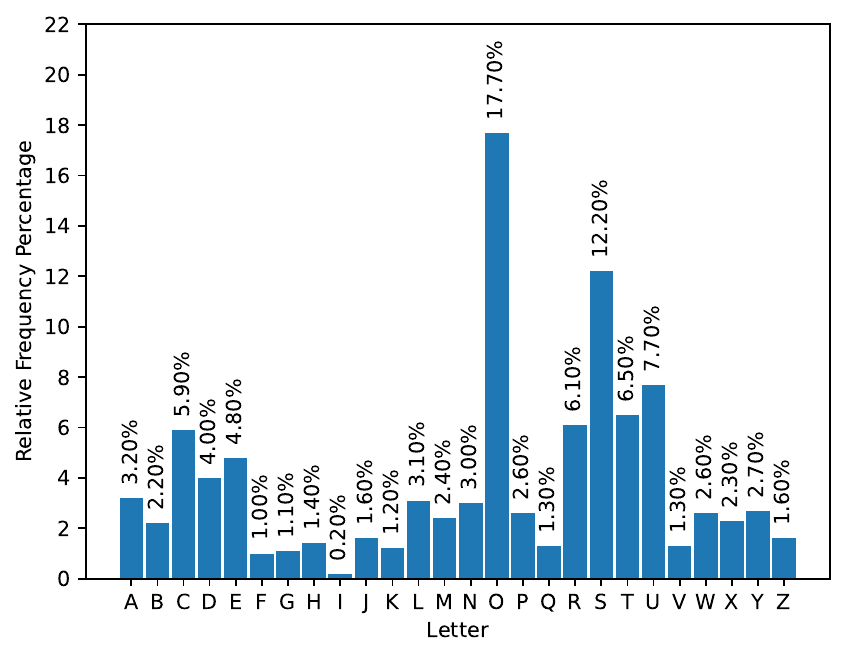}}\\
        \subfloat[Semi-BNP MMD]
        {\includegraphics[width=.5\linewidth, height=.23\linewidth]{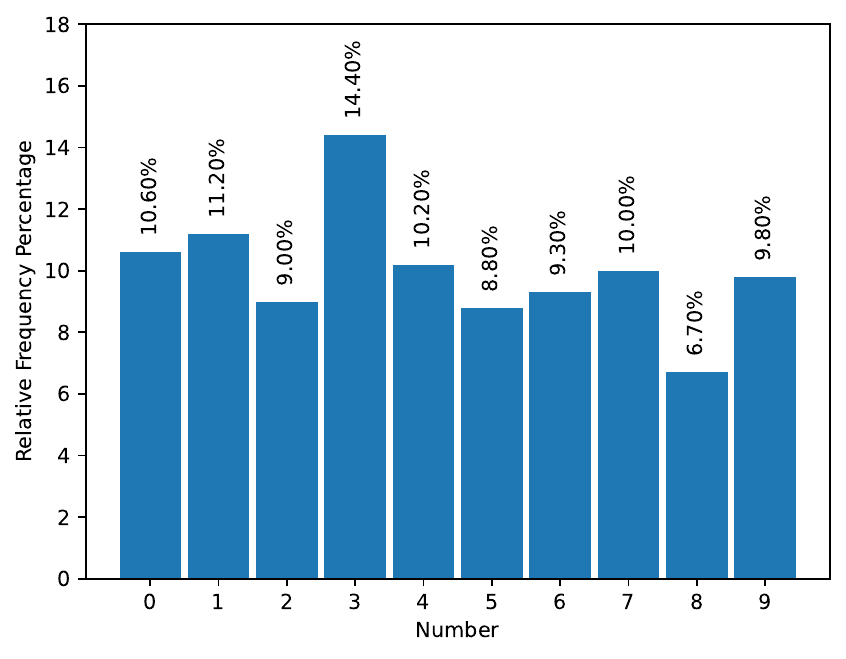}}
        \subfloat[Semi-BNP MMD]{\includegraphics[width=.5\linewidth, height=.23\linewidth]{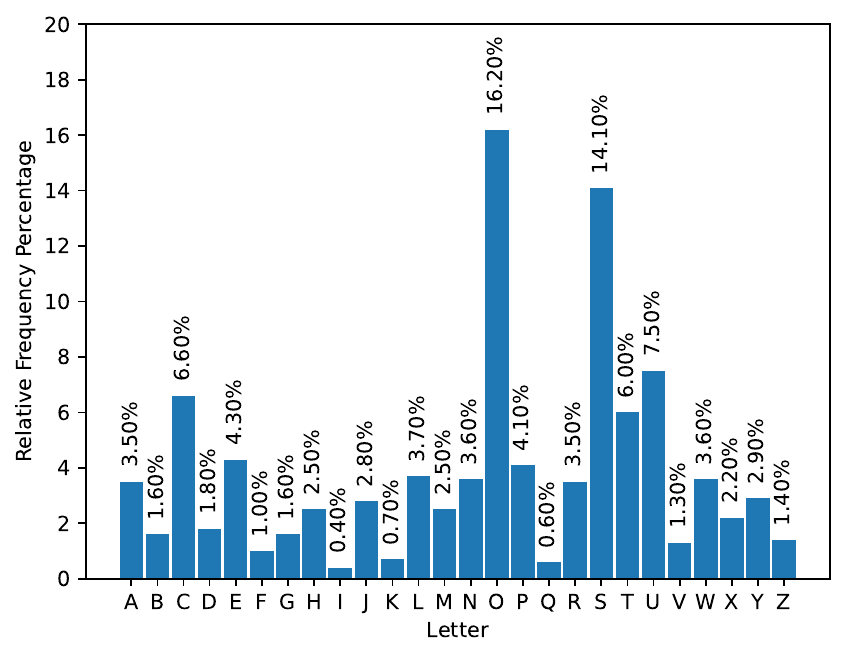}}
        \\
        \subfloat[AE+GMMN]{\includegraphics[width=.5\linewidth, height=.23\linewidth]{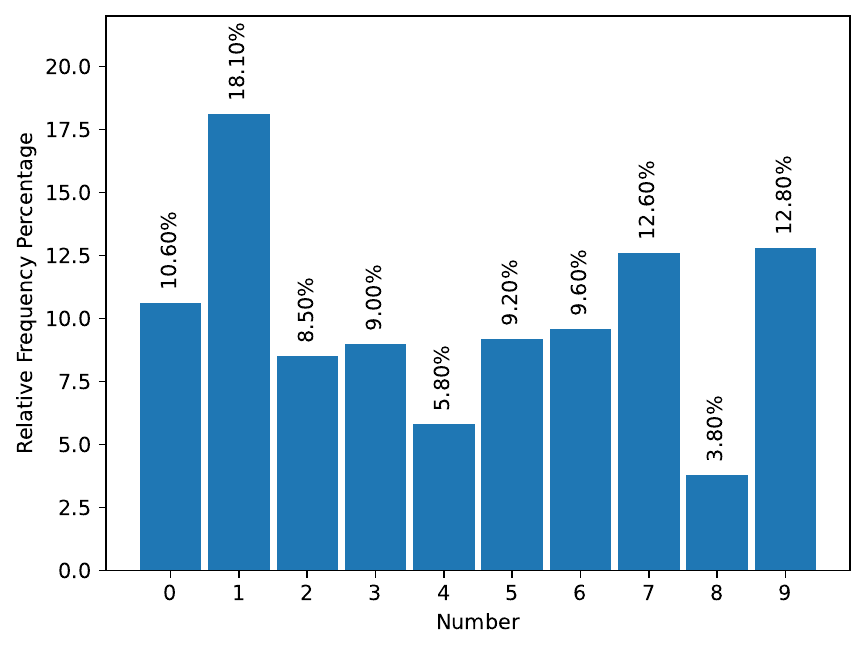}}
        \subfloat[AE+GMMN]{\includegraphics[width=.5\linewidth, height=.23\linewidth]{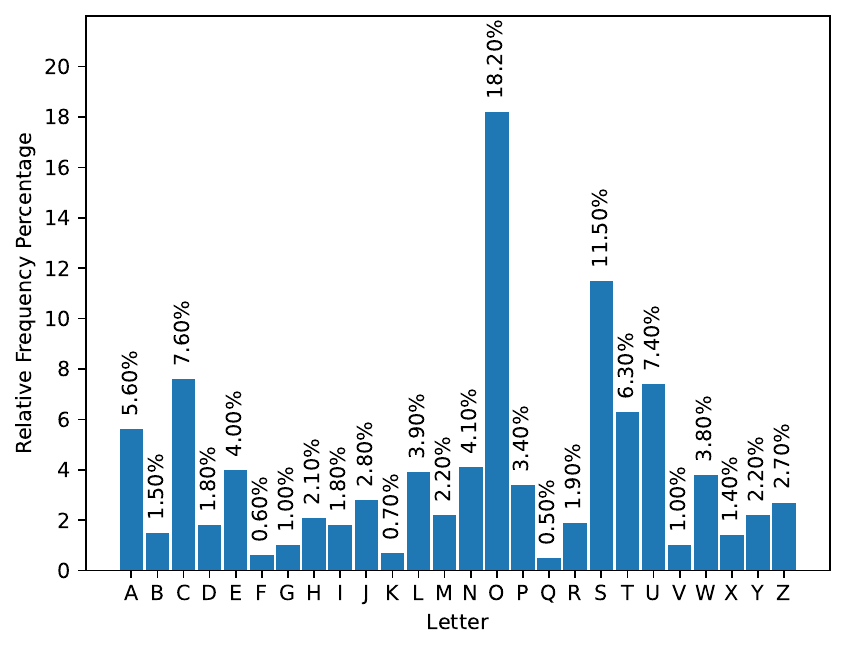}}
        \\
        \subfloat[$\alpha$-WGPGAN]
        {\includegraphics[width=.5\linewidth, height=.23\linewidth]{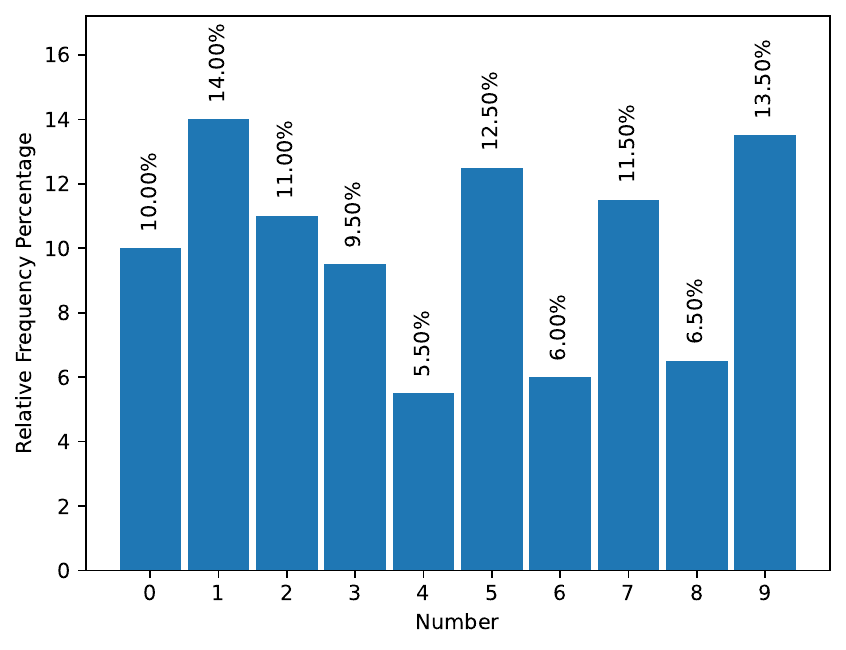}} 
        \subfloat[$\alpha$-WGPGAN]{\includegraphics[width=.5\linewidth, height=.23\linewidth]{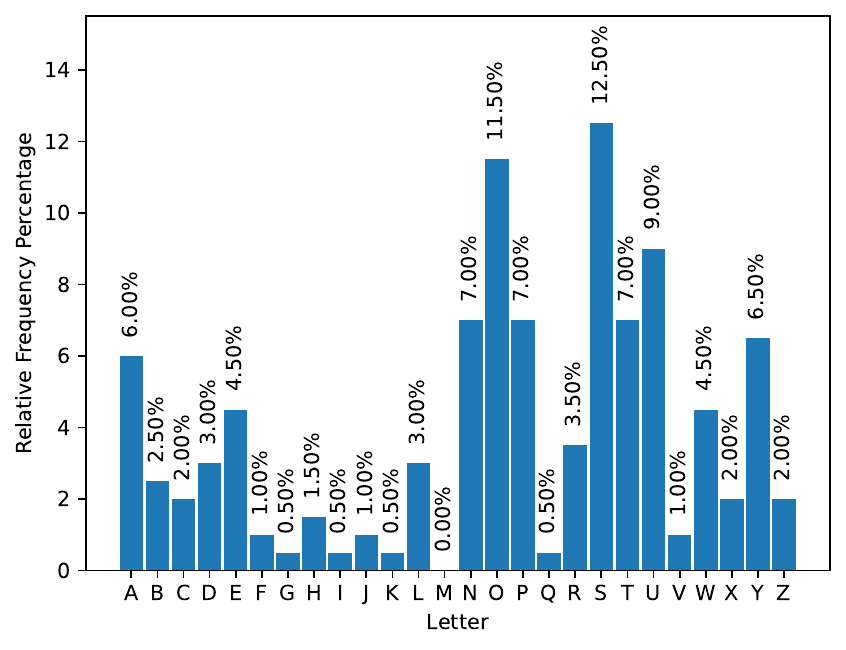}} 
    \caption{ The frequency percentage of true and predicted labels for handwritten datasets.}\label{freq-target}
\end{figure}
 
For a more comprehensive analysis, we adopt a mini-batch strategy suggested by \cite{fazeli2023semi} to compute the MMD score, as given by Equation \eqref{MMD-ecdf}, between the generated and real samples. We present the discrepancy scores in density and box plots, along with violin plots in Figure \ref{PCA}, parts (c) and (d). Overall, the MMD scores of all four models suggest some level of convergence around zero. However, the results of the proposed model and the semi-BNP model are comparable, with both models showing better convergence than the other two models. Specifically, part (d) shows our proposed model demonstrates even better convergence than the semi-BNP model, highlighting an improvement of the semi-BNP MMD model by extending it to the VAE+WMMD model.
\begin{figure}[t!]
    \centering
        \subfloat[Numbers: PCA plot]{\includegraphics[width=.5\linewidth, height=.35\linewidth]{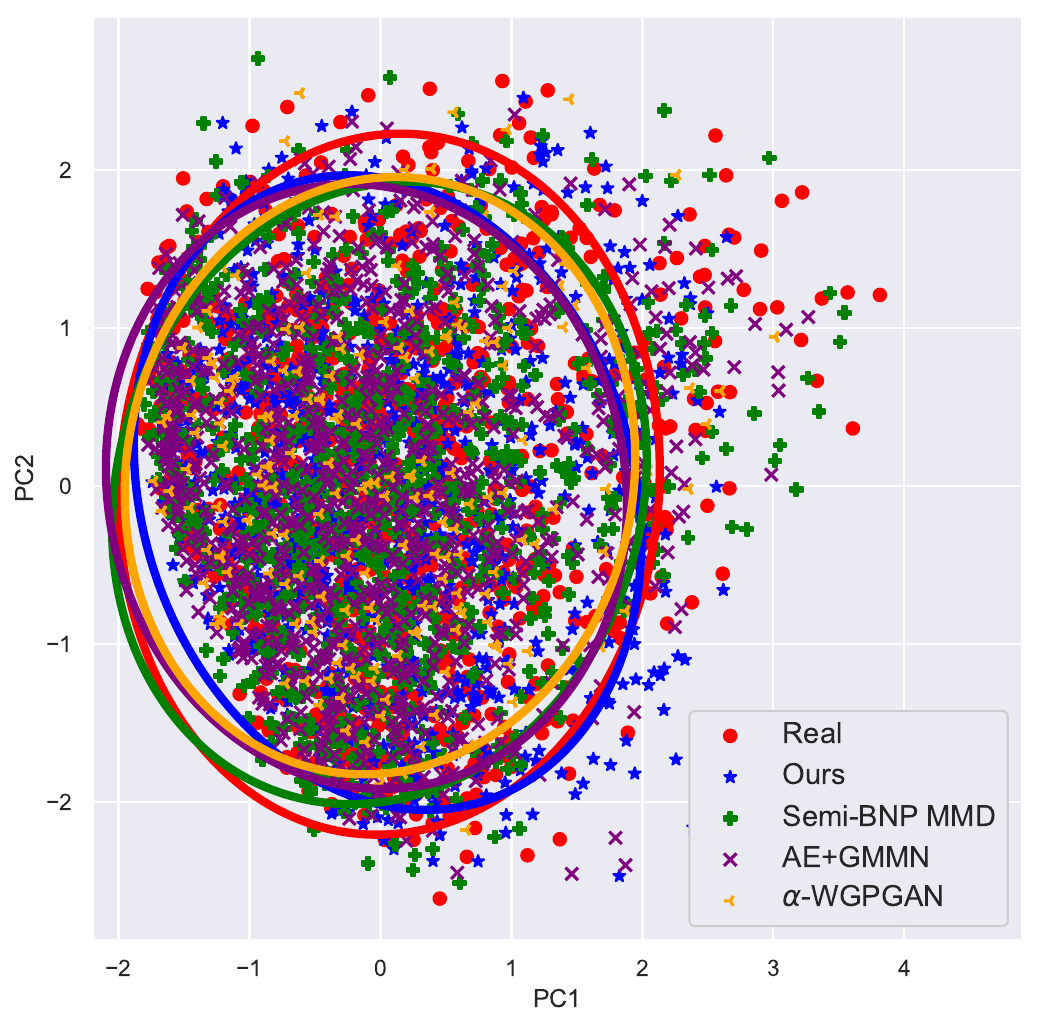}}
        \subfloat[Letters: PCA plot]{\includegraphics[width=.5\linewidth, height=.35\linewidth]{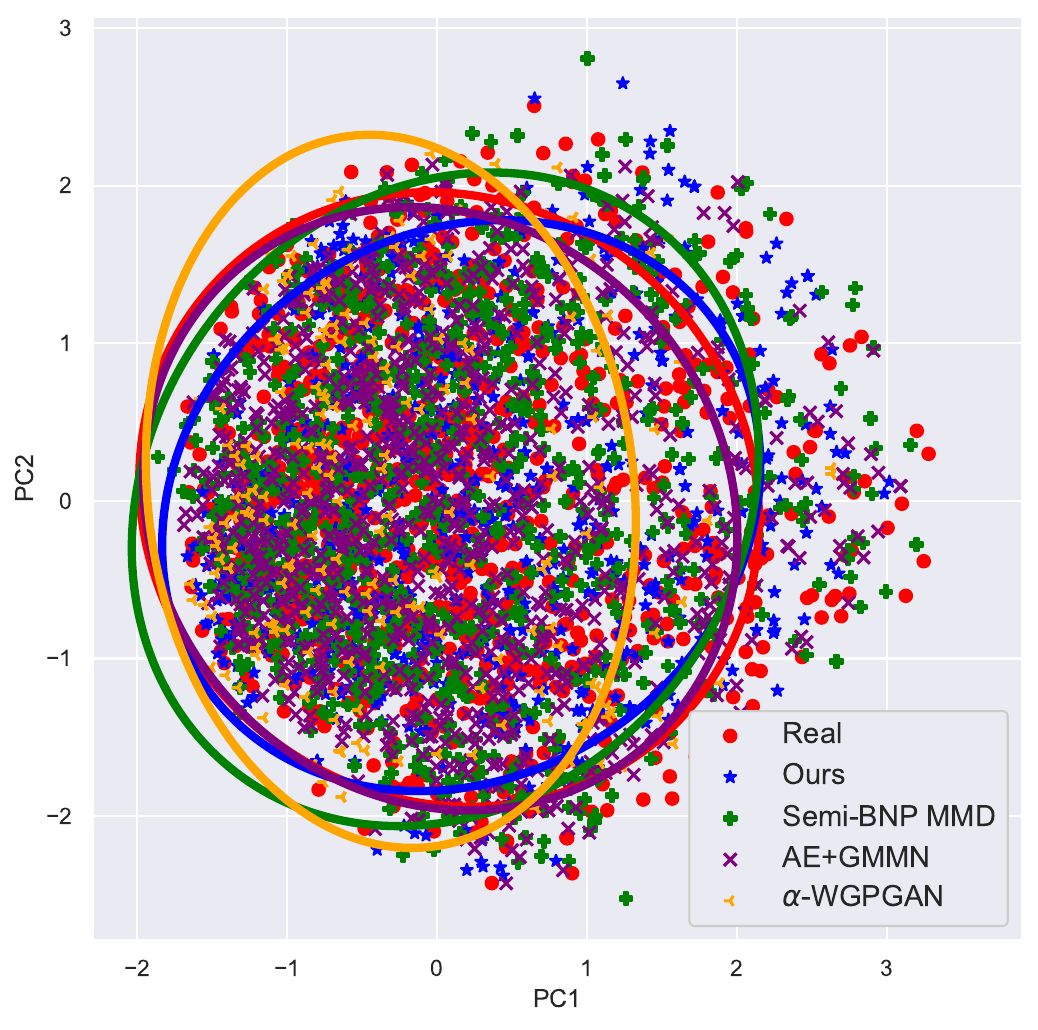}}\\
        \subfloat[Numbers: Violin plot]
        {\includegraphics[width=.5\linewidth, height=.4\linewidth]{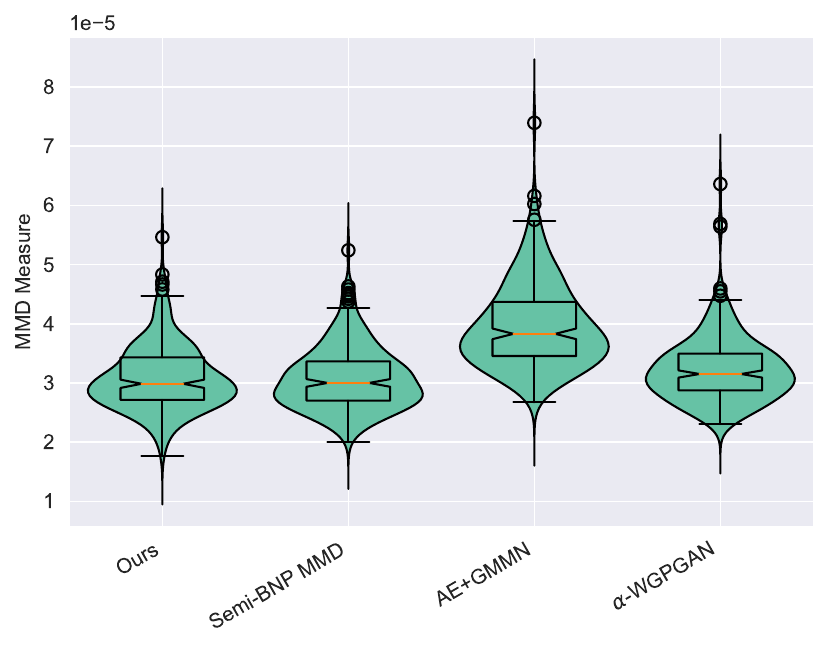}}
        \subfloat[Letters: Violin plot]{\includegraphics[width=.5\linewidth, height=.4\linewidth]{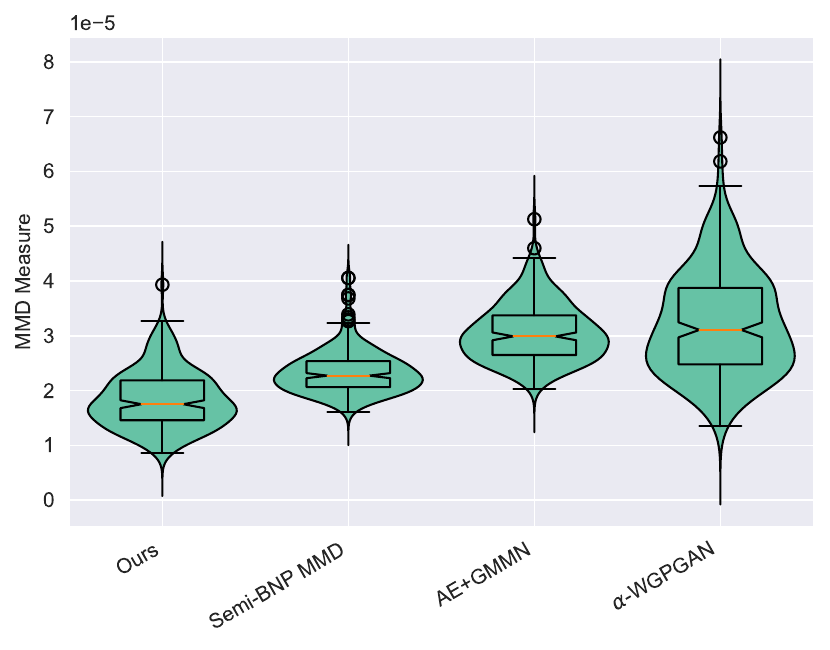}}
    \caption{Top: PCA plots of 1000 generated samples versus real samples with fitted ellipse curves for handwritten datasets, indicating the spread of the samples in the corresponding directions. Bottom: Violin plots of MMD scores including density and box plots. }\label{PCA}
\end{figure}

\subsubsection{Visualisation}
To better demonstrate the visual capabilities of our proposed model in generating samples, we have displayed 60 samples generated from the model and have compared them to the samples generated by other models, as depicted in Figure \ref{visual-MNIST}. While the semi-BNP MMD model displays a range of generated characters in parts (e) and (f) of Figure \ref{visual-MNIST}, the images contain some noise that detracts from their quality. On the other hand, the results of AE+GMMN, displayed in parts (g) and (h), reveal blurry outputs that fall short of our desired standards. 
\begin{figure}[t!]
    \centering
        \subfloat[Numbers: Training dataset]{\includegraphics[width=.45\linewidth, height=.22\linewidth]{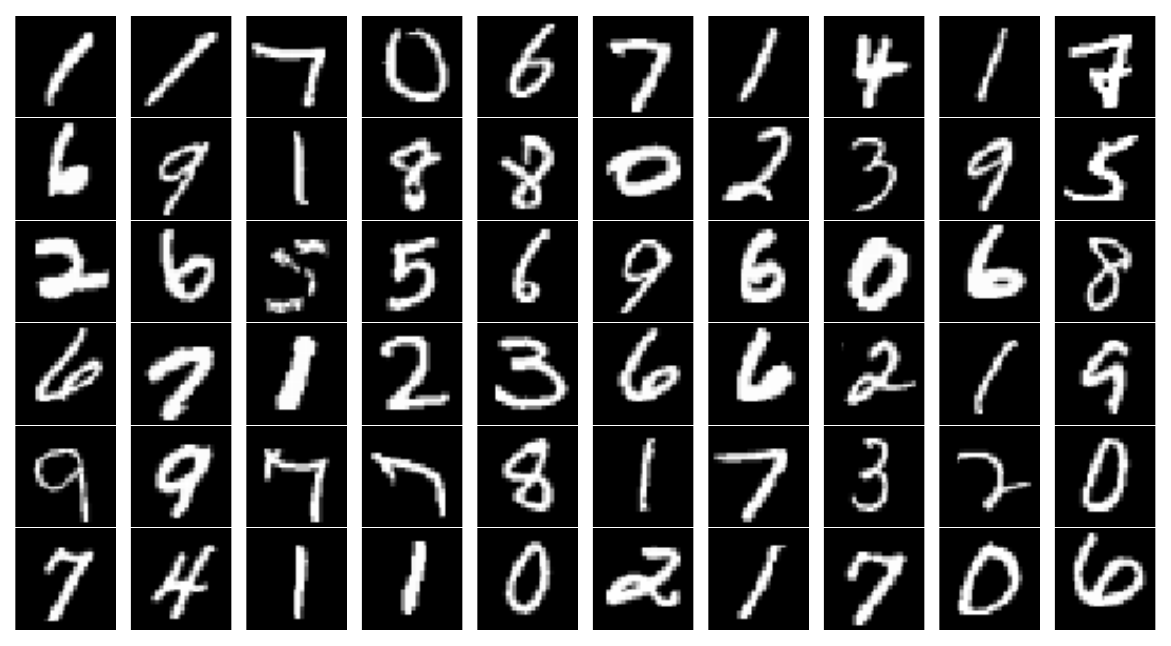}}
        \hspace{1cm}
        \subfloat[Letters: Training dataset]{\includegraphics[width=.45\linewidth, height=.22\linewidth]{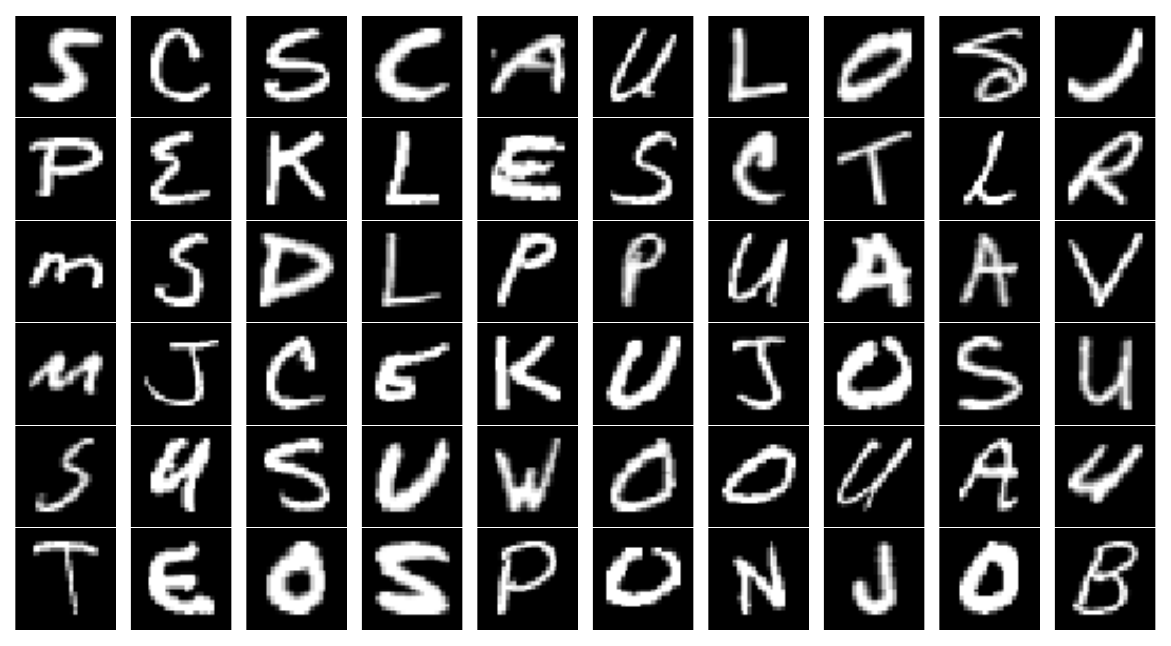}}\\
        \subfloat[Ours]{\includegraphics[width=.45\linewidth, height=.22\linewidth]{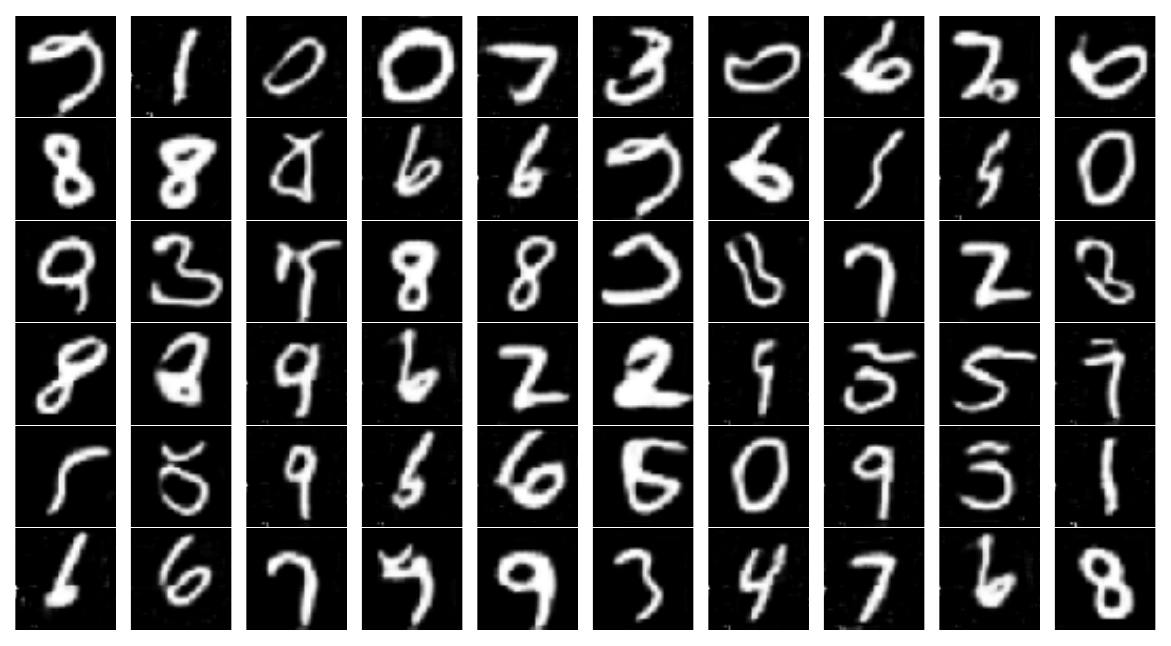}}
        \hspace{1cm}
        \subfloat[Ours]{\includegraphics[width=.45\linewidth, height=.22\linewidth]{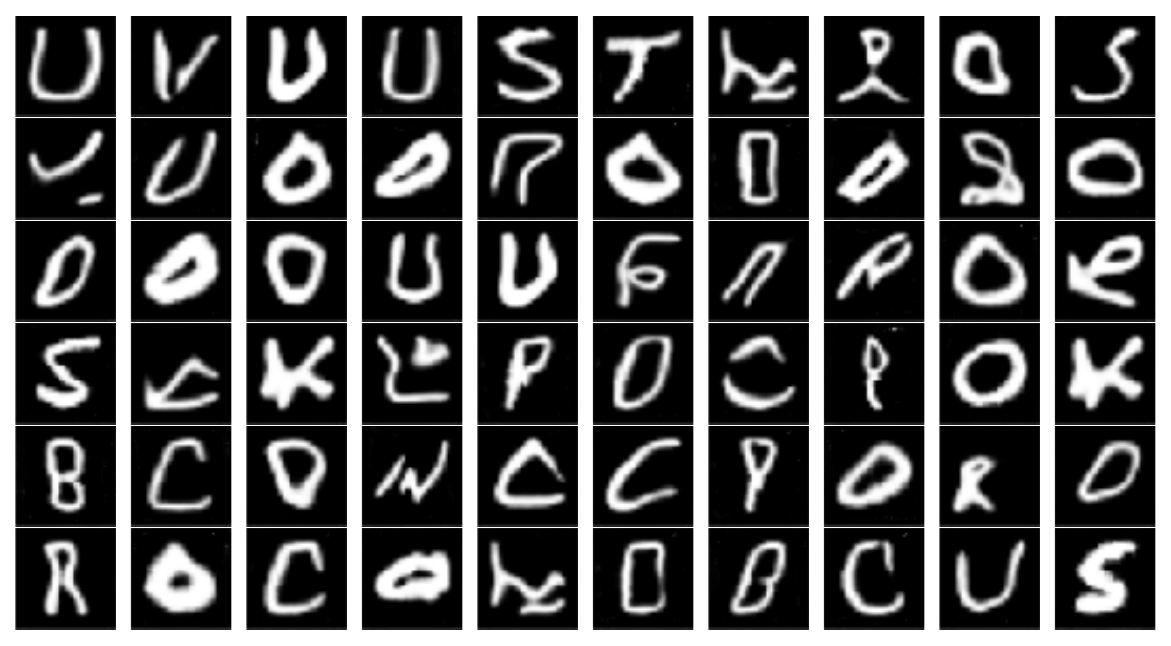}}\\
        \subfloat[Semi-BNP MMD]{\includegraphics[width=.45\linewidth, height=.22\linewidth]{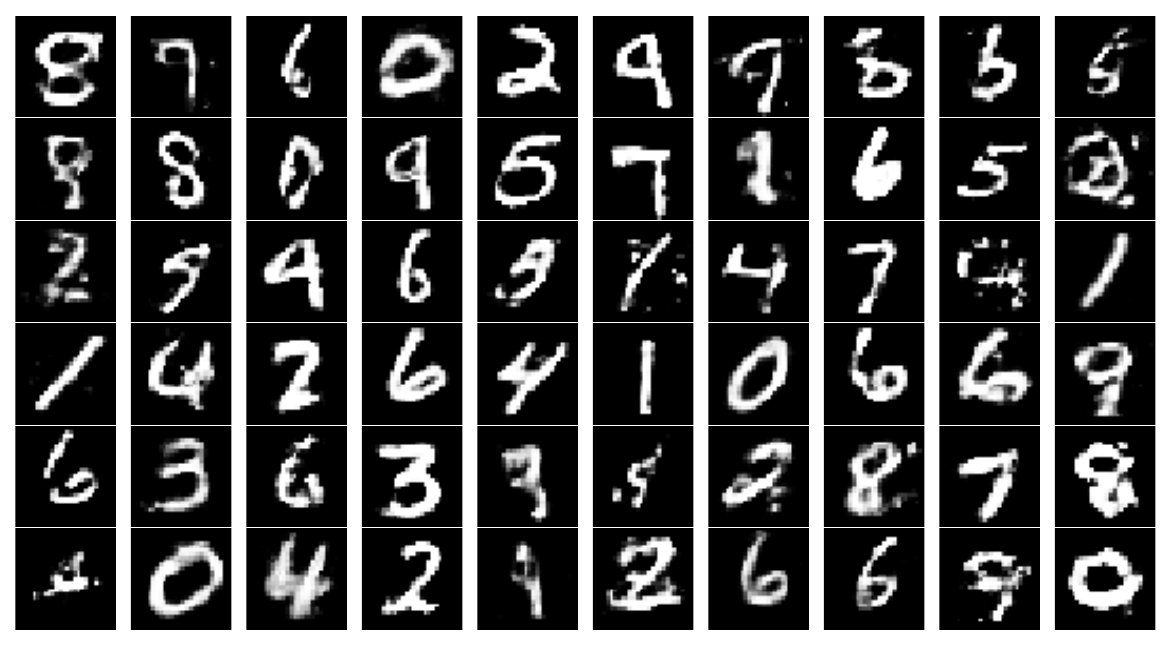}}
        \hspace{1cm}
        \subfloat[Semi-BNP MMD]{\includegraphics[width=.45\linewidth, height=.22\linewidth]{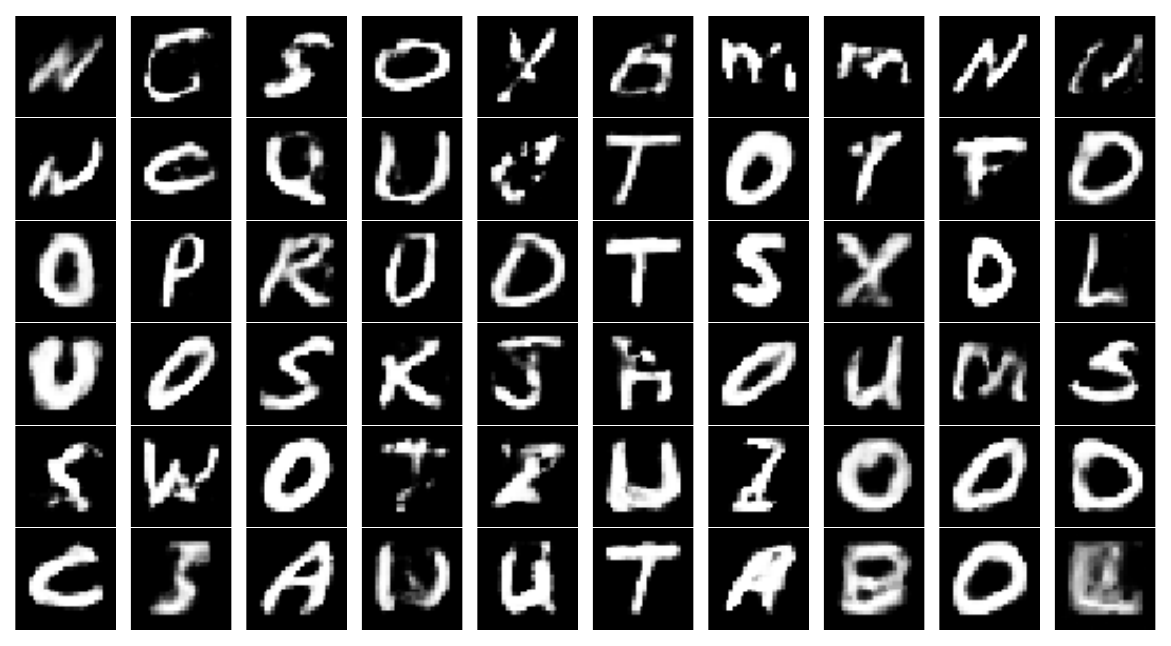}}
        \\
        \subfloat[AE+GMMN]{\includegraphics[width=.45\linewidth, height=.22\linewidth]{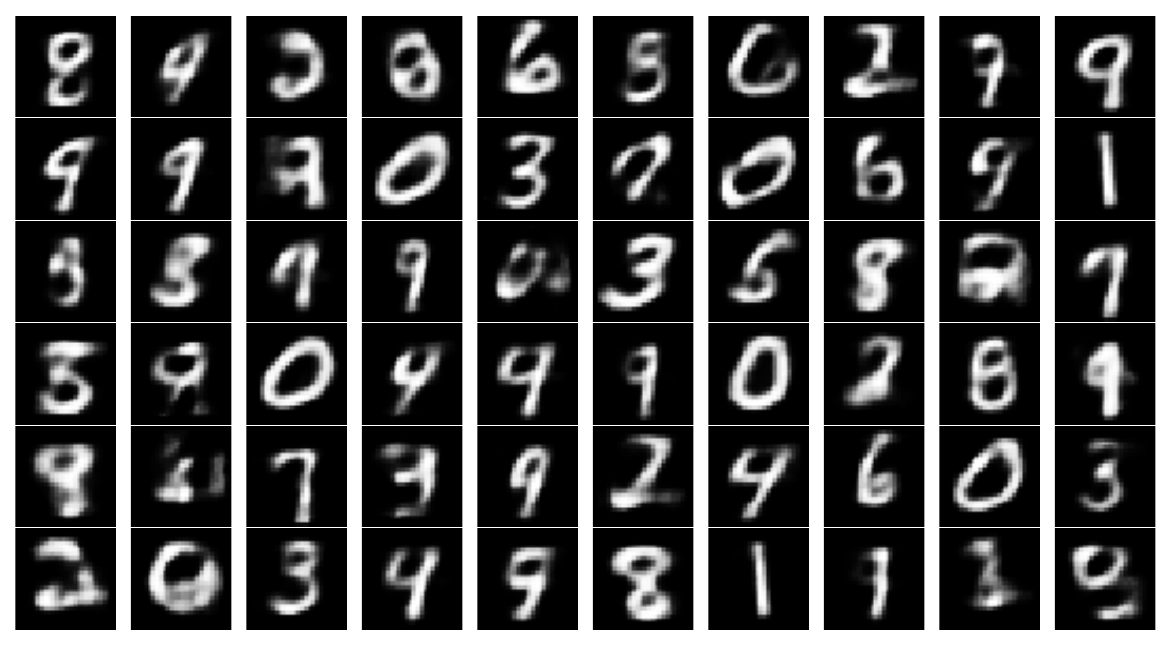}}
        \hspace{1cm}
        \subfloat[AE+GMMN]{\includegraphics[width=.45\linewidth, height=.22\linewidth]{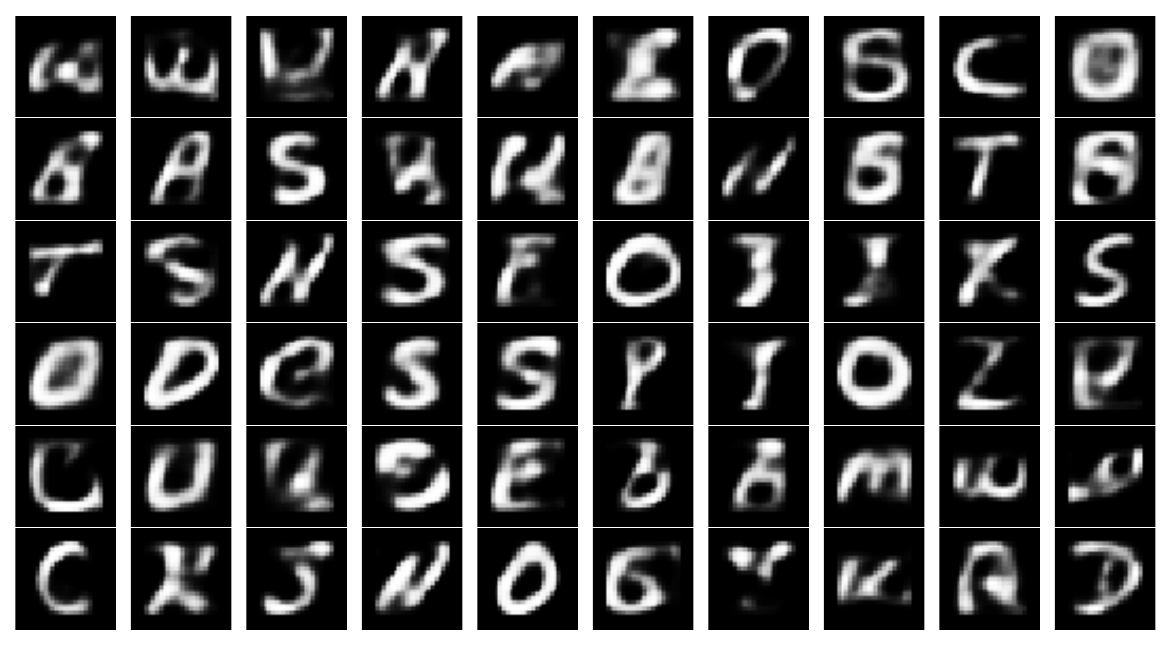}}
        \\
        \subfloat[$\alpha$-WGPGAN]{\includegraphics[width=.45\linewidth, height=.22\linewidth]{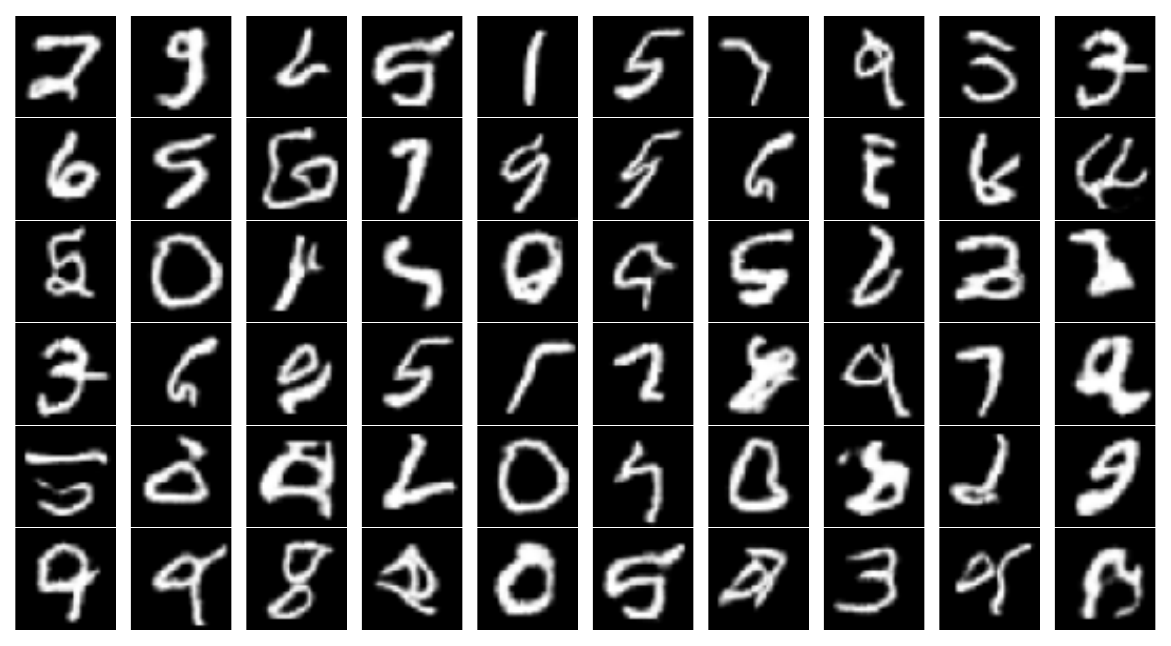}}
        \hspace{1cm}
        \subfloat[$\alpha$-WGPGAN]{\includegraphics[width=.45\linewidth, height=.22\linewidth]{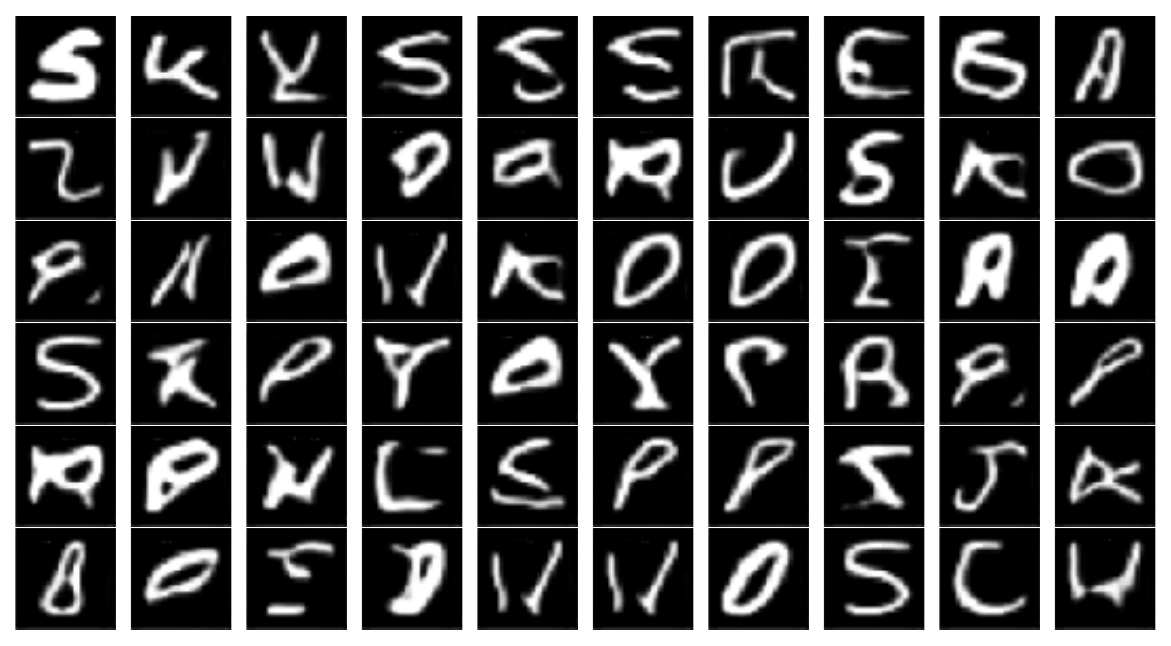}}
    \caption{Visualisation of handwritten training samples and generated samples using various generative models after 400000 iterations. }\label{visual-MNIST}
\end{figure}
In contrast, the outputs of our model and $\alpha$-WGPGAN exhibit higher-resolution samples without any noise. However, it appears that the generated samples of the $\alpha$-WGPGAN model contain slightly more ambiguous images compared to ours, suggesting that our model converges faster than $\alpha$-WGPGAN.

\subsection{Unlabaled Datasets}
The performance of a GAN can vary depending on the characteristics of the training dataset, including its complexity, diversity, quality, and size. Thus, it is crucial to assess the model's effectiveness on more intricate datasets. For a comprehensive evaluation of the model's performance, facial and medical images are the two most important datasets to consider. In this regard, we use the following two main data sources and resize all images within them to 64$\times64$ pixels to train all models. 
\subsubsection{Brain MRI Dataset}
The brain MRI images present a complex medical dataset that poses a significant challenge for researchers.  
These images can be easily accessed online\footnote{\url{https://www.kaggle.com/dsv/2645886}}, with both training and testing sets available, comprising a total of 7,023 images of human brain MRI. The dataset includes glioma, meningioma, no tumor, and pituitary tumors \citep{msoudnickparvar2021}. The training set is composed of 5,712 images of varying sizes, each with extra margins.  To ensure consistency and reduce noise in the training data, a pre-processing code\footnote{\url{https://github.com/masoudnick/Brain-Tumor-MRI-Classification/blob/main/Preprocessing.py}} is used to remove margins before feeding them into the networks for training.

Part (a) of Figure \ref{PCA-mri-celebA} illustrates the PCA plots of the generated samples for all models, highlighting that the dispersion and direction of samples generated by $\alpha$-WGPGAN model differ the most from the real dataset compared to the other models. Meanwhile, part (c) of Figure \ref{PCA-mri-celebA} shows almost identical convergence of MMD scores around zero for the compared models. However, Figure \ref{visual-MRI} portrays noisy and blurry outputs generated by the semi-BNP and AE+GMMN models, whereas our model and the $\alpha$-WGPGAN produce clear outputs. 

\subsubsection{CelebFaces Attributes Dataset}
The CelebFaces attributes dataset (CelebA), collected by \cite{liu2015faceattributes}, includes 202,599 images of celebrities that are publicly available online\footnote{\url{http://mmlab.ie.cuhk.edu.hk/projects/CelebA.html}}. The dataset features people in various poses, with different backgrounds, hairstyles and colors, skin tones, and wearing or not wearing glasses and hats, providing a rich resource for evaluating the performance of data augmentation models. 
\begin{figure}[tht]
    \centering
        \subfloat[MRI: PCA plot]{\includegraphics[width=.5\linewidth, height=.35\linewidth]{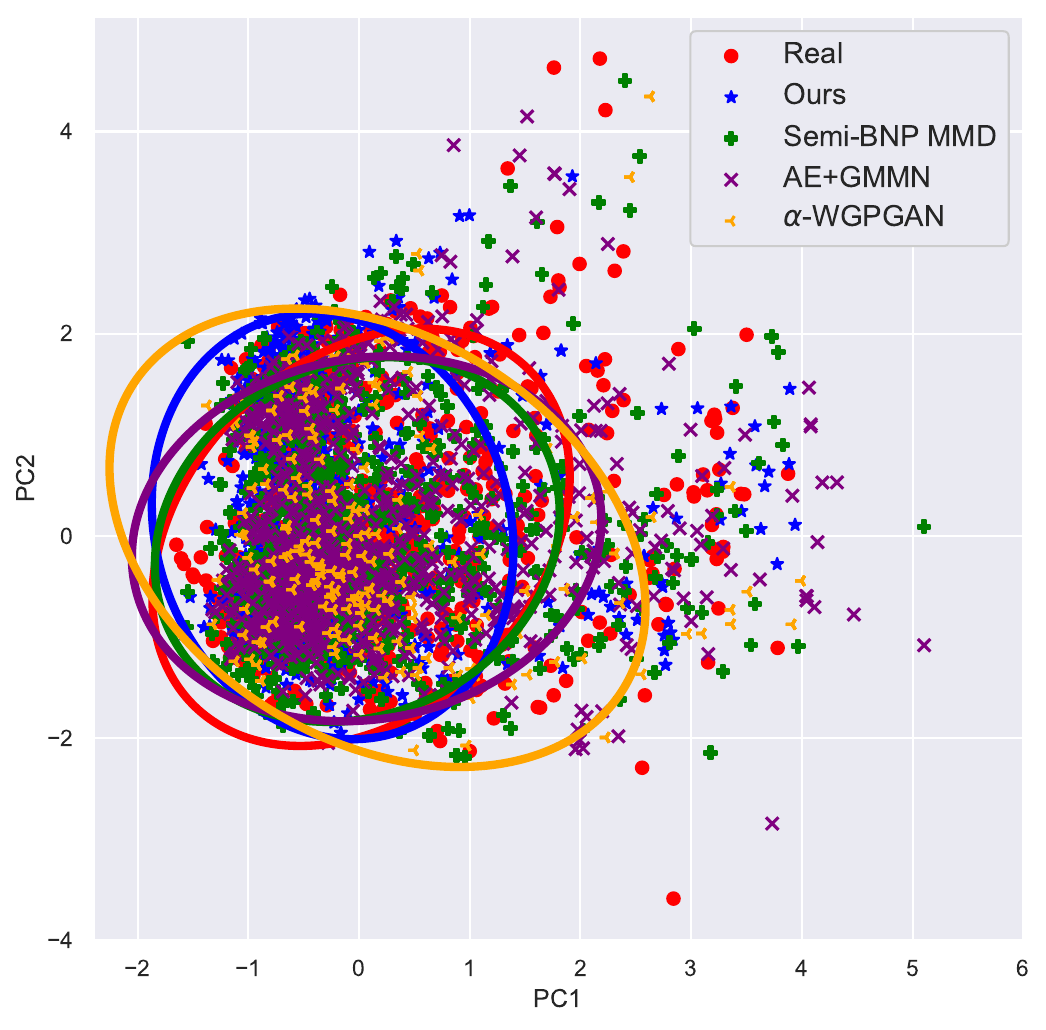}}
        \subfloat[CelebA: PCA plot]{\includegraphics[width=.5\linewidth, height=.35\linewidth]{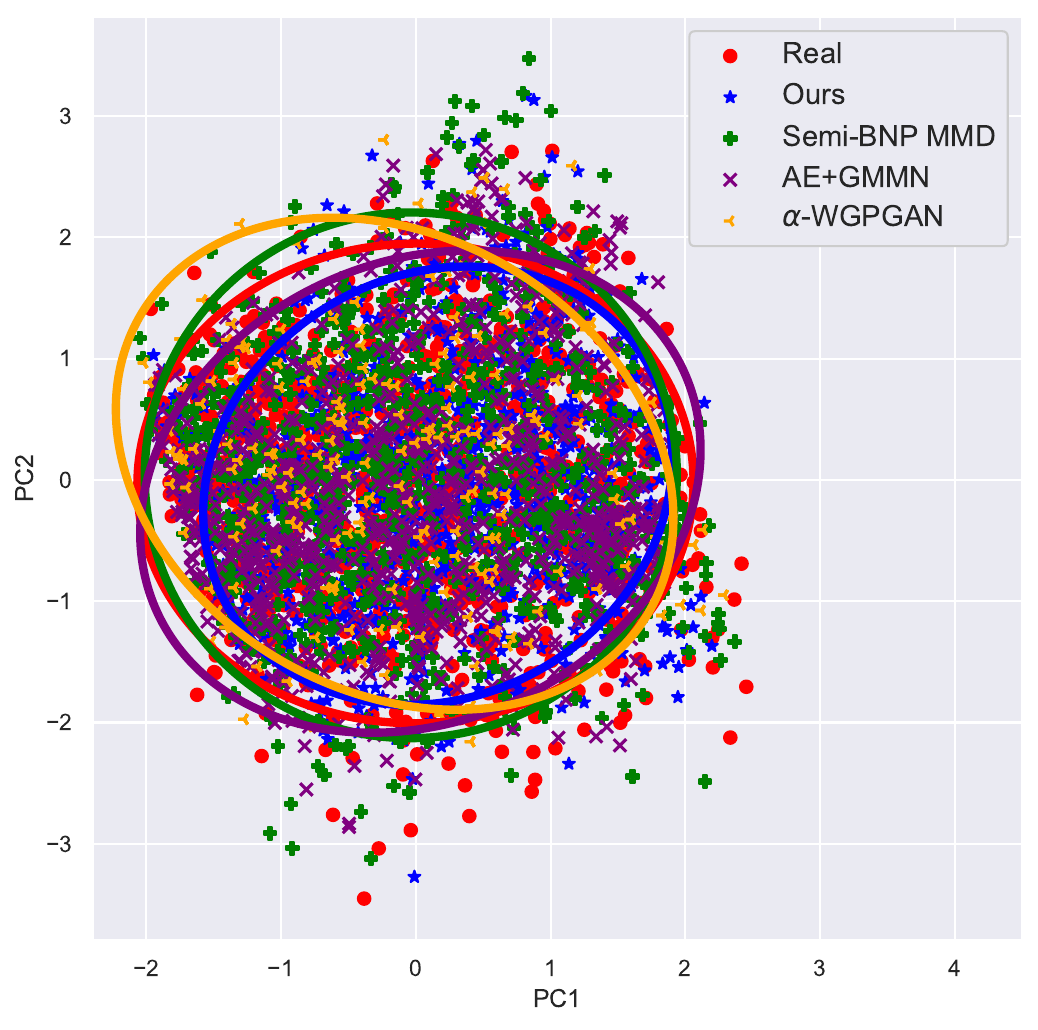}}
        \\
        \subfloat[MRI: Violin plot]
        {\includegraphics[width=.5\linewidth, height=.4\linewidth]{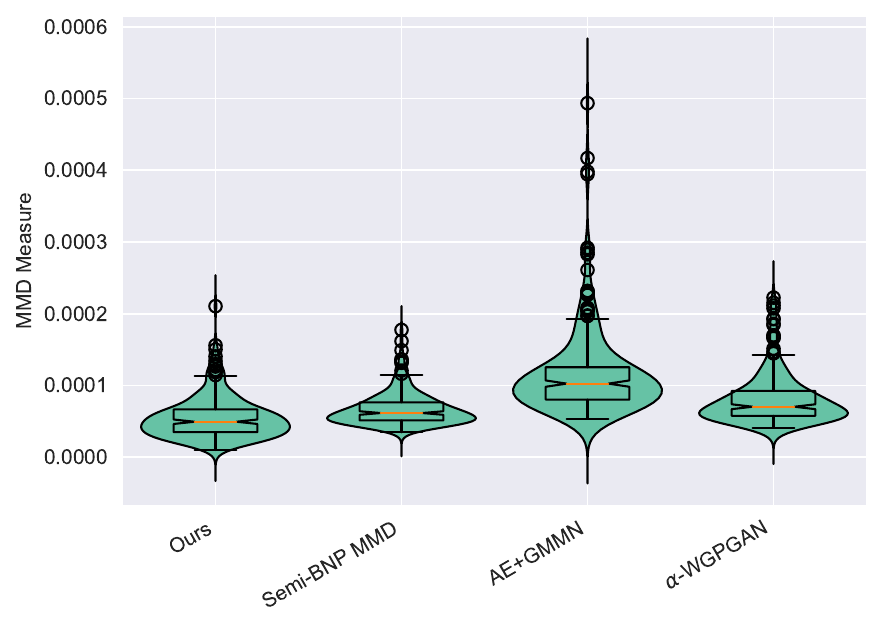}}
        \subfloat[CelebA: Violin plot]{\includegraphics[width=.5\linewidth, height=.4\linewidth]{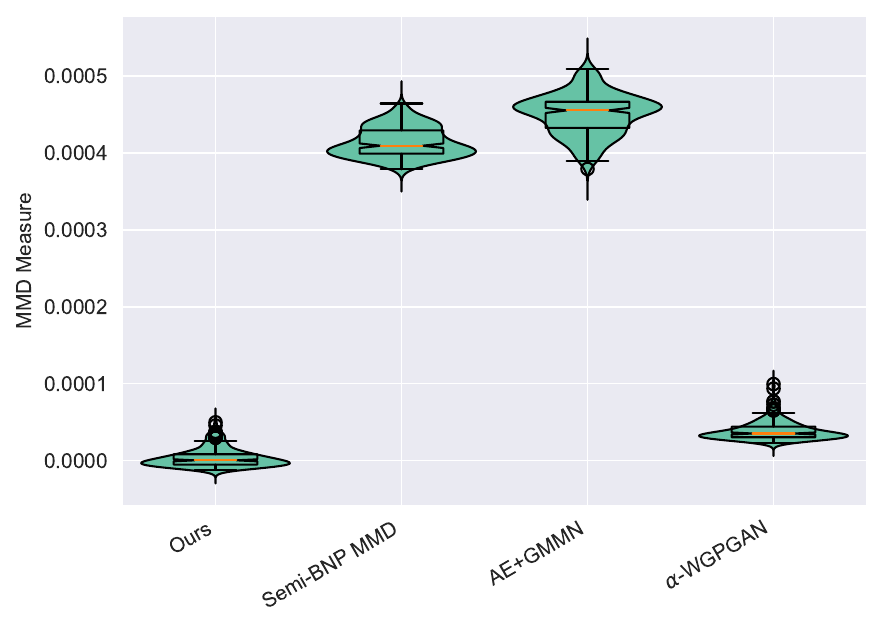}}
    \caption{Top: PCA plots of 1000 generated samples versus real samples with fitted ellipse curves for MRI and celebA datasets, indicating the spread of the samples in the corresponding directions. Bottom: Violin plots of MMD scores including density and box plots.}\label{PCA-mri-celebA}
\end{figure}
While part (b) of Figure \ref{PCA-mri-celebA} shows a slight variation in the direction of the generated sample pattern, part (d) of the same figure highlights a significant gap between the convergence of MMD scores of our proposed approach and $\alpha$-WGPGAN compared to semi-BNP MMD and AE+GMMN around zero.  Our proposed model yields even lower MMD scores than $\alpha$-WGPGAN.

On the other hand, despite the limited number of samples shown in Figure \ref{visual-celebA}, the generated images by our proposed model exhibit a remarkable diversity that encompasses a range of hair colors and styles, skin tones, and accessories such as glasses and hats. This variety suggests that our results are comparable to those produced by $\alpha$-WGPGAN. However, it is worth noting that the images generated by the AE+GMMN model not only suffer from blurriness but also appear to be heavily biased toward female faces, indicating a potential issue with mode collapse in this type of dataset. While the samples generated by the semi-BNP MMD model displayed better results than the AE+GMMN, there is still a level of noise present, indicating that more iterations are needed to ensure model convergence.

\begin{figure}[tht]
    \centering
        \subfloat[Training dataset]{\includegraphics[width=1\linewidth, height=.23\linewidth]{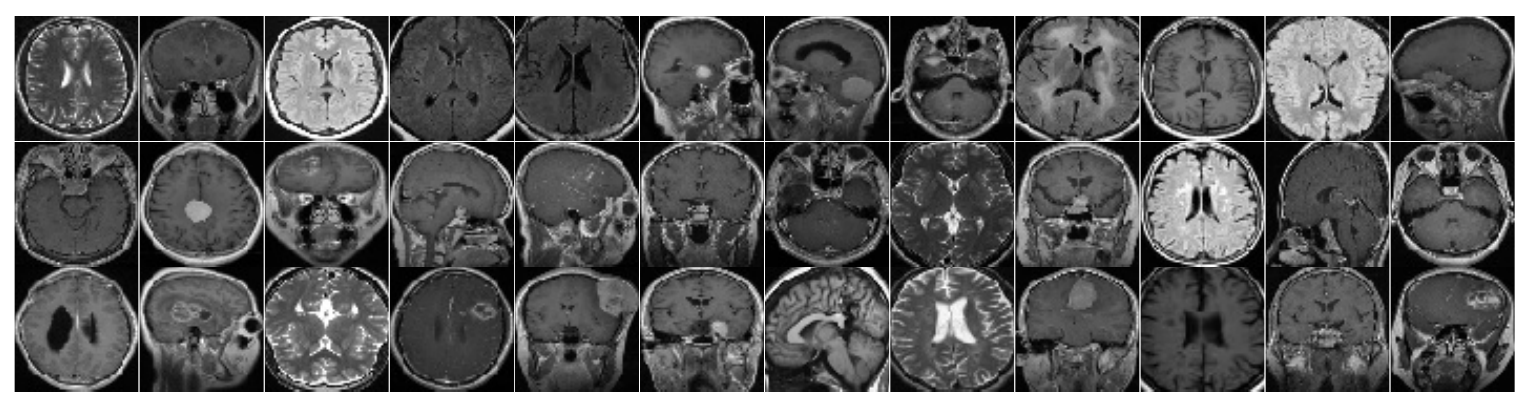}}
        \hspace{1.5cm}\\
        \subfloat[Ours]{\includegraphics[width=1\linewidth, height=.23\linewidth]{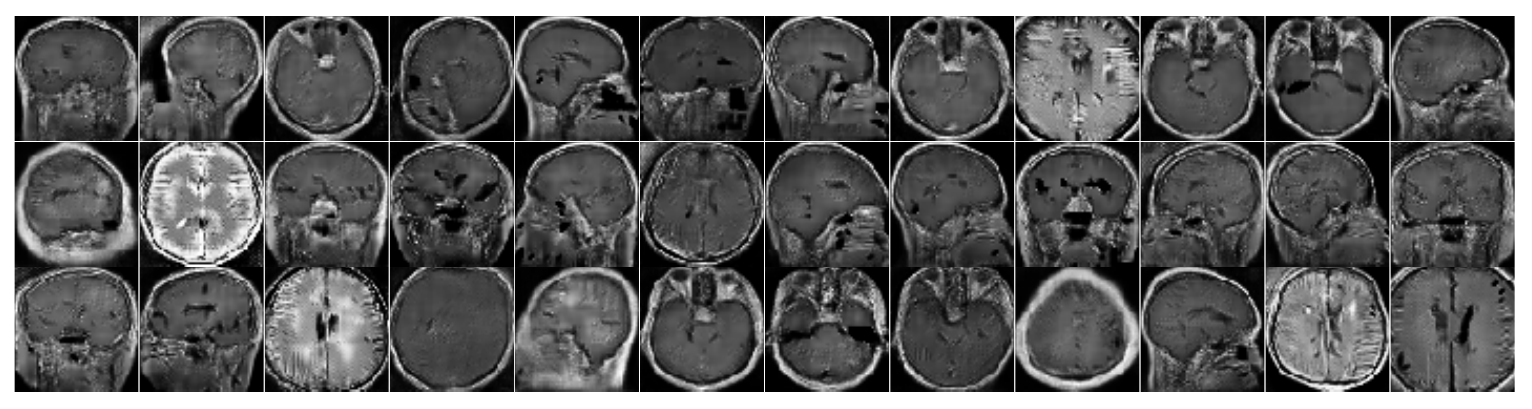}}
        \hspace{1.5cm}\\
        \subfloat[Semi-BNP MMD]{\includegraphics[width=1\linewidth, height=.23\linewidth]{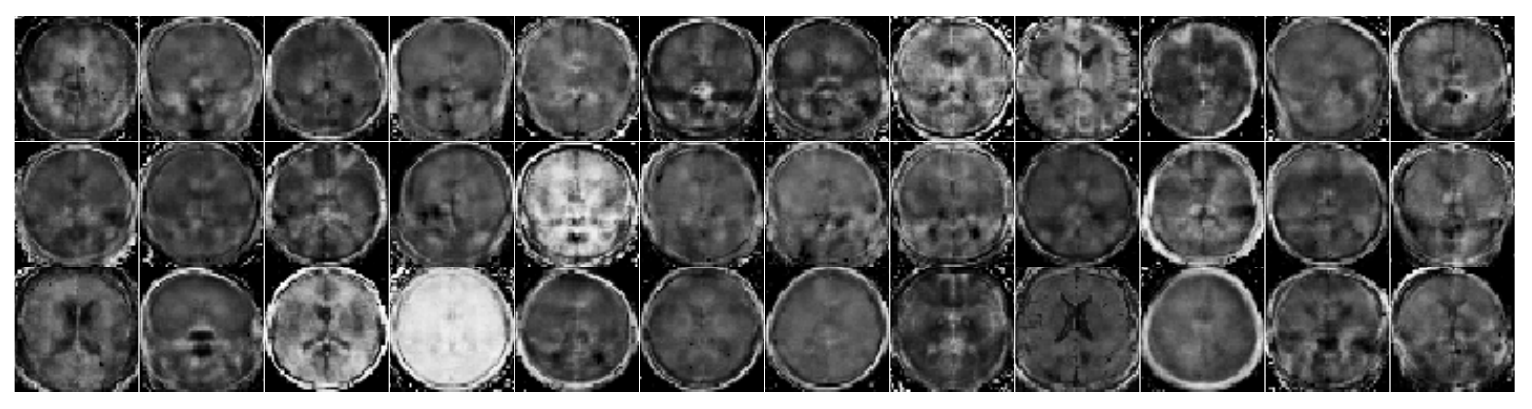}}
        \hspace{1.5cm}\\
        \subfloat[AE+GMMN]{\includegraphics[width=1\linewidth, height=.23\linewidth]{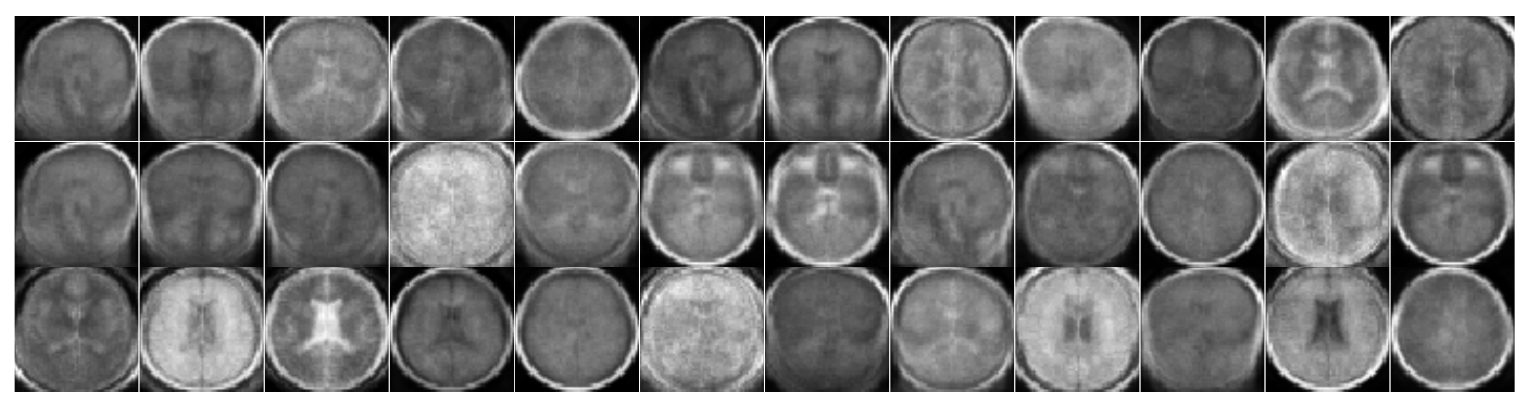}}
        \hspace{1.5cm}\\
        \subfloat[$\alpha$-WGPGAN]{\includegraphics[width=1\linewidth, height=.23\linewidth]{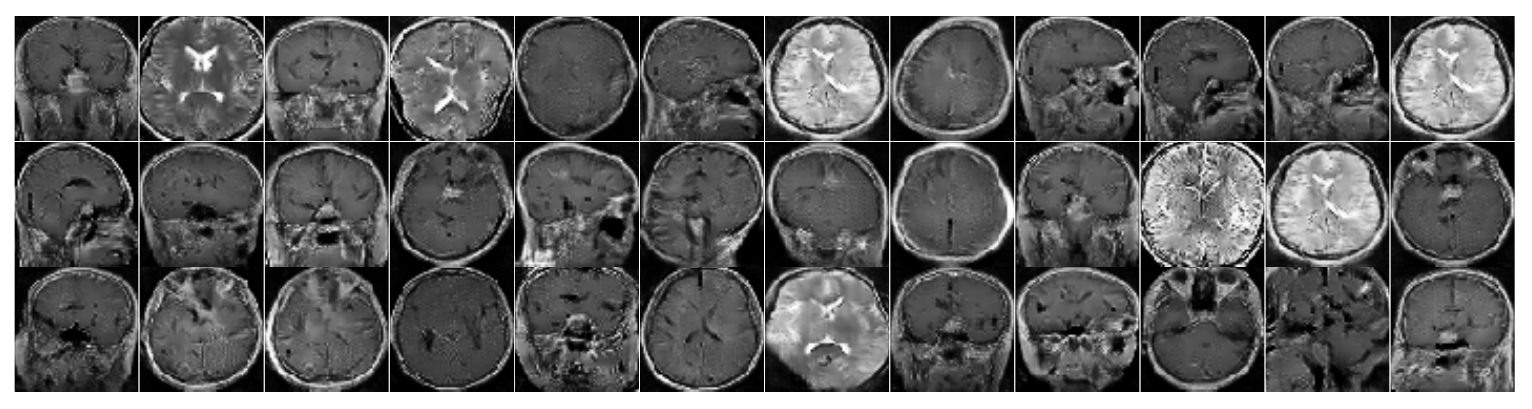}}
        \hspace{1.5cm}\\
    \caption{ Visualisation of MRI training samples and generated samples using various generative models after 400000 iterations.}\label{visual-MRI}
\end{figure}

\begin{figure}[tht]
    \centering
        \subfloat[Training dataset]{\includegraphics[width=1\linewidth, height=.23\linewidth]{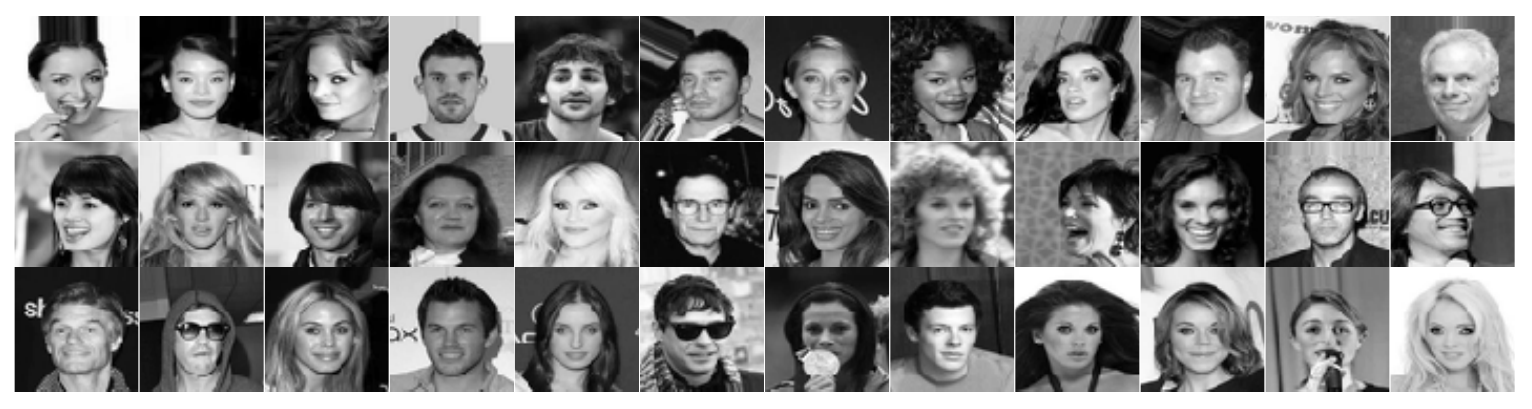}}
        \hspace{1.5cm}\\
        \subfloat[Ours]{\includegraphics[width=1\linewidth, height=.23\linewidth]{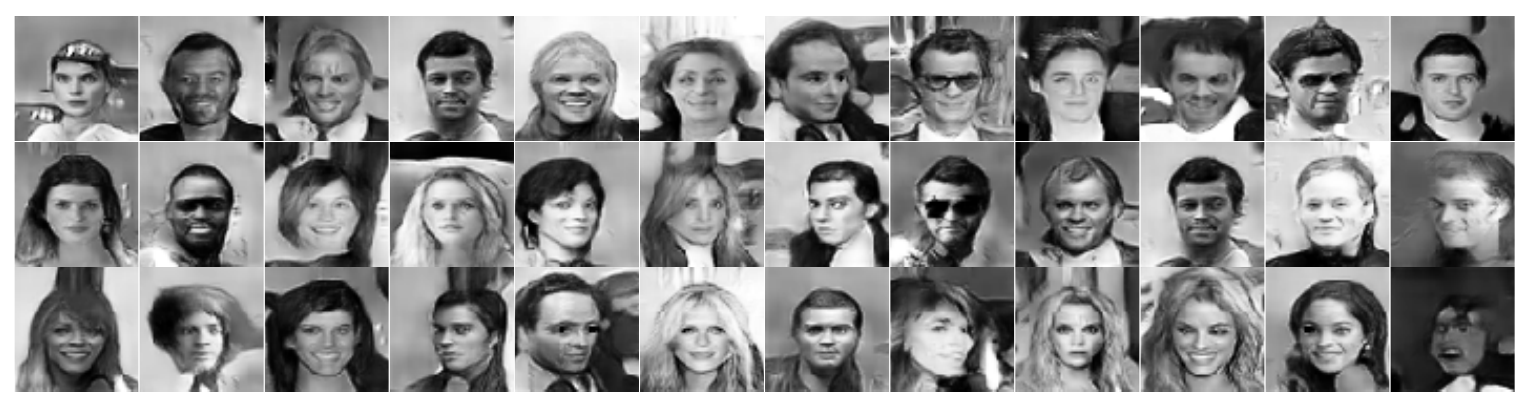}}
        \hspace{1.5cm}\\
        \subfloat[Semi-BNP MMD]{\includegraphics[width=1\linewidth, height=.23\linewidth]{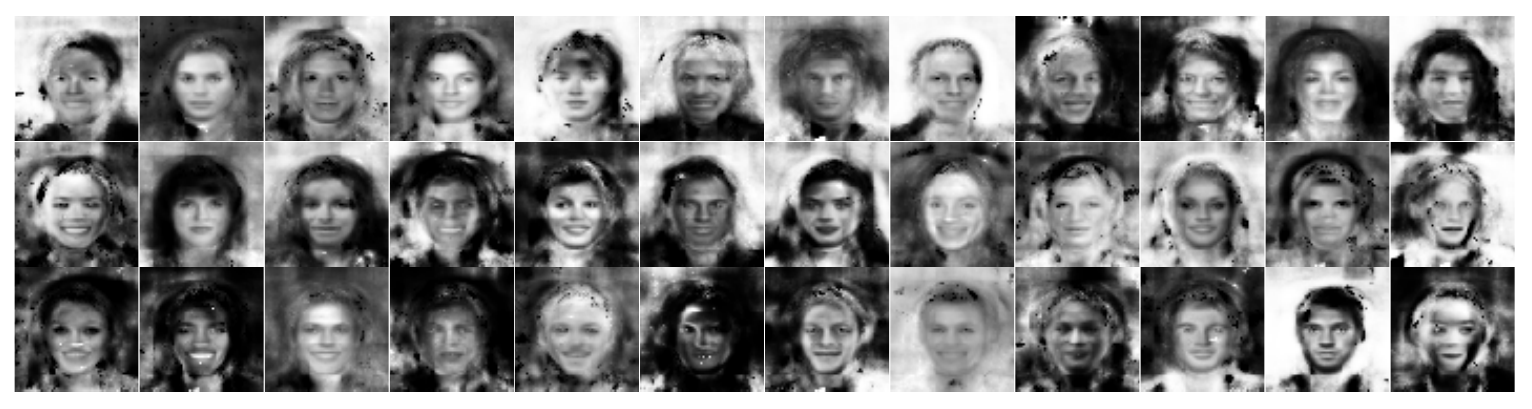}}
        \hspace{1.5cm}\\
        \subfloat[AE+GMMN]{\includegraphics[width=1\linewidth, height=.23\linewidth]{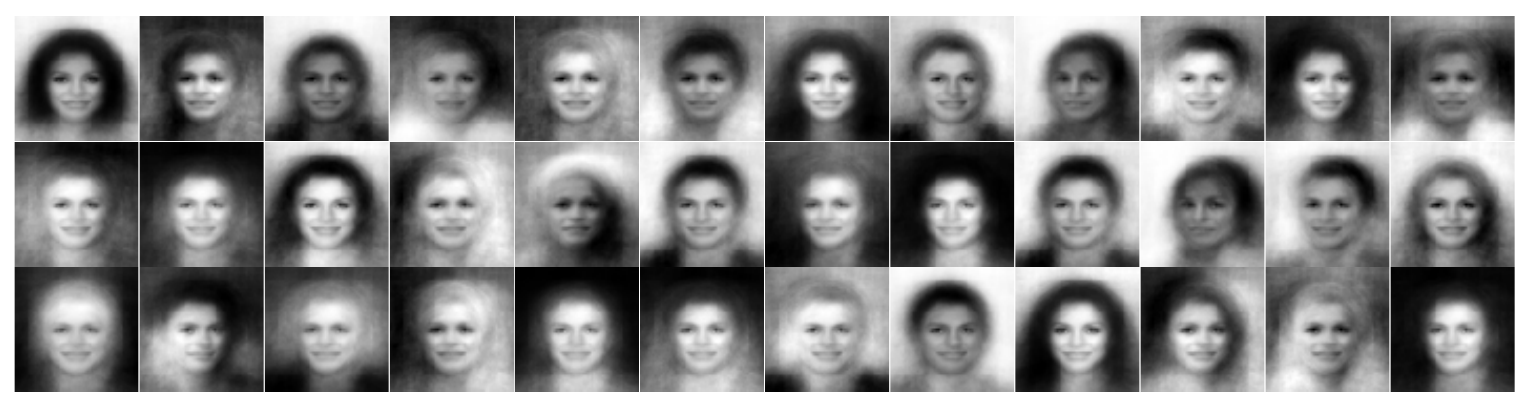}}
        \hspace{1.5cm}\\
        \subfloat[$\alpha$-WGPGAN]{\includegraphics[width=1\linewidth, height=.23\linewidth]{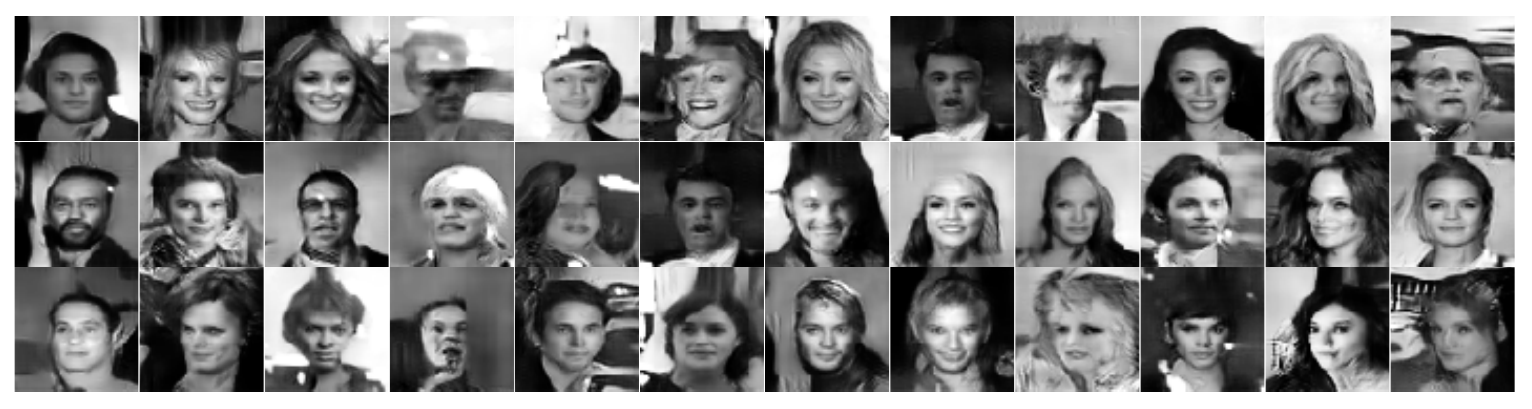}}
        \hspace{1.5cm}\\
    \caption{ Visualisation of celebA training samples and generated samples using various generative models after 400000 iterations.}\label{visual-celebA}
\end{figure}

\subsection{\textcolor{black}{Comparative Analysis of the BNP Model}}
Although the proposed model outperformed well-known baselines, further investigation is required to understand the individual contributions of each component within the triple model. Additionally, there is interest in exploring how the BNPL framework impacts training stability. These concerns are addressed in the following sections.
\subsubsection{\textcolor{black}{triple model vs. partial models}}
We evaluate 1000 samples generated by the triple model against two partial models within the BNPL framework: VAE-GAN (dual model) and GAN (single model). Figure \ref{sub-component} presents box plots of MMD scores for each model across all datasets considered in this study. The results demonstrate the significant advantage of the triple model over the dual model, as well as the dual model over the single model. Progressing from the single model to the triple model results in reduced MMD scores, indicating improved performance and greater diversity in the generated samples.

To further illustrate the differences among the triple, dual, and single models, we particularly provide relative frequency plots of predicted labels for generated handwritten digits as determined by the trained classifier shown in Figure \ref{classifierr}. Figure \ref{fig:freq_percentage} demonstrates that the triple model more effectively preserves the frequency percentage of labels compared to the other two models, highlighting its superior performance. Figure \ref{fig:abs_diff} depicts the absolute differences between the frequency percentages of real and generated samples across all labels. The results show that minimizing these absolute differences across labels is most effectively achieved by the triple model, whereas the single GAN model performs the worst.

With particular attention, although the dual model shows improved diversity compared to the single GAN, as evidenced in Figures \ref{fig:freq_percentage} and \ref{fig:abs_diff}, it still produces unclear samples (highlighted by the red square within the green frame in Figure \ref{fig:gen_samples}). These unclear samples confuse the classifier in predicting digit labels, ultimately affecting the diversity of the generated samples. This highlights the limitation of the dual model, further emphasizing the advantage of the triple model in generating both diverse and clear samples.


\begin{figure}[ht]
    \centering
        \subfloat[Numbers]{\includegraphics[width=.5\linewidth, height=.35\linewidth]{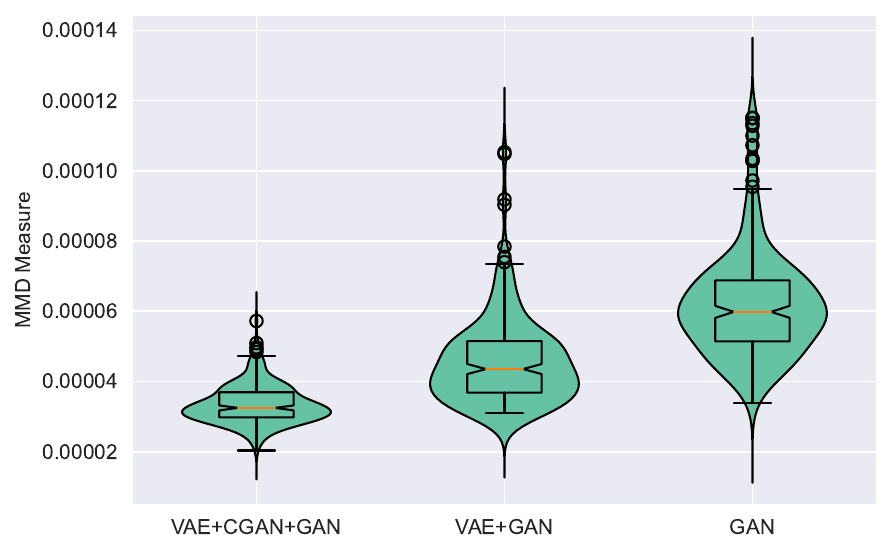}}
        \subfloat[Letters]{\includegraphics[width=.5\linewidth, height=.35\linewidth]{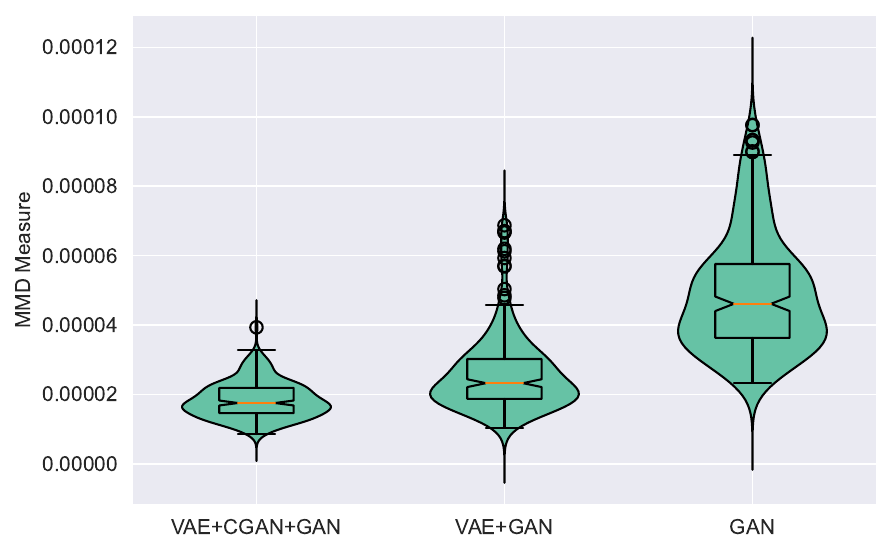}}\\
        \subfloat[MRI]
        {\includegraphics[width=.5\linewidth, height=.4\linewidth]{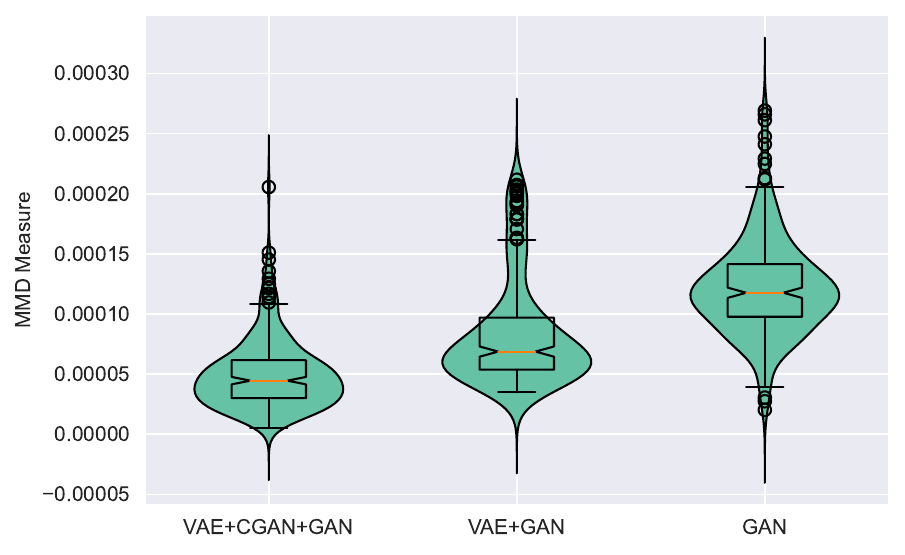}}
        \subfloat[CelebA]{\includegraphics[width=.5\linewidth, height=.4\linewidth]{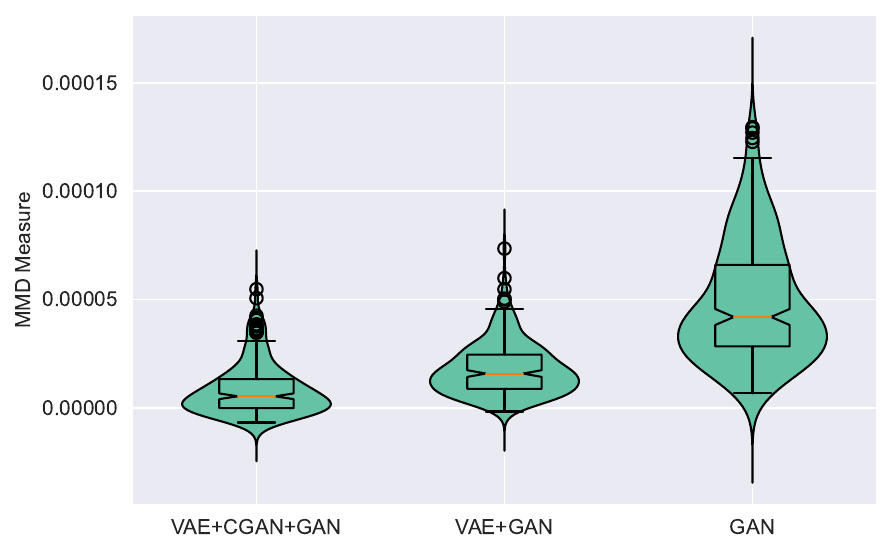}}
    \caption{\textcolor{black}{Comparison of MMD scores for various model configurations across datasets.}}\label{sub-component}
\end{figure}

\begin{figure}[ht]
    {\subfloat[Frequency Percentage]{\includegraphics[width=.49\textwidth,height=4cm]{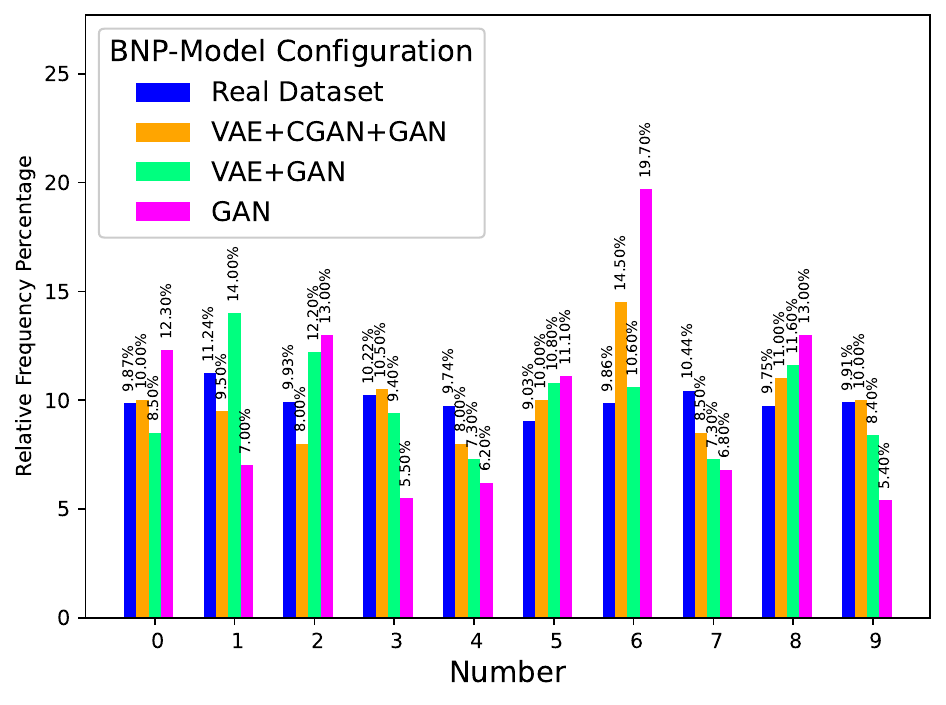}
    \label{fig:freq_percentage}}}
    {\subfloat[Absolute Difference of Frequency Percentage]{\includegraphics[width=.49\textwidth,height=4cm]{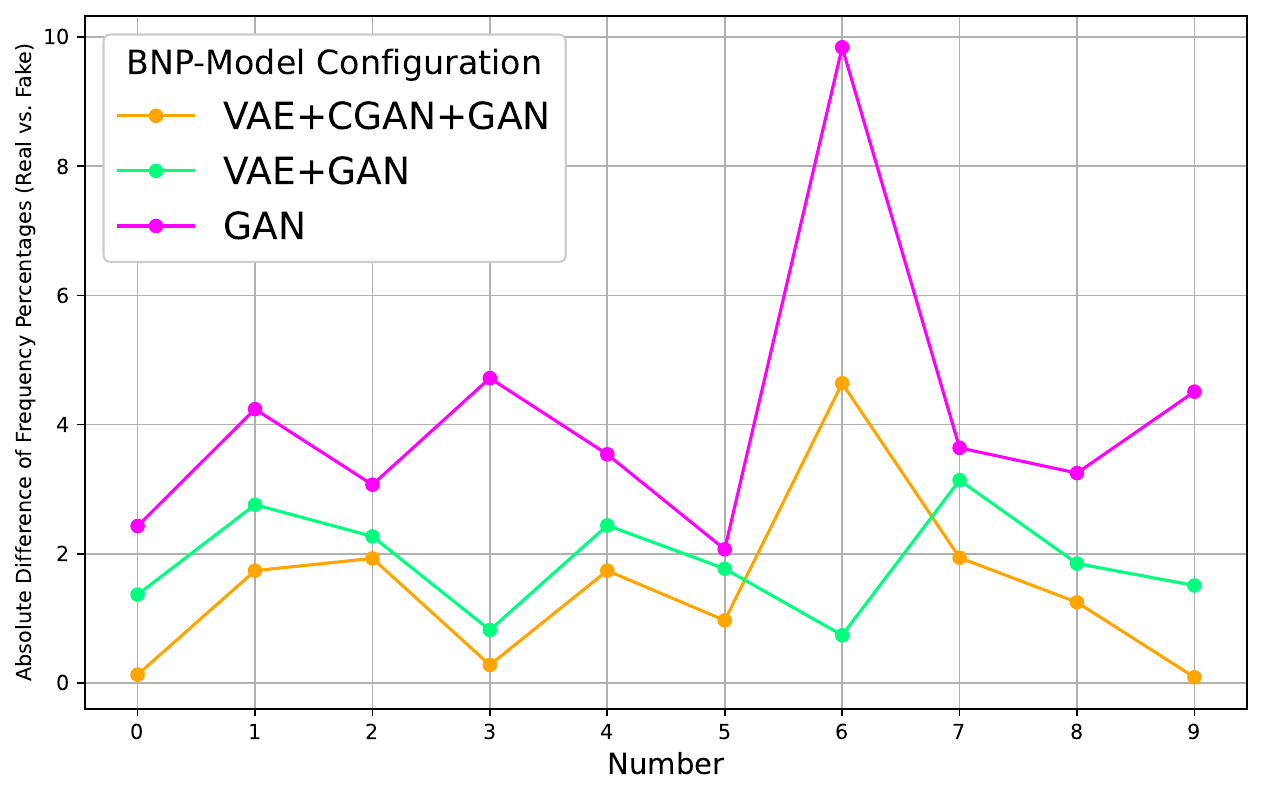}
    \label{fig:abs_diff}}}\\
    {\subfloat[Generated Samples]{\includegraphics[width=1\textwidth,height=3cm]{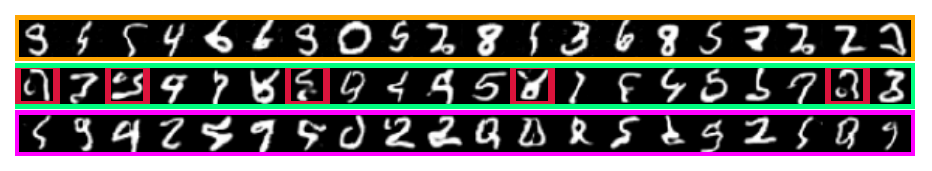}
    \label{fig:gen_samples}}}\\
    \caption{\textcolor{black}{Comparison of the triple BNP model vs. partial models: The triple model—VAE+CGAN+GAN (orange)—is compared to two partial models: the dual model VAE+GAN (green) and the single model GAN (pink). All models are trained by minimizing the combined distance $\text{WMMD}(F^{\mathrm{pos}}, F_{Gen_{\boldsymbol{\omega}}})$. The top panel shows statistical analysis on 1000 generated samples, while the bottom panel visualizes 20 generated samples.}}
    \label{fig:comparison}
\end{figure}


\subsubsection{\textcolor{black}{BNPL Compared to the FNP Counterpart Using Resampling Technique}}\label{sec:BNP vs FNP}
 The training performance of BNPL is compared to its exact FNP counterpart, which employs a resampling technique, as described in the final paragraph of Section \ref{sec:wasserstein}. During each iteration of the training process with a mini-batch sample $\mathbf{X}_{1:n_{mb}}$, instead of applying the BNP approximation \eqref{w-mmd-pos} for training the generator, a simple resampling technique with replacement is used. This involves drawing samples $\mathbf{X}^{\prime}_{1}, \ldots, \mathbf{X}^{\prime}_{2n_{mb}} \overset{i.i.d.}{\sim} F_{\mathbf{X}_{1:n_{mb}}}$ to compute the exact FNP counterpart $\text{WMMD}(F_{\mathbf{X}^{\prime}_{1:2n_{mb}}},F_{Gen_{\boldsymbol{\omega}}(\mathbf{Z}_{1:2n_{mb}})})$, where $\mathbf{Z}_i$ is a latent variable sampled from either the noise distribution or the encoder distribution.

It is worth emphasizing that $N$ in BNP approximation \eqref{w-mmd-pos} is a random variable, as defined by \eqref{random-stopping} within the DP context. Therefore, it is not meaningful to use $N$ as the resampling size in the FNP counterpart. Instead, a resampling size of $2n_{mb}$ was chosen for these experiments, as there is no specific rule for determining this parameter.

Figure \ref{loss-rate} presents the WMMD values over different iterations for both strategies across all datasets included in this study. The results demonstrate notably reduced variation in learning rates for BNPL compared to its FNP counterpart, along with improved convergence. These findings suggest more stabilized training under the BNPL framework.

\begin{figure}[ht]
    \centering
        \subfloat[Numbers]{\includegraphics[width=.5\linewidth, height=.35\linewidth]{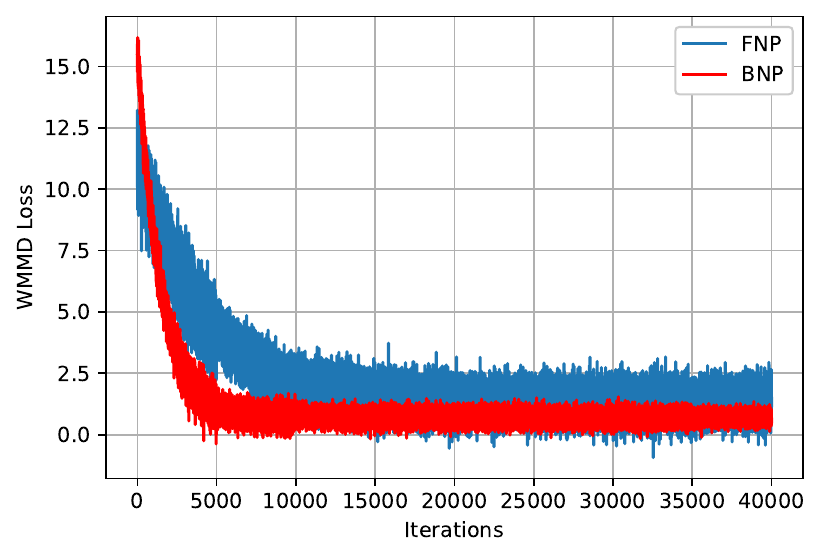}}
        \subfloat[Letters]{\includegraphics[width=.5\linewidth, height=.35\linewidth]{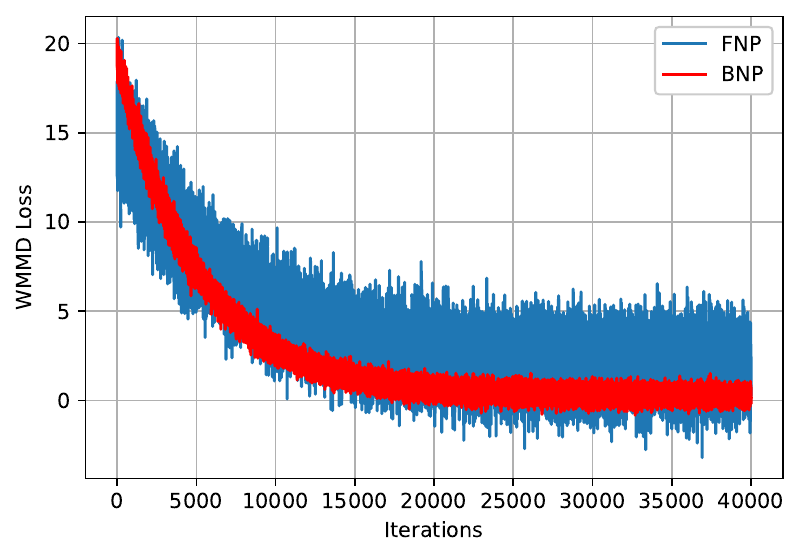}}\\
        \subfloat[MRI]
        {\includegraphics[width=.5\linewidth, height=.4\linewidth]{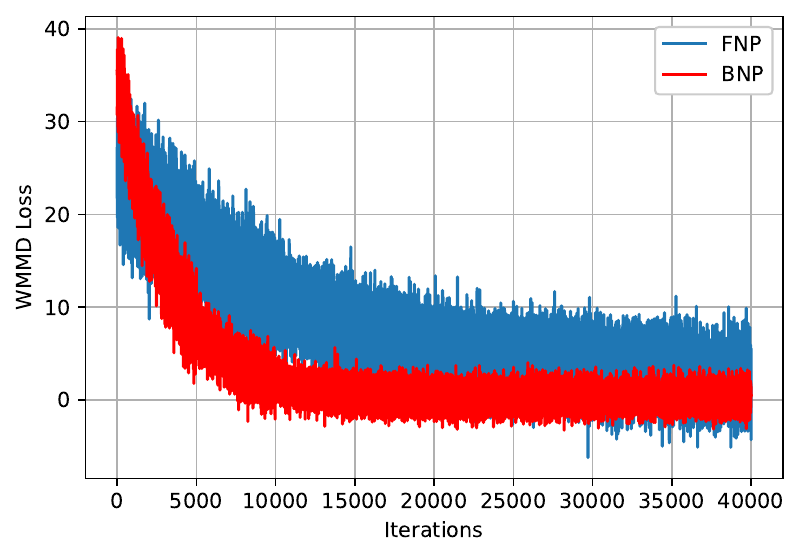}}
        \subfloat[CelebA]{\includegraphics[width=.5\linewidth, height=.4\linewidth]{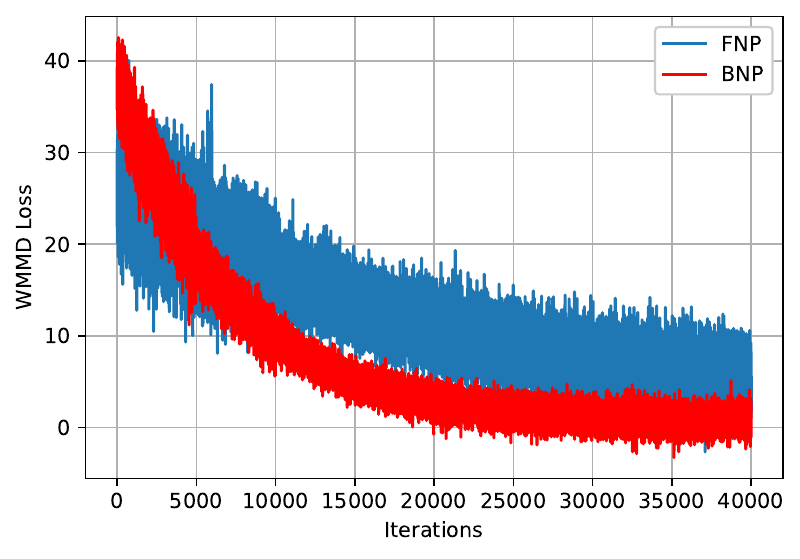}}
    \caption{\textcolor{black}{BNPL rate compared to the learning rate of the exact FNP counterpart, which employs resampling in computing the WMMD distance.}}\label{loss-rate}
\end{figure}

\section{Conclusion}\label{sec:conclusion}
We have proposed a powerful triple generative model that has produced realistic samples within the BNPL framework. We have extended the semi-BNP MMD GAN, introduced by \cite{fazeli2023semi}, by incorporating both Wasserstein and MMD measures into the GAN loss function along with a VAE and an CGAN to enhance its ability to produce diverse outputs. Different types of datasets have been used to examine the performance of the proposed model, indicating that it is a competitive model. Our model has also been compared to several generative models, such as $\alpha$-WGPGAN, and has outperformed them in terms of mitigating mode collapse and producing noise-free images. 

To improve the efficiency and effectiveness of communication between the generator and discriminator networks, we plan to employ multiplex networks in our model. Multiplex networks can tackle the problem of data sparsity in GANs by incorporating multiple types of interactions and relationships between nodes. This allows the model to learn from a larger and more diverse set of data, improving its ability to generate realistic samples. For future work, we are considering extending our proposed idea to include 3D medical datasets for detecting anomalies like Down's syndrome in fetuses. This development could prevent the birth of defective babies.  We hope that a more powerful generative model can help solve important issues in medical imaging.
\bibliographystyle{apalike}
\bibliography{GAN-bibliography}

\begin{thebibliography}{47}
\providecommand{\natexlab}[1]{#1}
\providecommand{\url}[1]{\texttt{#1}}
\expandafter\ifx\csname urlstyle\endcsname\relax
  \providecommand{\doi}[1]{doi: #1}\else
  \providecommand{\doi}{doi: \begingroup \urlstyle{rm}\Url}\fi

\bibitem[Arjovsky and Bottou(2017)]{Arjovsky}
M.~Arjovsky and L.~Bottou.
\newblock Towards principled methods for training generative adversarial
  networks.
\newblock \emph{CoRR}, abs/1701.04862, 2017.
\newblock URL \url{http://arxiv.org/abs/1701.04862}.

\bibitem[Arjovsky et~al.(2017)Arjovsky, Chintala, and
  Bottou]{arjovsky2017wasserstein}
M.~Arjovsky, S.~Chintala, and L.~Bottou.
\newblock Wasserstein generative adversarial networks.
\newblock In \emph{International Conference on Machine Learning}, pages
  214--223. PMLR, 2017.

\bibitem[Bariletto and Ho(2024)]{bariletto2024bayesian}
N.~Bariletto and N.~Ho.
\newblock Bayesian nonparametrics meets data-driven distributionally robust
  optimization.
\newblock In \emph{The Thirty-eighth Annual Conference on Neural Information
  Processing Systems}, 2024.
\newblock URL \url{https://openreview.net/forum?id=8CguPoe3TP}.

\bibitem[Bengio et~al.(2006)Bengio, Lamblin, Popovici, and
  Larochelle]{bengio2006greedy}
Y.~Bengio, P.~Lamblin, D.~Popovici, and H.~Larochelle.
\newblock Greedy layer-wise training of deep networks.
\newblock \emph{Advances in Neural Information Processing Systems}, 19, 2006.

\bibitem[Bondesson(1982)]{bondesson1982simulation}
L.~Bondesson.
\newblock On simulation from infinitely divisible distributions.
\newblock \emph{Advances in Applied Probability}, 14\penalty0 (4):\penalty0
  855--869, 1982.

\bibitem[Ch{\'e}rief-Abdellatif and Alquier(2022)]{cherief2022finite}
B.-E. Ch{\'e}rief-Abdellatif and P.~Alquier.
\newblock Finite sample properties of parametric {MMD} estimation: {R}obustness
  to misspecification and dependence.
\newblock \emph{Bernoulli}, 28\penalty0 (1):\penalty0 181--213, 2022.

\bibitem[Cohen et~al.(2017)Cohen, Afshar, Tapson, and
  Van~Schaik]{cohen2017emnist}
G.~Cohen, S.~Afshar, J.~Tapson, and A.~Van~Schaik.
\newblock {EMNIST}: Extending {MNIST} to handwritten letters.
\newblock In \emph{2017 International Joint Conference on Neural Networks},
  pages 2921--2926. IEEE, 2017.

\bibitem[Creswell and Bharath(2018)]{creswell2018denoising}
A.~Creswell and A.~A. Bharath.
\newblock Denoising adversarial autoencoders.
\newblock \emph{IEEE Transactions on Neural Networks and Learning Systems},
  30\penalty0 (4):\penalty0 968--984, 2018.

\bibitem[Dellaporta et~al.(2022)Dellaporta, Knoblauch, Damoulas, and
  Briol]{dellaporta2022robust}
C.~Dellaporta, J.~Knoblauch, T.~Damoulas, and F.-X. Briol.
\newblock Robust {B}ayesian inference for simulator-based models via the {MMD}
  posterior bootstrap.
\newblock In \emph{International Conference on Artificial Intelligence and
  Statistics}, pages 943--970. PMLR, 2022.

\bibitem[Donahue et~al.(2016)Donahue, Kr{\"a}henb{\"u}hl, and
  Darrell]{donahue2016adversarial}
J.~Donahue, P.~Kr{\"a}henb{\"u}hl, and T.~Darrell.
\newblock Adversarial feature learning.
\newblock \emph{arXiv preprint arXiv:1605.09782}, 2016.

\bibitem[Dumoulin et~al.(2016)Dumoulin, Belghazi, Poole, Mastropietro, Lamb,
  Arjovsky, and Courville]{dumoulin2016adversarially}
V.~Dumoulin, I.~Belghazi, B.~Poole, O.~Mastropietro, A.~Lamb, M.~Arjovsky, and
  A.~Courville.
\newblock Adversarially learned inference.
\newblock \emph{arXiv preprint arXiv:1606.00704}, 2016.

\bibitem[Dziugaite et~al.(2015)Dziugaite, Roy, and
  Ghahramani]{dziugaite2015training}
G.~K. Dziugaite, D.~M. Roy, and Z.~Ghahramani.
\newblock Training generative neural networks via maximum mean discrepancy
  optimization.
\newblock In \emph{Proceedings of the Thirty-First Conference on Uncertainty in
  Artificial Intelligence}, pages 258--267, 2015.

\bibitem[Fazeli-Asl et~al.(2024)Fazeli-Asl, Zhang, and Lin]{fazeli2023semi}
F.~Fazeli-Asl, M.~M. Zhang, and L.~Lin.
\newblock A semi-bayesian nonparametric estimator of the maximum mean
  discrepancy measure: Applications in goodness-of-fit testing and generative
  adversarial networks.
\newblock \emph{Transactions on Machine Learning Research}, 2024.
\newblock ISSN 2835-8856.
\newblock URL \url{https://openreview.net/forum?id=lUnlHS1FYT}.

\bibitem[Ferguson(1973)]{Ferguson}
T.~S. Ferguson.
\newblock A {B}ayesian analysis of some nonparametric problems.
\newblock \emph{The Annals of Statistics}, 1\penalty0 (2):\penalty0 209--230,
  1973.

\bibitem[Fong et~al.(2019)Fong, Lyddon, and Holmes]{fong2019scalable}
E.~Fong, S.~P. Lyddon, and C.~C. Holmes.
\newblock Scalable nonparametric sampling from multimodal posteriors with the
  posterior bootstrap.
\newblock In \emph{International Conference on Machine Learning}, pages
  1952--1962. PMLR, 2019.

\bibitem[Goodfellow et~al.(2014)Goodfellow, Pouget-Abadie, Mirza, Xu,
  Warde-Farley, Ozair, Courville, and Bengio]{Goodfellow}
I.~Goodfellow, J.~Pouget-Abadie, M.~Mirza, B.~Xu, D.~Warde-Farley, S.~Ozair,
  A.~Courville, and Y.~Bengio.
\newblock Generative adversarial nets.
\newblock \emph{Advances in Neural Information Processing Systems},
  27:\penalty0 2672--2680, 2014.

\bibitem[Gretton et~al.(2012)Gretton, Borgwardt, Rasch, Sch{\"o}lkopf, and
  Smola]{Gretton}
A.~Gretton, K.~M. Borgwardt, M.~J. Rasch, B.~Sch{\"o}lkopf, and A.~Smola.
\newblock A kernel two-sample test.
\newblock \emph{The Journal of Machine Learning Research}, 13\penalty0
  (1):\penalty0 723--773, 2012.

\bibitem[Gulrajani et~al.(2017)Gulrajani, Ahmed, Arjovsky, Dumoulin, and
  Courville]{gulrajani2017improved}
I.~Gulrajani, F.~Ahmed, M.~Arjovsky, V.~Dumoulin, and A.~C. Courville.
\newblock Improved training of {W}asserstein {GAN}s.
\newblock \emph{Advances in Neural Information Processing Systems}, 30, 2017.

\bibitem[Hotelling(1933)]{hotelling1933analysis}
H.~Hotelling.
\newblock Analysis of a complex of statistical variables into principal
  components.
\newblock \emph{Journal of educational psychology}, 24\penalty0 (6):\penalty0
  417, 1933.

\bibitem[Im et~al.(2017)Im, Ahn, Memisevic, and Bengio]{im2017denoising}
D.~Im, S.~Ahn, R.~Memisevic, and Y.~Bengio.
\newblock Denoising criterion for variational auto-encoding framework.
\newblock In \emph{Proceedings of the AAAI Conference on Artificial
  Intelligence}, volume~31, 2017.

\bibitem[Ishwaran and Zarepour(2002)]{Ishwaran}
H.~Ishwaran and M.~Zarepour.
\newblock Exact and approximate sum representations for the {D}irichlet
  process.
\newblock \emph{Canadian Journal of Statistics}, 30\penalty0 (2):\penalty0
  269--283, 2002.

\bibitem[Jafari et~al.(2023)Jafari, Cevik, and Basar]{jafari2023improved}
S.~M. Jafari, M.~Cevik, and A.~Basar.
\newblock Improved $\alpha$-{GAN} architecture for generating {3D} connected
  volumes with an application to radiosurgery treatment planning.
\newblock \emph{Applied Intelligence}, pages 1--27, 2023.

\bibitem[Kingma and Welling(2013)]{kingma2013auto}
D.~P. Kingma and M.~Welling.
\newblock Auto-encoding variational {B}ayes.
\newblock \emph{arXiv preprint arXiv:1312.6114}, 2013.

\bibitem[Kingma et~al.(2016)Kingma, Salimans, Jozefowicz, Chen, Sutskever, and
  Welling]{kingma2016improved}
D.~P. Kingma, T.~Salimans, R.~Jozefowicz, X.~Chen, I.~Sutskever, and
  M.~Welling.
\newblock Improved variational inference with inverse autoregressive flow.
\newblock \emph{Advances in neural information processing systems}, 29, 2016.

\bibitem[Kodali et~al.(2017)Kodali, Abernethy, Hays, and
  Kira]{kodali2017convergence}
N.~Kodali, J.~Abernethy, J.~Hays, and Z.~Kira.
\newblock On convergence and stability of {GAN}s.
\newblock \emph{arXiv preprint arXiv:1705.07215}, 2017.

\bibitem[Kwon et~al.(2019)Kwon, Han, and Kim]{kwon2019generation}
G.~Kwon, C.~Han, and D.-S. Kim.
\newblock Generation of {3D} brain {MRI} using auto-encoding generative
  adversarial networks.
\newblock In \emph{Medical Image Computing and Computer Assisted
  Intervention--MICCAI 2019: 22nd International Conference, Shenzhen, China,
  October 13--17, 2019, Proceedings, Part III 22}, pages 118--126. Springer,
  2019.

\bibitem[Larsen et~al.(2016)Larsen, S{\o}nderby, Larochelle, and
  Winther]{larsen2016autoencoding}
A.~B.~L. Larsen, S.~K. S{\o}nderby, H.~Larochelle, and O.~Winther.
\newblock Autoencoding beyond pixels using a learned similarity metric.
\newblock In \emph{International conference on machine learning}, pages
  1558--1566. PMLR, 2016.

\bibitem[LeCun(1998)]{lecun1998mnist}
Y.~LeCun.
\newblock The {MNIST} database of handwritten digits.
\newblock 1998.
\newblock URL \url{http://yann.lecun.com/exdb/mnist/}.

\bibitem[Lee et~al.(2024)Lee, Nam, Fong, and Lee]{lee2024enhancing}
H.~Lee, G.~Nam, E.~Fong, and J.~Lee.
\newblock Enhancing transfer learning with flexible nonparametric posterior
  sampling.
\newblock \emph{arXiv preprint arXiv:2403.07282}, 2024.

\bibitem[Li et~al.(2015)Li, Swersky, and Zemel]{Li}
Y.~Li, K.~Swersky, and R.~Zemel.
\newblock Generative moment matching networks.
\newblock In \emph{International Conference on Machine Learning}, pages
  1718--1727. PMLR, 2015.

\bibitem[Liu et~al.(2015)Liu, Luo, Wang, and Tang]{liu2015faceattributes}
Z.~Liu, P.~Luo, X.~Wang, and X.~Tang.
\newblock Deep learning face attributes in the wild.
\newblock In \emph{Proceedings of International Conference on Computer Vision
  (ICCV)}, December 2015.

\bibitem[Lyddon et~al.(2018)Lyddon, Walker, and
  Holmes]{lyddon2018nonparametric}
S.~P. Lyddon, S.~G. Walker, and C.~C. Holmes.
\newblock Nonparametric learning from {B}ayesian models with randomized
  objective functions.
\newblock \emph{Advances in Neural Information Processing Systems}, 31, 2018.

\bibitem[Lyddon et~al.(2019)Lyddon, Holmes, and Walker]{lyddon2019general}
S.~P. Lyddon, C.~C. Holmes, and S.~G. Walker.
\newblock General {B}ayesian updating and the loss-likelihood bootstrap.
\newblock \emph{Biometrika}, 106\penalty0 (2):\penalty0 465--478, 2019.

\bibitem[Ma et~al.(2024)Ma, Dong, Xia, He, Chen, Ren, and
  Song]{ma2024multivariate}
Q.~Ma, M.~Dong, C.~Xia, X.~He, R.~Chen, M.~Ren, and M.~Song.
\newblock A multivariate normal distribution data generative model in
  small-sample-based fault diagnosis: Taking traction circuit breaker as an
  example.
\newblock \emph{IEEE Transactions on Intelligent Transportation Systems}, 2024.

\bibitem[Makhzani et~al.(2015)Makhzani, Shlens, Jaitly, Goodfellow, and
  Frey]{makhzani2015adversarial}
A.~Makhzani, J.~Shlens, N.~Jaitly, I.~Goodfellow, and B.~Frey.
\newblock Adversarial autoencoders.
\newblock \emph{arXiv preprint arXiv:1511.05644}, 2015.

\bibitem[Mescheder et~al.(2017)Mescheder, Nowozin, and
  Geiger]{mescheder2017adversarial}
L.~Mescheder, S.~Nowozin, and A.~Geiger.
\newblock Adversarial variational {B}ayes: Unifying variational autoencoders
  and generative adversarial networks.
\newblock In \emph{International Conference on Machine Learning}, pages
  2391--2400. PMLR, 2017.

\bibitem[M{\"u}ller(1997)]{muller1997integral}
A.~M{\"u}ller.
\newblock Integral probability metrics and their generating classes of
  functions.
\newblock \emph{Advances in Applied Probability}, 29\penalty0 (2):\penalty0
  429--443, 1997.

\bibitem[Nickparvar(2021)]{msoudnickparvar2021}
M.~Nickparvar.
\newblock Brain tumor {MRI} dataset, 2021.
\newblock \url{https://www.kaggle.com/dsv/2645886}.

\bibitem[Rosca et~al.(2017)Rosca, Lakshminarayanan, Warde-Farley, and
  Mohamed]{rosca2017variational}
M.~Rosca, B.~Lakshminarayanan, D.~Warde-Farley, and S.~Mohamed.
\newblock Variational approaches for auto-encoding generative adversarial
  networks.
\newblock \emph{arXiv preprint arXiv:1706.04987}, 2017.

\bibitem[Salimans et~al.(2016)Salimans, Goodfellow, Zaremba, Cheung, Radford,
  and Chen]{salimans2016improved}
T.~Salimans, I.~Goodfellow, W.~Zaremba, V.~Cheung, A.~Radford, and X.~Chen.
\newblock Improved techniques for training {GAN}s.
\newblock \emph{Advances in Neural Information Processing Systems}, 29, 2016.

\bibitem[Sethuraman(1994)]{sethuraman1994constructive}
J.~Sethuraman.
\newblock A constructive definition of {D}irichlet priors.
\newblock \emph{Statistica sinica}, pages 639--650, 1994.

\bibitem[Theis et~al.(2015)Theis, {van den Oord}, and Bethge]{theis2015note}
L.~Theis, A.~{van den Oord}, and M.~Bethge.
\newblock A note on the evaluation of generative models.
\newblock \emph{arXiv preprint arXiv:1511.01844}, 2015.

\bibitem[Ulyanov et~al.(2018)Ulyanov, Vedaldi, and Lempitsky]{ulyanov2018takes}
D.~Ulyanov, A.~Vedaldi, and V.~Lempitsky.
\newblock It takes (only) two: Adversarial generator-encoder networks.
\newblock In \emph{Proceedings of the AAAI Conference on Artificial
  Intelligence}, volume~32, 2018.

\bibitem[Villani(2008)]{villani2008optimal}
C.~Villani.
\newblock Optimal transport, old and new.
\newblock \emph{Grundlehren der mathematischen Wissenschaften}, 3, 2008.

\bibitem[Yang et~al.(2017)Yang, Hu, Salakhutdinov, and
  Berg-Kirkpatrick]{yang2017improved}
Z.~Yang, Z.~Hu, R.~Salakhutdinov, and T.~Berg-Kirkpatrick.
\newblock Improved variational autoencoders for text modeling using dilated
  convolutions.
\newblock In \emph{International conference on machine learning}, pages
  3881--3890. PMLR, 2017.

\bibitem[Zarepour and Al-Labadi(2012)]{zarepour2012rapid}
M.~Zarepour and L.~Al-Labadi.
\newblock On a rapid simulation of the {D}irichlet process.
\newblock \emph{Statistics \& Probability Letters}, 82\penalty0 (5):\penalty0
  916--924, 2012.

\bibitem[Zhu et~al.(1997)Zhu, Byrd, Lu, and Nocedal]{zhu1997algorithm}
C.~Zhu, R.~H. Byrd, P.~Lu, and J.~Nocedal.
\newblock Algorithm 778: {L-BFGS-B}: {F}ortran subroutines for large-scale
  bound-constrained optimization.
\newblock \emph{ACM Transactions on Mathematical Software}, 23\penalty0
  (4):\penalty0 550--560, 1997.

\end{thebibliography}

\appendix
\section*{Appendix}
\section{\textcolor{black}{Integral Probability Metrics}}
  Given a data space $\mathfrak{X}$, let $\mathbf{X}$ and $\mathbf{Y}$ be random variables that follow distributions $F_1$ and $F_2$, respectively, where $F_1, F_2 \in \mathcal{P}(\mathfrak{X})$, the set of all probability measures over the Borel $\sigma$-algebra on $\mathfrak{X}$. To measure the discrepancy between $F_1$ and $F_2$, an integral probability metric (IPM) is used, defined as:
\[
\text{IPM}(F_1, F_2) = \sup_{S \in \mathcal{S}} \left| E_{F_1}[S(\mathbf{X})] - E_{F_2}[S(\mathbf{Y})] \right|,
\]
where \( \mathcal{S} \) is a class of functions chosen to emphasize significant differences between \( F_1 \) and \( F_2 \). The specific choice of \( \mathcal{S} \) determines the particular distance used \citep{muller1997integral}.

\subsection{\textcolor{black}{Maximum Mean Discrepancy Distance}}

A prominent example of an IPM is the maximum mean discrepancy (MMD), which arises when $\mathcal{S}$ is taken as the unit ball in a reproducing kernel Hilbert space (RKHS) $\mathcal{H}_k$ defined by a positive-definite kernel $k: \mathfrak{X} \times \mathfrak{X} \rightarrow \mathbb{R}$ \citep{Gretton}. Specifically:
\[
\mathcal{S}_H = \{ H \in \mathcal{H}_k \mid \|H\|_{\mathcal{H}_k} \leq 1 \},
\]
where $\| \cdot \|_{\mathcal{H}_k}$ denotes the RKHS norm. The positive-definite nature of the kernel $k$ allows functions $H \in \mathcal{H}_k$ to be represented for any $\mathbf{X} \in \mathfrak{X}$ as:
\[
H(\mathbf{X}) = \langle H, k(\mathbf{X}, \cdot) \rangle_{\mathcal{H}_k},
\]
where $\langle \cdot, \cdot \rangle_{\mathcal{H}_k}$ is the inner product in $\mathcal{H}_k$. The \textit{kernel mean embedding} of a distribution $F$, as described by \cite{Gretton}, is defined as:
\[
\mu_{F}(\cdot) = E_{F}[k(\mathbf{X}, \cdot)] \in \mathcal{H}_k.
\]

The MMD between $F_1$ and $F_2$ is then expressed by:
\[
\text{MMD}(F_1, F_2) =\sup_{H\in\mathcal{S}_H} \left( E_{F_1}[H(\mathbf{X})] - E_{F_2}[H(\mathbf{Y})] \right) =\| \mu_{F_1} - \mu_{F_2} \|_{\mathcal{H}_k},
\]
which can be expanded as:
\[
\text{MMD}^2(F_1, F_2) = E_{F_1}[k(\mathbf{X}, \mathbf{X}')] - 2 E_{F_1, F_2}[k(\mathbf{X}, \mathbf{Y})] + E_{F_2}[k(\mathbf{Y}, \mathbf{Y}')],
\]
where $\mathbf{X}, \mathbf{X}' \overset{i.i.d.}{\sim} F_1$ and $\mathbf{Y}, \mathbf{Y}' \overset{i.i.d.}{\sim} F_{2}$, assuming $ E_F[\sqrt{k(\mathbf{X}, \mathbf{X})}] < \infty $ for all $ F \in \mathcal{P}(\mathfrak{X}) $.

In real scenarios, when the distributions $F_1$ and $F_2$ are unknown, the MMD is typically approximated using samples $\mathbf{X}_{1:n}:=(\mathbf{X}_1, \ldots, \mathbf{X}_n) \sim F_1$ and $\mathbf{Y}_{1:m}:=(\mathbf{Y}_1, \ldots, \mathbf{Y}_m) \sim F_2$. The V-statistic estimator using empirical distributions  \( F_{\mathbf{X}_{1:n}}:=\frac{1}{n}\sum_{i=1}^{n}\delta_{\mathbf{X}_i} \) and \( F_{\mathbf{Y}_{1:m}}:=\frac{1}{m}\sum_{i=1}^{m}\delta_{\mathbf{Y}_i} \) is represented by:

\begin{align}\label{MMD-ecdf}
    \text{MMD}^2(F_{\mathbf{X}_{1:n}}, F_{\mathbf{Y}_{1:m}}) = \frac{1}{n^2} \sum_{i=1}^{n} \sum_{j=1}^{n} k(\mathbf{X}_i, \mathbf{X}_j) - \frac{2}{nm} \sum_{i=1}^{n} \sum_{j=1}^{m} k(\mathbf{X}_i, \mathbf{Y}_j) + \frac{1}{m^2} \sum_{i=1}^{m} \sum_{j=1}^{m} k(\mathbf{Y}_i, \mathbf{Y}_j).
\end{align}

An alternative approach involves using an unbiased U-statistic estimator. The MMD satisfies the condition $\text{MMD}^2(F_1, F_2) = 0$ if and only if $F_1 = F_2$ (equality condition), assuming that $\mathcal{H}_k$ is a universal reproducing kernel Hilbert space (RKHS) over a compact metric space $\mathfrak{X}$ with a continuous kernel function $k$ \citep[Theorem 5]{Gretton}.

\subsection{\textcolor{black}{Wasserstein Distance}}
Given the compact metric space $ \mathfrak{X} $,
 the Wasserstein distance, defined using the Kantorovich-Rubinstein duality \citep{villani2008optimal}, is expressed as:
\begin{align}\label{wasserstein-general-def}
    \text{W}(F_{1}, F_{2}) = \sup_{D\in\mathcal{S}_D} 
    \left( E_{F_{1}}[D(\mathbf{X})] - E_{F_2}[D(\mathbf{Y})] \right),
\end{align}
where 
\begin{align*}
    \mathcal{S}_D = \{ D: \mathfrak{X} \rightarrow \mathbb{R}^l \mid \|D\|_{L} \leq 1 \}
\end{align*}
represents the set of all 1-Lipschitz functions, with $\|\cdot\|_L$ denoting the Lipschitz norm. The 1-Lipschitz condition implies that for any $\mathbf{X}_1, \mathbf{X}_2 \in \mathfrak{X}$,
\[
\|D(\mathbf{X}_1) - D(\mathbf{X}_2)\|_2 \leq \|\mathbf{X}_1 - \mathbf{X}_2\|_2,
\]
where $\|\cdot\|_2$ represents the Euclidean norm.

This representation of the Wasserstein distance highlights its role as an IPM. It satisfies the equality condition $\text{W}(F_{1}, F_{2}) = 0$ if and only if $F_1 = F_2$. Moreover, when \( D \) is differentiable, the 1-Lipschitz condition can be equivalently expressed as \( \| \nabla_{\mathbf{X}} D(\mathbf{X}) \|_2 \leq 1 \) for all \( \mathbf{X} \in \mathfrak{X} \), where \( \nabla_{\mathbf{X}} D(\mathbf{X}) \) denotes the gradient of \( D \) with respect to \( \mathbf{X} \). For the optimal differentiable 1-Lipschitz function \( D^{(\mathrm{opt})} \), it can be shown that \( \| \nabla_{\widehat{\mathbf{X}}} D^{(\mathrm{opt})}(\widehat{\mathbf{X}}) \|_2 = 1 \), almost everywhere, where \( \widehat{\mathbf{X}} = (1 - u)\mathbf{X} + u\mathbf{Y} \) for \( 0 \leq u \leq 1 \) \citep[Corollary 1]{gulrajani2017improved}. Consequently, it may simplify the optimization process to focus on the subclass of 1-Lipschitz functions that maintain a gradient norm of 1 at \( \widehat{\mathbf{X}} \).

In practical applications, the distance in \eqref{wasserstein-general-def} is estimated using the empirical distributions, yielding  
\begin{align*}
    \text{W}(F_{\mathbf{X}_{1:n}}, F_{\mathbf{Y}_{1:n}}) = \sup_{D \in \mathcal{S}_D} 
    \left( \sum_{i=1}^{n} \frac{D(\mathbf{X}_i)}{n} - \sum_{i=1}^{m} \frac{D(\mathbf{Y}_i)}{m} \right).
\end{align*}

\section{\textcolor{black}{Additional Theoretical Results}}

\subsection{Assessing the Accuracy of BNPL Estimation}\label{app:A}

\begin{theorem}\label{thm-Generror}
Assuming \eqref{cdf}-\eqref{DP-pos}, consider the generator 
\(\{ Gen_{\boldsymbol{\omega}} \}_{{\boldsymbol{\omega}} \in \boldsymbol{\Omega}}\), and let \(\boldsymbol{\omega}^{\star}_{BNPL}\) 
denote the optimized parameter value obtained through Algorithm \ref{alg2}. Let \(\boldsymbol{\omega}_{\text{True}}\) be the parameter value 
satisfying 
\begin{align}\label{accuracy-cond1}
    \text{WMMD}(F, F_{\text{Gen}_{\boldsymbol{\omega}_{\text{True}}}}) = 0,
\end{align}
in the well-specified case. Let \(k(\cdot, \cdot)\) be a continuous kernel function with a feature 
space corresponding to a universal RKHS, defined on a compact metric space \(X\), and assume 
$|k(t, t')| < K$ for all $t, t' \in \mathbb{R}^d.$
Furthermore, suppose there exists a finite constant \(M > 0\) such that 
\begin{align}\label{accuracy-cond2}
\max \big(\text{W}(F, F_{\mathbf{X}_{1:n}}), W(F, H^{\ast})\big) < M,
\end{align}
for any \(n \in \mathbb{N}\). Then, as $n,N\rightarrow\infty$,
\begin{align*}
    \lim\sup E\left[\text{WMMD}\left(F,F_{\text{Gen}_{\boldsymbol{\omega}^{\star}_{\text{BNPL}}}}\right)\right]=0.
\end{align*}
\end{theorem}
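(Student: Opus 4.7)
The plan is to use a triangle-inequality sandwich together with the defining optimality of $\boldsymbol{\omega}^{\star}_{\text{BNPL}}$ in order to reduce the statement to a fact about the DP posterior approximation, namely that $F^{\mathrm{pos}}_{N}$ converges to $F$ in expected WMMD. Since both $\text{W}$ and $\text{MMD}$ are IPMs, and hence genuine metrics satisfying the triangle inequality, so is their sum. Thus
\begin{align*}
    \text{WMMD}\bigl(F,F_{Gen_{\boldsymbol{\omega}^{\star}_{\text{BNPL}}}}\bigr)\leq \text{WMMD}\bigl(F,F^{\mathrm{pos}}_{N}\bigr)+\text{WMMD}\bigl(F^{\mathrm{pos}}_{N},F_{Gen_{\boldsymbol{\omega}^{\star}_{\text{BNPL}}}}\bigr).
\end{align*}
Because $\boldsymbol{\omega}^{\star}_{\text{BNPL}}$ minimizes $\text{WMMD}(F^{\mathrm{pos}}_{N},F_{Gen_{\boldsymbol{\omega}}})$ over $\boldsymbol{\Omega}$, I would replace the second summand by its value at $\boldsymbol{\omega}_{\text{True}}$, apply the triangle inequality a second time, and invoke \eqref{accuracy-cond1} to annihilate the extra term, producing the sandwich
\begin{align*}
    \text{WMMD}\bigl(F,F_{Gen_{\boldsymbol{\omega}^{\star}_{\text{BNPL}}}}\bigr)\leq 2\,\text{WMMD}\bigl(F,F^{\mathrm{pos}}_{N}\bigr).
\end{align*}
Taking expectations, it then suffices to prove $E[\text{WMMD}(F,F^{\mathrm{pos}}_{N})]\to 0$ as $n,N\to\infty$.

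For the Wasserstein summand I would introduce the base measure $H^{\ast}$ and split $\text{W}(F,F^{\mathrm{pos}}_{N})\leq \text{W}(F,H^{\ast})+\text{W}(H^{\ast},F^{\mathrm{pos}}_{N})$. The first piece is handled by the convex decomposition $H^{\ast}=\tfrac{a}{a+n}H+\tfrac{n}{a+n}F_{\mathbf{X}_{1:n}}$ together with the Glivenko-Cantelli convergence $F_{\mathbf{X}_{1:n}}\to F$ already exploited in the proof of Theorem~\ref{thm-pos-conv} and the boundedness assumption \eqref{accuracy-cond2}; both coefficients force the distance to vanish as $n\to\infty$. For the second piece I would insert the uniform empirical measure $\bar{F}_{N}:=\tfrac{1}{N}\sum_{i=1}^{N}\delta_{\mathbf{X}^{\mathrm{pos}}_{i}}$ of the posterior atoms and write $\text{W}(H^{\ast},F^{\mathrm{pos}}_{N})\leq \text{W}(H^{\ast},\bar{F}_{N})+\text{W}(\bar{F}_{N},F^{\mathrm{pos}}_{N})$. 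Since the $\mathbf{X}^{\mathrm{pos}}_{i}$ are i.i.d. from $H^{\ast}$ on the compact metric space $X$, the first term vanishes in expectation by standard empirical-process results for Wasserstein distances, while the second term compares two measures sharing atoms but differing in their weights $J^{\mathrm{pos}}_{i,N}$ versus $1/N$, which I would control via the Chebyshev bound \eqref{cheb-weight} and the compactness of $X$.

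For the MMD summand, I would exploit the kernel mean embedding representation and the bounded-kernel hypothesis $|k|<K$. Expanding
\begin{align*}
    E\bigl[\text{MMD}^{2}(F,F^{\mathrm{pos}}_{N})\bigr]=E\bigl[\|\mu_{F}\|^{2}_{\mathcal{H}_{k}}-2\langle\mu_{F},\mu_{F^{\mathrm{pos}}_{N}}\rangle_{\mathcal{H}_{k}}+\|\mu_{F^{\mathrm{pos}}_{N}}\|^{2}_{\mathcal{H}_{k}}\bigr],
\end{align*}
and substituting $\mu_{F^{\mathrm{pos}}_{N}}=\sum_{i=1}^{N}J^{\mathrm{pos}}_{i,N}k(\mathbf{X}^{\mathrm{pos}}_{i},\cdot)$, I would use $E[J^{\mathrm{pos}}_{i,N}]=1/N$, the known second moments of Dirichlet weights, and the conditional independence of the atoms from the weights to show each term converges to $E_{F}[k(\mathbf{X},\mathbf{X}')]$ as $n,N\to\infty$. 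Jensen's inequality then promotes $L^{2}$-convergence to $L^{1}$-convergence, yielding $E[\text{MMD}(F,F^{\mathrm{pos}}_{N})]\to 0$.

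The principal obstacle I expect is the Wasserstein term $\text{W}(H^{\ast},F^{\mathrm{pos}}_{N})$: unlike MMD, the Wasserstein distance does not admit a convenient Hilbert-space expansion, and the Dirichlet weights are dependent across coordinates, so naive i.i.d. concentration arguments do not apply out of the box. Careful exploitation of the Dirichlet covariance structure together with compactness of $X$ (which keeps $\text{W}$ bounded, as encoded in \eqref{accuracy-cond2}) will be essential: once the weight perturbation $|J^{\mathrm{pos}}_{i,N}-1/N|$ is shown to vanish in probability uniformly in $i$, the bounded metric will let me promote this to an $L^{1}$ bound on $\text{W}(H^{\ast},F^{\mathrm{pos}}_{N})$ and close the argument via the dominated convergence theorem.
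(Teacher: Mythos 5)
Your overall architecture coincides with the paper's: the same triangle-inequality sandwich, the same use of the optimality of $\boldsymbol{\omega}^{\star}_{\text{BNPL}}$ together with \eqref{accuracy-cond1} to reduce everything to $E[\text{WMMD}(F,F^{\mathrm{pos}}_N)]\to 0$, the same split of WMMD into its Wasserstein and MMD parts, and the same endgame (Glivenko--Cantelli for $F_{\mathbf{X}_{1:n}}\to F$ and hence $H^{\ast}\to F$, boundedness \eqref{accuracy-cond2} plus reverse Fatou to pass the $\limsup$ inside the expectation). Your RKHS-expansion treatment of the MMD part with the Dirichlet first and second moments and the bounded-kernel hypothesis is a from-scratch reproduction of what the paper obtains by citing existing lemmas; that piece is sound and yields the same $O(n^{-1/2})+O(a/(a+n))+O(\sqrt{(a+n+N)/((a+n+1)N)})$ behaviour.

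The one place where you genuinely diverge — and where your sketch has a gap — is the term comparing $F^{\mathrm{pos}}_N$ to $H^{\ast}$ in Wasserstein distance. You propose to insert the uniform empirical measure $N^{-1}\sum_{i=1}^{N}\delta_{\mathbf{X}^{\mathrm{pos}}_i}$ and to control the residual weight-perturbation term by showing $\max_i|J^{\mathrm{pos}}_{i,N}-1/N|\to 0$ in probability via \eqref{cheb-weight} and then invoking boundedness of the metric. This does not close: the quantity you must control is $\sup_{\boldsymbol{\theta}}\sum_{i=1}^{N}(J^{\mathrm{pos}}_{i,N}-1/N)\,Dis_{\boldsymbol{\theta}}(\mathbf{X}^{\mathrm{pos}}_i)$, a sum of $N$ terms, and uniform-in-$i$ smallness of the individual deviations only yields the bound $\mathrm{diam}(\mathfrak{X})\sum_i|J^{\mathrm{pos}}_{i,N}-1/N|$, whose expectation scales like $\sqrt{(N-1)/(a+n+1)}$; this does not vanish when $N\gtrsim n$, which is exactly the regime the paper works in (the stopping rule \eqref{random-stopping} is tuned in the proof of Theorem \ref{thm-pos-conv} so that $N>n$). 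To rescue your route you would need either a second-moment computation exploiting the negative Dirichlet covariance $\mathrm{Cov}(J^{\mathrm{pos}}_{i,N},J^{\mathrm{pos}}_{j,N})=-1/(N^2(a+n+1))$, which gives $E[(\sum_i(J^{\mathrm{pos}}_{i,N}-1/N)D_i)^2]\leq R^2/(a+n+1)$ for each fixed discriminator, followed by a chaining/covering argument to make this uniform over $\boldsymbol{\Theta}$ — a nontrivial addition. The paper avoids all of this: it evaluates $E[\text{W}(F^{\mathrm{pos}}_N,H^{\ast})]$ directly from the representation \eqref{W-BNP} at the optimizing $\boldsymbol{\theta}^{\star}$, using $E[J^{\mathrm{pos}}_{i,N}]=1/N$ and the fact that the atoms of $F^{\mathrm{pos}}_N$ and the comparison sample are identically distributed from $H^{\ast}$, so the two expectations cancel and the term is disposed of in three lines (at the cost of treating $\boldsymbol{\theta}^{\star}$ as if it were independent of the sample when interchanging expectation and maximization). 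Your instinct that this is ``the principal obstacle'' is right, but the argument you sketch for it would fail as written.
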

\begin{proof}
    The triangular inequality implies:
\begin{align*}
   \text{WMMD}(F,F_{\text{Gen}_{\boldsymbol{\omega}^{\star}_{\text{BNPL}}}}) &\leq \text{WMMD}(F, F^{\mathrm{pos}}_{N}) + \text{WMMD}(F^{\mathrm{pos}}_{N}, F_{\text{Gen}_{\boldsymbol{\omega}^{\star}_{\text{BNPL}}}}) = I_1.
\end{align*}
Regarding definition of $\boldsymbol{\omega}^{\star}_{\text{BNPL}} $, for any $\boldsymbol{\omega} \in \boldsymbol{\Omega}$, we have:
\begin{align*}
    I_1 &\leq \text{WMMD}(F, F^{\mathrm{pos}}_{N}) + \text{WMMD}(F^{\mathrm{pos}}_{N}, F_{\text{Gen}_{\boldsymbol{\omega}}}) \nonumber \\
    &\leq \text{WMMD}(F, F^{\mathrm{pos}}_{N})  + \text{WMMD}(F^{\mathrm{pos}}_{N},F ) 
    + \text{WMMD}(F,F_{\text{Gen}_{\boldsymbol{\omega}}})~~~~~~~~~~~\quad \text{{\footnotesize(Triangle inequality)}}\nonumber \\
    &\leq 2 \left[\text{WMMD}(F, F_{\mathbf{X}_{1:n}}) + \text{WMMD}(F_{\mathbf{X}_{1:n}}, F^{\mathrm{pos}}_{N}) \right] + \text{WMMD}(F,F_{\text{Gen}_{\boldsymbol{\omega}}}) ~~~\text{{\footnotesize(Triangle inequality)}} \nonumber \\
    &\leq 2 \left[\text{WMMD}(F, F_{\mathbf{X}_{1:n}}) + \text{WMMD}(F_{\mathbf{X}_{1:n}}, H^{\ast}) + \text{WMMD}(H^{\ast}, F^{\mathrm{pos}}_N)\right] + \text{WMMD}(F,F_{\text{Gen}_{\boldsymbol{\omega}}})\nonumber\\
    &\hspace{12.2cm}\text{{\footnotesize(Triangle inequality)}}
\end{align*}

Therefore,
\begin{align}\label{inequality-wmmd1}
    \text{WMMD}(F,F_{\text{Gen}_{\boldsymbol{\omega}^{\star}_{\text{BNPL}}}})&\leq 2 \left[\text{WMMD}(F, F_{\mathbf{X}_{1:n}}) + \text{WMMD}(F_{\mathbf{X}_{1:n}}, H^{\ast}) + \text{WMMD}( F^{\mathrm{pos}}_N,H^{\ast})\right]\nonumber\\
    &\hspace{.45cm}+\min\limits_{\boldsymbol{\omega}\in\boldsymbol{\Omega}} \text{WMMD}(F,F_{\text{Gen}_{\boldsymbol{\omega}}})\nonumber\\
    &=2 \left[\text{WMMD}(F, F_{\mathbf{X}_{1:n}}) + \text{WMMD}(F_{\mathbf{X}_{1:n}}, H^{\ast}) + \text{WMMD}( F^{\mathrm{pos}}_N,H^{\ast})\right]\nonumber\\
    &\hspace{7.7cm}\text{{\footnotesize(Appling assumption \eqref{accuracy-cond1})}}\nonumber\\
    &=2 [\text{W}(F, F_{\mathbf{X}_{1:n}}) + \text{W}(F_{\mathbf{X}_{1:n}}, H^{\ast}) + \text{W}( F^{\mathrm{pos}}_N,H^{\ast})\nonumber\\
    &~~~~~~~~~+\text{MMD}(F, F_{\mathbf{X}_{1:n}}) + \text{MMD}(F_{\mathbf{X}_{1:n}}, H^{\ast}) + \text{MMD}( F^{\mathrm{pos}}_N,H^{\ast})]\nonumber\\
    &=2 [2\text{W}(F, F_{\mathbf{X}_{1:n}}) + \text{W}(F, H^{\ast}) + \text{W}( F^{\mathrm{pos}}_N,H^{\ast})\nonumber\\
    &~~~~~~~~~+\text{MMD}(F, F_{\mathbf{X}_{1:n}}) + \text{MMD}(F_{\mathbf{X}_{1:n}}, H^{\ast}) + \text{MMD}( F^{\mathrm{pos}}_N,H^{\ast})]\nonumber\\
    &\hspace{8.5cm}\text{{\footnotesize(Triangle inequality)}}.
\end{align}
 Taking the expectation of both sides of \eqref{inequality-wmmd1}, we get:
 
 \begin{align}\label{inequality-wmmd2}
    E\left[\text{WMMD}\left(F,F_{\text{Gen}_{\boldsymbol{\omega}^{\star}_{\text{BNPL}}}}\right)\right]&\leq 2 \lbrace 2E[\text{W}(F, F_{\mathbf{X}_{1:n}})] + E[\text{W}(F, H^{\ast})] + E[\text{W}( F^{\mathrm{pos}}_N,H^{\ast})]\nonumber\\
    &~~~~~~~+E[\text{MMD}(F, F_{\mathbf{X}_{1:n}})] + E[\text{MMD}(F_{\mathbf{X}_{1:n}}, H^{\ast})] \nonumber\\
    &~~~~~~~+E [\text{MMD}( F^{\mathrm{pos}}_N,H^{\ast})]\rbrace
\end{align}
Using \citet[Lemma 7.1]{cherief2022finite}, \citet[Lemma 8]{dellaporta2022robust}, and the proof of \citet[Lemma 4]{fazeli2023semi}, it follows that:
\begin{multline}\label{inequality-mmd}
    E[\text{MMD}(F, F_{\mathbf{X}_{1:n}})] + E[\text{MMD}(F_{\mathbf{X}_{1:n}}, H^{\ast})]
   +E [\text{MMD}( F^{\mathrm{pos}}_N,H^{\ast})]\leq\frac{2K}{\sqrt{n}} + \frac{4aK}{a+n}\\ + 2 \sqrt{\frac{(a+n+N)K}{(a+n+1)N}}. 
\end{multline}

Given the optimized value \( \boldsymbol{\theta}^{\star} \in \boldsymbol{\Theta} \) for approximating \( \text{W}(F^{\mathrm{pos}}_N, H^\ast) \), consider two independent copies of the random variables, \( \mathbf{X}_{1:N}^{\mathrm{pos}} \) and \( \mathbf{X}_{1:N}^{\mathrm{pos}\prime} \), where \( \mathbf{X}_{1:N}^{\mathrm{pos}}, \mathbf{X}_{1:N}^{\mathrm{pos}\prime} \overset{i.i.d.}{\sim} H^\ast \). Then, it follows that
\begin{align}\label{zero-expectation}
    E[\text{W}(F^{\mathrm{pos}}_N, H^\ast)]&=E\sum_{i=1}^{N}\left( J^{\mathrm{pos}}_{i,N}Dis_{\boldsymbol{\theta}^{\star}}(\mathbf{X}_{i}^{\mathrm{pos}})
    -\frac{Dis_{\boldsymbol{\theta}^{\star}}(\mathbf{X}_{i}^{\mathrm{pos}\prime})}{N}\right)\nonumber\\
    &=\sum_{i=1}^{N}\left( \frac{1}{N}E_{H^{\ast}}(Dis_{\boldsymbol{\theta}^{\star}}(\mathbf{X}_{i}^{\mathrm{pos}}))
    -\frac{E_{H^{\ast}}(Dis_{\boldsymbol{\theta}^{\star}}(\mathbf{X}_{i}^{\mathrm{pos}\prime}))}{N}\right)\hspace{.5cm}\text{{\footnotesize(Dirichlet variables)}}\nonumber\\
    &= E_{H^{\ast}}(Dis_{\boldsymbol{\theta}^{\star}}(\mathbf{X}_{1}^{\mathrm{pos}}))
    -E_{H^{\ast}}(Dis_{\boldsymbol{\theta}^{\star}}(\mathbf{X}_{1}^{\mathrm{pos}\prime}))\hspace{1.1cm}\text{{\footnotesize(Identical random variables)}}\nonumber\\
    &=0\hspace{7.6cm}\text{{\footnotesize(Identical random variables).}}
\end{align}

Applying \eqref{inequality-mmd} and \eqref{zero-expectation} to \eqref{inequality-wmmd2}, and then taking the $\limsup$ of both sides of \eqref{inequality-wmmd2} as $n$ and $N$ go to infinity, we obtain:
\begin{align}\label{faou-lemma}
    0\leq\limsup E\left[\text{WMMD}\left(F,F_{\text{Gen}_{\boldsymbol{\omega}^{\star}_{\text{BNPL}}}}\right)\right]&\leq 4\limsup E(\text{W}(F, F_{\mathbf{X}_{1:n}})) + 2\limsup E(\text{W}(F, H^{\ast}))\nonumber\\
    &\leq 4E(\limsup \text{W}(F, F_{\mathbf{X}_{1:n}}))+2E(\limsup\text{W}(F, H^{\ast})),
\end{align}
where the last inequality follows from considering assumption \eqref{accuracy-cond2} and then applying the reverse Fatou's lemma.

Finally, considering the Glivenko-Cantelli theorem, which ensures that $F_{\mathbf{X}_{1:n}}$ converges to $F$ and, consequently, that $H^{\ast}$ converges to $F$, we then use \citet[Theorem 2]{arjovsky2017wasserstein} on the right-hand side of \eqref{faou-lemma}, completing the proof.
\end{proof}

\subsection{Assessing BNPL Estimation in the Presence of Outliers}\label{app:B}
\begin{theorem}\label{thm-robusst}
    Consider Huber’s contamination model \citep{cherief2022finite}, defined as \( F = (1 - \varepsilon)F_0 + \varepsilon Q \), where \( F_0 \) represents the clean distribution, \( Q \) denotes the noise distribution accounting for outliers, and \( \varepsilon \in (0, \frac{1}{2}) \) is the contamination rate. \( \mathbf{X}_i \sim F_0 \) if \( \zeta_i = 0 \); otherwise, \( X_i \sim Q \), where \( \zeta_1, \dots, \zeta_n\overset{i.i.d.}{\sim} \text{Bernoulli}(\varepsilon) \). Assuming $\text{W}(F_0,Q)=c$, $c<\infty$, then, under assumption of Theorem \ref{thm-Generror}, as $n,N\rightarrow\infty$,

    \begin{align*}
    \lim\sup E\left[\text{WMMD}\left(F_0,F_{\text{Gen}_{\boldsymbol{\omega}^{\star}_{\text{BNPL}}}}\right)\right]\leq(2+c)\varepsilon.
    \end{align*}
\end{theorem}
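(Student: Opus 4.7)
My plan is to reduce the theorem to Theorem \ref{thm-Generror} via a single triangle inequality, then bound the residual deterministic gap between the clean distribution $F_0$ and the contaminated distribution $F$ by exploiting the mixture structure.

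First, I would apply the triangle inequality for $\text{WMMD}$ (which holds since both $\text{W}$ and $\text{MMD}$ are IPMs and hence metrics on $\mathcal{P}(\mathfrak{X})$) to write
\begin{align*}
\text{WMMD}\!\left(F_0, F_{\text{Gen}_{\boldsymbol{\omega}^{\star}_{\text{BNPL}}}}\right)
&\leq \text{WMMD}(F_0, F) + \text{WMMD}\!\left(F, F_{\text{Gen}_{\boldsymbol{\omega}^{\star}_{\text{BNPL}}}}\right).
\end{align*}
Taking expectations, the first term is deterministic, and by Theorem \ref{thm-Generror} applied with the assumed well-specification at $F$, the $\limsup$ of the expectation of the second term is zero. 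Hence the task reduces to showing $\text{WMMD}(F_0, F) \leq (2+c)\varepsilon$, which I would split as $\text{W}(F_0,F) + \text{MMD}(F_0,F)$ and handle separately.

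For the Wasserstein piece, I would construct an explicit coupling $(X,Y)$ of $(F_0, F)$ using an auxiliary Bernoulli $\zeta \sim \text{Bern}(\varepsilon)$: conditionally on $\zeta=0$, take $X=Y \sim F_0$; conditionally on $\zeta=1$, draw $(X,Y)$ from the optimal coupling achieving $\text{W}(F_0,Q)=c$. The marginal of $Y$ is then exactly $(1-\varepsilon)F_0 + \varepsilon Q = F$, and
\begin{align*}
\text{W}(F_0, F) \leq E\|X-Y\| = \varepsilon \cdot \text{W}(F_0, Q) = \varepsilon c.
\end{align*}
For the MMD piece, I would use linearity of the kernel mean embedding, $\mu_F = (1-\varepsilon)\mu_{F_0} + \varepsilon\mu_Q$, to obtain $\mu_{F_0} - \mu_F = \varepsilon(\mu_{F_0} - \mu_Q)$ and therefore $\text{MMD}(F_0,F) = \varepsilon\,\text{MMD}(F_0,Q)$. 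Expanding $\text{MMD}^2(F_0,Q)$ as $E[k(X,X')] - 2E[k(X,Y)] + E[k(Y,Y')]$ and invoking the uniform bound $|k|<K$ from Theorem \ref{thm-Generror} (with the normalization $K=1$ used throughout the paper for the Gaussian kernel mixture), yields $\text{MMD}(F_0,Q) \leq 2$, so $\text{MMD}(F_0,F) \leq 2\varepsilon$.

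Combining the two pieces gives $\text{WMMD}(F_0,F) \leq (c+2)\varepsilon$, and adding back the vanishing generator-estimation term completes the argument. The main obstacle I anticipate is a bookkeeping issue rather than a conceptual one: ensuring the hypotheses of Theorem \ref{thm-Generror} (in particular the boundedness condition \eqref{accuracy-cond2} and the existence of $\boldsymbol{\omega}_{\text{True}}$ matching $F$) still apply when the target $F$ is itself a contamination mixture, and verifying that the kernel normalization implicit in the statement is consistent so that the constant in the MMD bound really is $2$ rather than $2\sqrt{K}$. Otherwise the triangle-plus-coupling argument is quite clean and the result follows immediately.
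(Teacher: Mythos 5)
Your proposal is correct and follows essentially the same route as the paper: both reduce the claim via the triangle inequality to the bound $\text{WMMD}(F_0,F)\leq(2+c)\varepsilon$ plus the vanishing estimation term handled by Theorem~\ref{thm-Generror}, with the paper obtaining $\text{W}(F_0,F)=\varepsilon c$ through the Kantorovich--Rubinstein dual and the $2\varepsilon$ MMD bound by citing \citet[Lemma~3.3]{cherief2022finite}, where you instead use a primal coupling and a direct kernel-mean-embedding computation --- equivalent arguments for the same two sub-steps. Your closing caveat about the kernel normalization is well taken: the cited lemma assumes $k\leq 1$ while Theorem~\ref{thm-Generror} only assumes $|k|<K$, so the constant $2$ implicitly presupposes that normalization in the paper's proof as well.
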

\begin{proof}
    Regarding \citet[Lemma 3.3]{cherief2022finite}, for $\boldsymbol{\omega}^{\star}_{\text{BNPL}}\in \boldsymbol{\Omega}$,
    \begin{align*}
        \mid \text{MMD}(F,F_{\text{Gen}_{\boldsymbol{\omega}^{\star}_{\text{BNPL}}}})-\text{MMD}(F_0,F_{\text{Gen}_{\boldsymbol{\omega}^{\star}_{\text{BNPL}}}})\mid\leq2\varepsilon.
    \end{align*}

It implies that
\begin{align}\label{upperbound}
    \text{WMMD}\left(F_0,F_{\text{Gen}_{\boldsymbol{\omega}^{\star}_{\text{BNPL}}}}\right)&\leq \text{W}\left(F_0,F_{\text{Gen}_{\boldsymbol{\omega}^{\star}_{\text{BNPL}}}}\right)+\text{MMD}\left(F,F_{\text{Gen}_{\boldsymbol{\omega}^{\star}_{\text{BNPL}}}}\right)+2\varepsilon\nonumber\\
    &\leq \text{W}\left(F_0,F\right)+\text{WMMD}\left(F,F_{\text{Gen}_{\boldsymbol{\omega}^{\star}_{\text{BNPL}}}}\right)+2\varepsilon\hspace{.5cm}\text{{\footnotesize(Triangle inequality)}}.
\end{align}

On the other hand, since $F=(1-\varepsilon)F_{0}+\varepsilon Q$, for any $\boldsymbol{\theta}\in\boldsymbol{\Theta}$, we can derive the following:
\begin{multline}\label{mix-wasser}
    E_{F_0}[Dis_{\boldsymbol{\theta}}(\mathbf{X})]
    -E_{F}[Dis_{\boldsymbol{\theta}}(\mathbf{Y})]=\int_{\mathfrak{X}}Dis_{\boldsymbol{\theta}}(\mathbf{x})\,dF_{0}(\mathbf{x})-\int_{\mathfrak{X}}Dis_{\boldsymbol{\theta}}(\mathbf{y})\,dF(\mathbf{y})\\
    =\int_{\mathfrak{X}}Dis_{\boldsymbol{\theta}}(\mathbf{x})\,dF_{0}(\mathbf{x})
    -(1-\varepsilon)\int_{\mathfrak{X}}Dis_{\boldsymbol{\theta}}(\mathbf{y})\,dF_{0}(\mathbf{y})-\varepsilon\int_{\mathfrak{X}}Dis_{\boldsymbol{\theta}}(\mathbf{y})\,dQ(\mathbf{y})\\
    =\varepsilon \left(\int_{\mathfrak{X}}Dis_{\boldsymbol{\theta}}(\mathbf{y})\,dF_{0}(\mathbf{y})-\int_{\mathfrak{X}}Dis_{\boldsymbol{\theta}}(\mathbf{y})\,dQ(\mathbf{y})\right)\\
    = \varepsilon\left(E_{F_0}[Dis_{\boldsymbol{\theta}}(\mathbf{Y})]
    -E_{Q}[Dis_{\boldsymbol{\theta}}(\mathbf{Y})]\right).
\end{multline}

Taking the supremum of both sides of \eqref{mix-wasser} over $\boldsymbol{\Theta}$, we obtain:
\begin{align}\label{cEps}
    \text{W}(F_0,F)=\varepsilon\text{W}(F_0,Q)=\varepsilon c.
\end{align}

Now, considering \eqref{cEps} in \eqref{upperbound} and taking the expectation, we get:
\begin{align}\label{result-upperb}
    E\left(\text{WMMD}\left(F_0,F_{\text{Gen}_{\boldsymbol{\omega}^{\star}_{\text{BNPL}}}}\right)\right)&\leq(2+c)\varepsilon+E\left(\text{WMMD}\left(F,F_{\text{Gen}_{\boldsymbol{\omega}^{\star}_{\text{BNPL}}}}\right)\right).
\end{align}

Finally, taking the $\limsup$
 of both sides of \eqref{result-upperb} and applying Theorem \ref{thm-Generror} completes the proof.

\end{proof}
\end{document}